\documentclass[accepted]{uai2026}

\usepackage[american]{babel}

\usepackage{natbib} 
\usepackage{mathtools} 
\usepackage{booktabs} 
\usepackage{tikz} 
\usepackage{amsthm}
\usepackage{amssymb}
\usepackage{bbding}
\usepackage{esvect}
\usepackage{thm-restate}
\usepackage{comment}
\usepackage{bbm}

\usetikzlibrary{arrows,patterns,decorations.pathmorphing,positioning,shapes.geometric}
\tikzstyle{var}=[circle,draw,thick,minimum size=18pt,inner sep=0pt]
\tikzstyle{varc}=[rectangle,draw,thick,minimum size=18pt,inner sep=2pt]
\tikzstyle{arr}=[->,>=stealth',draw,fill,thick]
\tikzstyle{otto}=[draw,fill,thick,double]
\tikzstyle{noarr}=[draw,fill,thick]

\newcommand{\ot}{\mathrel{\leftarrow}}

\newcommand{\ttt}{\mathrel{{\relbar\mkern-9mu\relbar}}}

\newcommand\otto{\mathrel{{\Relbar\mkern-9mu\Relbar}}}

\newcommand\acy{\mathrm{acy}}
\DeclareMathOperator*{\Indep}{\perp\mkern-11mu\perp}
\DeclareMathOperator*{\Perp}{\perp}
\DeclareMathOperator*{\nPerp}{\not\perp}
\newcommand\given{\,|\,}
\newcommand\mk{\,\Vert\,}
\newcommand\Prb{\mathbb{P}}
\newcommand\pa{\mathrm{pa}}
\newcommand\anc{\mathrm{anc}}
\newcommand\ant{\mathrm{ant}}
\newcommand\doit{\mathrm{do}}
\newcommand\fdet{\mathrm{fdet}}
\newcommand\sm{\setminus}

\newcommand{\ind}{\mathbbm{1}}

\newtheorem{theorem}{Theorem}
\newtheorem{proposition}[theorem]{Proposition}
\newtheorem{lemma}[theorem]{Lemma}
\newtheorem{corollary}[theorem]{Corollary}
\newtheorem{definition}[theorem]{Definition}
\newtheorem{assumption}[theorem]{Assumption}
\newtheorem{example}[theorem]{Example}

\title{Causal Reasoning with Bipartite Graphical Causal Models}

\author[1]{\href{mailto:<j.m.mooij@uva.nl>}{Joris M.~Mooij}}
\affil[1]{%
    Korteweg-De Vries Institute for Mathematics\\
    University of Amsterdam\\
    Amsterdam, the Netherlands
}

\begin{document}
\maketitle

\begin{abstract}%
Causal Bayesian networks (CBNs) and structural causal models (SCMs) are the
dominant frameworks for graphical causal reasoning, but they cannot adequately represent
all real-world causal systems. In particular, systems at equilibrium---where
feedback mechanisms create cyclic causal dependencies---can exhibit causal
semantics that are fundamentally incompatible with these frameworks: different
interventions that enforce the same variable value may have different effects,
rendering the standard ``perfect intervention'' $\doit(X\!=\!x)$ ambiguous.
We propose \emph{bipartite graphical causal models} (BGCMs), in which the
structure of a system of equations is encoded by a bipartite graph with
variable and equation nodes. In this framework, a hard intervention
$\doit(f_j : X_v\!=\!\xi_v)$ specifies which equation is replaced, which
variable is targeted, and at what value---resolving the ambiguity of the
standard notion. We demonstrate, through a detailed case study of a physical
system, that this representation naturally corresponds to distinct
real-world interventions. We formulate a Markov property in terms of a new
graphical separation criterion ($B$-separation) that exploits the functional
determinism inherent in the equations, and we extend it to settings with
non-random inputs. We show how this gives rise to
a do-calculus for reasoning about domain invariances. BGCMs strictly generalize
CBNs and SCMs while retaining the ability to perform graphical causal reasoning.
\end{abstract}

\section{Introduction}\label{sec:intro}

In many scientific disciplines---physics, engineering, economics, biology---complex systems are naturally described by systems of equations relating endogenous and exogenous variables. Each equation represents an independent mechanism or physical law; the exogenous variables represent external inputs or noise. Causal Bayesian networks (CBNs) \citep{Pearl2009} and structural causal models (SCMs) \citep{Pearl2009,Bongers++_AOS_21} can represent many such systems, but not all. In particular, systems at equilibrium---where feedback mechanisms create cyclic causal dependencies---can exhibit causal semantics that are fundamentally incompatible with these frameworks.

A canonical example, due to \citet{IwasakiSimon1994}, is a bathtub at equilibrium: the equilibrium relations between inflow, outflow, pressure, and depth form a system of equations whose causal interpretation depends on \emph{which equation} is changed by an intervention. Different interventions that set the same variable to the same value can have different effects on other variables, rendering the standard notion of a ``perfect intervention'' $\doit(X = x)$ ambiguous \citep{BlomVanDiepenMooij_JMLR_21}. Neither CBNs nor SCMs can express this distinction, since they associate each variable with a unique structural equation or Markov kernel.

The key observation underlying our approach is that the structure of a system of equations is naturally encoded by a \emph{bipartite graph}, with two types of nodes---variable nodes and equation nodes---connected by an edge whenever a variable appears in an equation. By retaining both variable and equation nodes as first-class citizens, the bipartite graph preserves information that is lost when projecting onto a directed graph over variables alone. 
Building on Simon's causal ordering algorithm \citep{Simon1953}, which derives a partial causal ordering of variables by analyzing the bipartite graph, we develop a full-fledged causal modeling framework. This approach is closely related to how engineers already reason about causality, for example in the equation-based modeling language Modelica, where systems are specified as sets of ``acausal'' equations and causality is derived automatically through symbolic analysis \citep{BunusFritzson2002}.

In particular, the bipartite representation makes it possible to represent a hard intervention as $\doit(f_j : X_v = \xi_v)$---specifying which equation $f_j$ is replaced, which variable $X_v$ is targeted, and at what value $\xi_v$. While this intervention notion was already proposed by \citet{BlomVanDiepenMooij_JMLR_21}, it appeared ad hoc, and it remained unclear to what extent it provides a proper and natural mathematical abstraction of real-world interventions.


The contributions of this paper are:
\begin{enumerate}
    \item We formally define \emph{bipartite graphical causal models} (BGCMs) and show how they strictly extend CBNs, acyclic SCMs, simple SCMs, and general SCMs.
    \item We demonstrate, through a complete analysis of all hard interventions on the bathtub system, that the BGCM intervention notion $\doit(f_j : X_v = \xi_v)$ provides a \emph{natural and physically meaningful} representation of real-world interventions---each corresponding to a distinct physical procedure with distinct causal effects.
    \item We formulate a \emph{Markov property} for BGCMs using a graphical separation criterion ($B$-separation) that generalizes $d$-separation to partially oriented bipartite graphs. By encoding both the causal structure and the conditional independence structure in a single partially oriented bipartite graph---rather than in separate graphs as in \citet{BlomVanDiepenMooij_JMLR_21}---we make the connection between causal and Markov semantics transparent. Because $B$-separation exploits the functional determinism induced by the equations, the resulting Markov property is strictly stronger than the one obtainable from the Markov ordering graph of \citet{BlomVanDiepenMooij_JMLR_21} via $d$-separation.
    \item We establish an \emph{extended} Markov property for the case where some exogenous variables are treated as non-random inputs, phrased in terms of transitional conditional independence \citep{Forre2021}.
    \item We develop a \emph{do-calculus} for BGCMs by exploiting this connection: the Markov property yields domain invariances---relationships between observational and interventional distributions---that go beyond Pearl's three rules \citep{Pearl2009}.
\end{enumerate}

\section{Background}\label{sec:background}

\subsection{Modeling Cyclic Causal Relations}\label{sec:cycles}

Feedback mechanisms in dynamical systems may induce cyclic causal relationships at equilibrium. Fast dynamical interactions can lead to effectively ``instantaneous'' causal cycles. Examples arise across many disciplines: coupled oscillators in physics, supply--demand--price feedback loops in economics, gene regulatory networks in biology, and climate feedback mechanisms. In many such applications, the ability to model causal cycles is essential, and acyclic models are insufficient.


We briefly review some existing causal modeling frameworks and their relationships. A \emph{causal Bayesian network} (CBN) consists of a directed acyclic graph (DAG) together with a collection of Markov kernels, one for each variable given its parents in the DAG \citep{Pearl2009}. An \emph{acyclic structural causal model} (acyclic SCM) consists of a set of structural equations of the form $X_i := f_i(\pa(X_i), U_i)$, where $\pa(X_i)$ denotes the parents of $X_i$ and $U_i$ is an exogenous noise variable, together with an acyclic causal graph \citep{Pearl2009}. An \emph{SCM} extends this to allow cyclic causal graphs, with each equation having a unique ``dependent variable'' \citep{Bongers++_AOS_21}. While general SCMs can be complicated to work with, the subclass of \emph{simple SCMs} allows for (sufficiently weak) cycles and retains most of the convenient mathematical properties of acyclic SCMs \citep{Bongers++_AOS_21}.

These frameworks form a hierarchy of increasing generality:
\begin{equation*}\begin{split}
  \text{CBNs} & \subset \text{acyclic SCMs} \subset \text{simple SCMs} \\
  & \subset \text{SCMs} \subset \text{BGCMs} \subset \text{CCMs},
\end{split}\end{equation*}
where BGCMs are bipartite graphical causal models (introduced in this paper) and CCMs are causal constraint models \citep{BlomBongersMooij_UAI_19}. 
BGCMs occupy a position in this hierarchy that balances model flexibility with the ability to perform causal reasoning.


\citet{Simon1953} introduced the \emph{causal ordering algorithm}, which derives a causal interpretation of a system of equations from its structural properties. The key idea is that given a system of equations with designated exogenous variables, one can determine a partial ordering on the endogenous variables by analyzing which subsets of equations can be solved for which subsets of variables, and in what order.
The original version of the algorithm as proposed by \citet{Simon1953} required solving NP-hard subproblems. 
Later, \citet{Nayak1995} proposed a computationally efficient version based on perfect matchings.

\citet{BlomVanDiepenMooij_JMLR_21} expanded upon Simon's approach to causality by using his algorithm to construct two different graphs out of a set of equations: the \emph{causal ordering graph} (a directed cluster graph that represents causal effects of certain interventions) and the \emph{Markov ordering graph} (a directed graph on variable nodes, obtained by declustering and marginalizing out equation nodes, from which conditional independences can be read off via $d$-separation).

\section{Bipartite Graphical Causal Models}\label{sec:bgcm}

\subsection{Systems of Equations and Bipartite Graphs}\label{sec:bipartite}

We consider a system of equations involving a set of variables $X_V = (X_v)_{v \in V}$ taking values in standard Borel spaces $(\mathcal{X}_v)_{v \in V}$, and partitioned into \emph{endogenous} variables $X_{V \sm U}$ and \emph{exogenous} variables $X_U$, for some $U \subseteq V$. The equations $\{f_j\}_{j \in F}$ are of the form $0 = \phi_j(X_{\mathrm{nb}(f_j)})$, where $\phi_j$ is a measurable function and $\mathrm{nb}(f_j) \subseteq V$ denotes the set of variables appearing in equation $f_j$.

\begin{definition}[Bipartite graph of a system of equations]\label{def:bipartite-graph}
The \emph{bipartite graph} of a system of equations is the undirected bipartite graph $G = (V, F, E)$, where $V$ is the set of variable nodes, $F$ is the set of equation nodes, and $E \subseteq V \times F$ contains an edge $(v, f)$ if and only if variable $X_v$ appears in equation $f$.
\end{definition}

We illustrate this with a running example, a simplification of the bathtub system in \citep{IwasakiSimon1994}.

\begin{figure}
  \centering
  \includegraphics[width=0.45\textwidth]{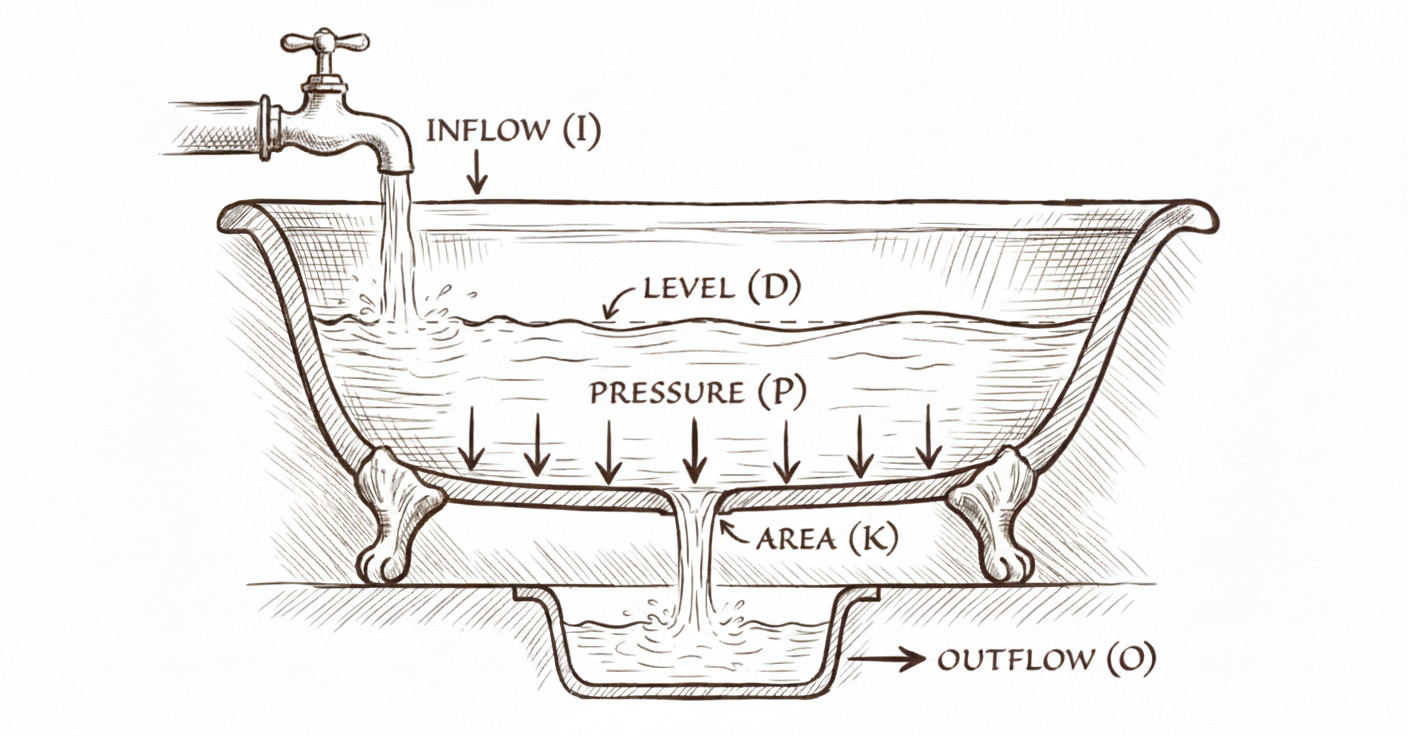}
  \caption{The bathtub system (rendered by Google Gemini).\label{fig:bathtub}}
\end{figure}
\begin{example}[Bathtub at equilibrium]\label{ex:bathtub}
Consider a bathtub with constant inflow of water at equilibrium (Figure~\ref{fig:bathtub}). The endogenous variables are: $X_O$ (water outflow through the drain), $X_D$ (water depth), and $X_P$ (pressure at the drain). The exogenous variables are: $X_I$ (water inflow from faucet), $X_K$ (drain area), and $X_g$ (gravitational acceleration). The equilibrium is described by three independent mechanisms:
\begin{align}
    f_1: & \quad 0 = X_I - X_O \label{eq:f1}\\
    f_2: & \quad 0 = X_K \sqrt{X_P} - X_O \label{eq:f2}\\
    f_3: & \quad 0 = X_g X_D - X_P \label{eq:f3}
\end{align}
Equation~\eqref{eq:f1} states that at equilibrium, outflow equals inflow. Equation~\eqref{eq:f2} is Torricelli's law: outflow is proportional to the drain area and the square root of the pressure. Equation~\eqref{eq:f3} is Stevin's law: pressure is proportional to depth and gravitational acceleration.

The bipartite graph of this system has variable nodes $\{X_O, X_P, X_D, X_I, X_K, X_g\}$ and equation nodes $\{f_1, f_2, f_3\}$, with edges connecting each equation to the variables appearing in it. We use squares for equation nodes, circles for endogenous nodes, while exogenous variables $\{X_I, X_K, X_g\}$ are shown without circles:

\medskip
\centerline{\scalebox{0.8}{\begin{tikzpicture}
\node[var] (vO) at (2,0) {$X_O$};
\node[var] (vP) at (4,0) {$X_P$};
\node[var] (vD) at (6,0) {$X_D$};
\node[varc] (f1) at (2,1.5) {$f_1$};
\node[varc] (f2) at (4,1.5) {$f_2$};
\node[varc] (f3) at (6,1.5) {$f_3$};
\node (XI) at (2,3) {$X_I$};
\node (XK) at (4,3) {$X_K$};
\node (Xg) at (6,3) {$X_g$};
\draw[noarr] (f1) -- (vO);
\draw[noarr] (f2) -- (vP);
\draw[noarr] (f2) -- (vO);
\draw[noarr] (f3) -- (vD);
\draw[noarr] (f3) -- (vP);
\draw[noarr] (f2) -- (XK);
\draw[noarr] (f1) -- (XI);
\draw[noarr] (f3) -- (Xg);
\end{tikzpicture}}}
\end{example}

\subsection{Causal Ordering and Partial Orientation}\label{sec:causal-ordering}

Given a bipartite graph $G = (V, F, E)$ and a set $U \subseteq V$ of exogenous variables, Simon's causal ordering algorithm produces a partial orientation of $G$ that encodes the causal structure.

The algorithm first finds a \emph{perfect matching} $M$ of the subgraph $G_{(V \sm U) \cup F}$, i.e., a subset of edges such that each endogenous variable node and each equation node is incident to exactly one edge in $M$.\footnote{For simplicity of exposition, we assume throughout that there is such a perfect matching. If this is not the case, one can make use of the Dulmage-Mendelsohn decomposition \citep{BlomVanDiepenMooij_JMLR_21}.} This matching associates each equation with a unique endogenous variable, which can be thought of as the variable that the equation ``solves for.'' \citet{Nayak1995} showed that the perfect matching can be found efficiently using the Hopcroft-Karp algorithm \citep{HopcroftKarp1973}.

We then define an equivalence relation on the nodes of $G$ that identifies nodes belonging to the same cluster.

\begin{definition}[Equivalence relation and clusters]\label{def:equivalence}
  Given a bipartite graph $G = (V, F, E)$, subset $U \subseteq V$, and perfect matching $M$ of $G_{(V \sm U) \cup F}$, define $\sim$ as the equivalence relation on $V \cup F$ generated by the following: $a \sim b$ if $\text{$a \ttt b$} \in M$, or if $a$ and $b$ lie on a closed $M$-alternating walk (i.e., a walk that alternates between matched and unmatched edges and returns to its starting node). The equivalence class of a node $a$ is denoted $[a]$, and we refer to this as a ``cluster''.
\end{definition}

\begin{lemma}[\citealp{DulmageMendelsohn1958}]\label{lem:DM}
The equivalence relation $\sim$ depends only on the bipartite graph $G$ and the set of exogenous variables $U$, not on the choice of perfect matching $M$.
\end{lemma}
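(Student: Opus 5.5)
We prove the statement by characterizing $\sim$ without reference to $M$. Write $H = G_{(V\sm U)\cup F}$ and let $\mathcal{M}$ be the set of perfect matchings of $H$. Let $E^\ast$ be the set of edges of $H$ that lie in at least one perfect matching. Our target is: \emph{for any $M \in \mathcal{M}$, $a \sim_M b$ if and only if $a$ and $b$ lie in the same connected component of the subgraph $(V\sm U)\cup F$ with edge set $E^\ast$.} The right-hand side depends only on $G$ and $U$, so this yields the lemma. Exogenous nodes need a separate remark. They are unmatched, so no $M$-alternating closed walk can pass through them, since such a walk must enter and leave each node once via a matched edge. Hence every exogenous node forms a singleton class for every $M$. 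We therefore restrict attention to $H$ throughout.

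\textbf{Step 1 (alternating cycles correspond to allowed edges).} We use the standard symmetric-difference fact: for $M, M' \in \mathcal{M}$, the set $M \triangle M'$ is a disjoint union of even cycles that alternate between $M$ and $M'$. Conversely, if $C$ is an $M$-alternating cycle, then $M \triangle C \in \mathcal{M}$. It follows that an edge $e \notin M$ lies in $E^\ast$ if and only if $e$ lies on some $M$-alternating cycle. We also need to handle walks: a closed $M$-alternating walk can be decomposed into $M$-alternating simple cycles. Each node has exactly one matched edge, so a repeated node can be shortcut by splitting at it; parity is preserved by bipartiteness together with the alternation constraint at the split node. This splitting argument is where we must be careful.

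\textbf{Step 2 ($\sim_M$ equals the $E^\ast$-components).} For the forward direction, a matched edge lies in $E^\ast$ trivially. Every edge of a closed $M$-alternating walk lies in $E^\ast$: the unmatched edges by Step 1, and the matched ones trivially. Hence each generator of $\sim_M$ connects nodes in the same $E^\ast$-component. For the converse, it suffices to treat a single edge $e = ab \in E^\ast$. If $e \in M$, then $a \sim_M b$ directly. Otherwise, by Step 1, $e$ lies on an $M$-alternating cycle, which contains both $a$ and $b$, so again $a \sim_M b$. Since $\sim_M$ is transitive, connectivity in $E^\ast$ implies $\sim_M$.

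\textbf{Conclusion and main obstacle.} Together, Steps 1 and 2 show that $\sim_M$ coincides with the $M$-independent relation of $E^\ast$-connectivity, which proves the lemma. This is essentially the Dulmage--Mendelsohn fine decomposition. The main obstacle is the walk-to-cycle reduction in Step 1. The reverse reduction is not needed, because a simple cycle is already a walk.
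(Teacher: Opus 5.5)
Your proof is correct. It necessarily takes a different route from the paper, because the paper gives no argument of its own: it states the lemma and defers entirely to \citet{DulmageMendelsohn1958}.

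You make the $M$-independence explicit by identifying each class of $\sim_M$ inside $H$ with a connected component of the subgraph of allowed edges $E^\ast$ (edges lying in some perfect matching). That subgraph is visibly a function of $G$ and $U$ alone. This is the elementary-component description of the Dulmage--Mendelsohn fine decomposition, and both inclusions in your Step~2 are sound.

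The step you flag closes without difficulty and is worth writing out. Suppose a closed alternating walk $n_0,e_1,\dots,e_k,n_k=n_0$ repeats a node, $n_i=n_j$ with $0\le i<j<k$. Split it into the two closed sub-walks through this node.
\begin{enumerate}
\item Each sub-walk has positive even length, since the graph is bipartite.
\item Each inherits alternation at all its interior positions. This includes $n_0$, because $k$ is even and so the original walk alternates cyclically.
\item An internally alternating closed walk of even length also alternates at its base point.
\item A length-$2$ piece cannot occur, since it would traverse one edge twice in a row.
\end{enumerate}
Induction on $k$ then places every edge of the walk on a simple $M$-alternating cycle.

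The same evenness argument is the precise reason an exogenous node cannot lie on such a walk, even as its starting node: its two walk-edges would have to be one matched and one unmatched.

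Your route buys a self-contained proof together with a concrete, matching-free description of the clusters. The citation buys brevity and access to the general Dulmage--Mendelsohn theory, which also covers the case without a perfect matching that the paper mentions in its footnote.
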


Note that each exogenous node forms a singleton cluster.
Using this equivalence relation, we define the partial orientation of the bipartite graph.

\begin{definition}[Partial orientation]\label{def:partial-orientation}
The \emph{partial orientation} $\vv{G}$ of $G$ is obtained by orienting each edge $v \ttt f \in E$ (with $v \in V$, $f \in F$) as follows:
\[
    v \ttt f \mapsto \begin{cases}
        v \to f & \text{if } v \not\sim f, \\
        v \otto f & \text{if } v \sim f.
    \end{cases}
\]
\end{definition}

The mapping $G \mapsto \vv{G}$ is equivalent to Simon's causal ordering algorithm \citep{Simon1953}. Directed edges $v \to f$ indicate that variable $v$ is an input to equation $f$ from a different (earlier) cluster. Double-undirected edges $v \otto f$ indicate that $v$ and $f$ belong to the same cluster. Note that since matched pairs always satisfy $v \sim f$, matched edges are always oriented as $v \otto f$.\footnote{In Simon's original formulation, matched edges are directed as $f \to v$, suggesting that $f$ ``solves for'' $v$. We use $v \otto f$ uniformly because solvability is not guaranteed by the graph structure alone, but is an additional assumption (Assumption~\ref{ass:leu}).}

\begin{example}[Bathtub: causal ordering]\label{ex:bathtub-ordering}
For the bathtub system (Example~\ref{ex:bathtub}), the (unique) perfect matching associates $f_1$ with $X_O$, $f_2$ with $X_P$, and $f_3$ with $X_D$. The partially oriented graph is:

\medskip
\centerline{\scalebox{0.8}{\begin{tikzpicture}
\node[var] (vO) at (2,0) {$X_O$};
\node[var] (vP) at (4,0) {$X_P$};
\node[var] (vD) at (6,0) {$X_D$};
\node[varc] (f1) at (2,1.5) {$f_1$};
\node[varc] (f2) at (4,1.5) {$f_2$};
\node[varc] (f3) at (6,1.5) {$f_3$};
\node (XI) at (2,3) {$X_I$};
\node (XK) at (4,3) {$X_K$};
\node (Xg) at (6,3) {$X_g$};
\draw[otto] (f1) -- (vO);
\draw[otto] (f2) -- (vP);
\draw[arr] (vO) to (f2);
\draw[otto] (f3) -- (vD);
\draw[arr] (vP) to (f3);
\draw[arr] (XK) to (f2);
\draw[arr] (XI) to (f1);
\draw[arr] (Xg) to (f3);
\end{tikzpicture}}}

\medskip
\noindent This encodes the causal ordering: first solve $f_1$ for $X_O$ in terms of $X_I$, yielding $X_O = X_I$; then solve $f_2$ for $X_P$ in terms of $X_O$ and $X_K$, yielding $X_P = X_I^2 / X_K^2$; finally solve $f_3$ for $X_D$ in terms of $X_P$ and $X_g$, yielding $X_D = X_I^2 / (X_K^2 X_g)$.
The clusters are $\{X_I\},\{X_K\},\{X_g\},\{f_1,X_O\},\{f_2,X_P\},\{f_3,X_D\}$.
\end{example}

We now formally define what we mean by ``causal ordering''.
We define a \emph{walk} in $\vv{G}$ as an alternating sequence of nodes and edges $n_0, e_1, n_1, e_2, \ldots, e_k, n_k$ where each edge $e_i$ connects $n_{i-1}$ and $n_i$; $k=0$ corresponds with a trivial walk.
A \emph{path} is a walk in which no node repeats.
\begin{definition}[Anterior]\label{def:anterior}
A node $a$ is \emph{anterior} to a node $b$ in $\vv{G}$ if there is a walk from $a$ to $b$ consisting only of $\to$ and $\otto$ edges. 
That is, a walk of the form
$n_0, e_1, n_1, e_2, \ldots, e_k, n_k$
where $n_0 = a$, $n_k = b$, and each edge $e_i$ is either $n_{i-1} \to n_i$ or $n_{i-1}\otto n_i$. 
We write $\ant_{\vv{G}}(B)$ for the set of all nodes anterior to some node in $B \subseteq V \cup F$.
\end{definition}
Mathematically, the relation ``is anterior to'' is a \emph{partial order} on the clusters (it is reflexive and transitive; it is antisymmetric because any two mutually anterior nodes lie in the same cluster).
Simon's insight was that this mathematical relationship captures what we perceive as causes and their effects: we consider $a$ a cause of $b$ precisely if $a \in \ant_{\vv{G}}(b)$.


\subsection{Solutions, Distributions, and Markov Kernels}\label{sec:solutions}

By solving the system of equations according to the causal ordering, we obtain \emph{solution functions} that express all endogenous variables in terms of the exogenous variables.
When a cluster contains more than one equation, the equations in the cluster must be solved simultaneously for the variables in that cluster.

\begin{definition}[Solution function]\label{def:solution}
  A \emph{solution function} for a system of equations with exogenous variables $X_U$ and endogenous variables $X_{V \sm U}$ is a function $\Phi: \mathcal{X}_U \to \mathcal{X}_V$ such that $\Phi_U(x_U) = x_U$, and $\Phi(x_U)$ satisfies all equations for every $x_U \in \mathcal{X}_U$.
\end{definition}

If we assume that all exogenous variables are mutually independent random variables with distributions $X_u \sim \Prb(X_u)$ for $u \in U$, then the \emph{joint distribution} $\Prb(X_V)$ of all variables is obtained as the pushforward of the product distribution $\bigotimes_{u \in U} \Prb(X_u)$ through the solution function $\Phi$.
This corresponds to the distribution of $X_V$ under the sampling scheme
  \[ X_u \sim \Prb(X_u)\ \ \text{for } u \in U, \qquad X_V = \Phi(X_U). \]
More generally, we can treat some exogenous variables as random and others as non-random. 
This yields a \emph{Markov kernel} rather than a distribution. 
If one only assigns independent distributions to exogenous variables in subset $U \setminus J$ with $J \subseteq U$, one obtains the Markov kernel $\Prb(X_V \mk X_J)$, corresponding to the sampling scheme
  \[ X_u \sim \Prb(X_u)\ \ \text{for } u \in U \setminus J, \qquad X_V = \Phi(X_J,X_{U\sm J}) \]
where the \emph{input variables} $X_J$ are left unconstrained and we make no assumptions about their distribution.
For instance, treating $X_I$ as a non-random input and $X_K, X_g$ as random yields the Markov kernel $\Prb(X_K, X_g, X_O, X_P, X_D \mk X_I)$, which specifies the joint distribution of $X_K,X_g,X_O,X_P,X_D$ for every possible value $x_I$.

\begin{definition}[Unique solvability of a cluster]\label{def:cluster_uniquely_solvable}
  A cluster $[c]$ in the partially oriented graph $\vv{G}$ is called \emph{uniquely solvable} if the equations in $F \cap [c]$ can be solved for the variables $V \cap [c]$ in terms of $\pa_{\vv{G}}([c])$, and the local solution function $\Phi^{[c]} : \mathcal{X}_{\pa_{\vv{G}}([c])} \to \mathcal{X}_{[c] \cap V}$ is unique.
\end{definition}

If all local unique solvability assumptions are met, this guarantees existence and uniqueness of a global solution function.
We collect our assumptions so far:
\begin{assumption}\label{ass:leu}
For a system of equations 
  \[0 = \phi_j(X_{\mathrm{nb}(f_j)}), \quad j \in F\]
with corresponding bipartite graph $G = (V, F, E)$, exogenous variables $U \subseteq V$, standard Borel spaces $(\mathcal{X}_v)_{v\in V}$:
\begin{enumerate}
  \item The functions $\phi_j : \mathcal{X}_{\mathrm{nb}(f_j)} \to \mathbb{R}$ are measurable.
  \item The subgraph $G_{(V \sm U) \cup F}$ has a perfect matching.
  \item The exogenous variables are \emph{variation independent}: their joint value space is a Cartesian product $\prod_{u \in U} \mathcal{X}_u$.
  \item The system is \emph{clusterwise uniquely solvable}: each endogenous cluster $[v]$ (for $v \in V \sm U$) is uniquely solvable.
\end{enumerate}
\end{assumption}

\begin{proposition}\label{prop:global-solution}
  Under Assumption~\ref{ass:leu}, there exists a unique solution function $\Phi : \mathcal{X}_U \to \mathcal{X}_V$.
\end{proposition}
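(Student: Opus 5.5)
We will construct $\Phi$ cluster by cluster along the causal ordering, and then show that any solution function coincides with it. The first step is to check that the anterior relation induces a well-founded order on the clusters. By the remark following Definition~\ref{def:anterior}, ``is anterior to'' is a partial order on the (finitely many) clusters. We will also check that for every cluster $[c]$, every node of $\pa_{\vv{G}}([c])$ lies in a cluster strictly anterior to $[c]$. Such a parent is necessarily a variable $v$ with an edge $v \to f$, $f\in[c]$, $v\not\sim f$, so its cluster $[v]$ is anterior to $[c]$ and different from it. Fixing a topological enumeration $C_1,\dots,C_m$ of the endogenous clusters consistent with this order, every parent of $C_k$ is then either exogenous or belongs to some $C_l$ with $l<k$.

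\textbf{Existence.} We define $\Phi$ recursively. Set $\Phi_U(x_U)=x_U$. For $k=1,\dots,m$, set $\Phi_{C_k\cap V}(x_U) = \Phi^{C_k}\bigl(\Phi_{\pa_{\vv{G}}(C_k)}(x_U)\bigr)$, where $\Phi^{C_k}$ is the local solution function from Assumption~\ref{ass:leu}(4). This is well-defined because the parent coordinates were already defined at earlier stages. Since the exogenous values range over a product space (Assumption~\ref{ass:leu}(3)), every $x_U$ is admissible.

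We then check that $\Phi(x_U)$ satisfies all equations. Each equation node $f$ is matched (Assumption~\ref{ass:leu}(2)), so it lies in some endogenous cluster $C_k$. Every variable appearing in $f$ is either in $C_k$ or is a parent of $C_k$ via an edge $v\to f$. Hence $f$ is one of the equations that $\Phi^{C_k}$ solves, given the parent values.

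Measurability of $\Phi$ deserves a remark. If Definition~\ref{def:solution} is read as requiring a measurable map, we take the local solution functions to be measurable, which is the intended reading of unique solvability. Compositions of measurable maps are measurable.

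\textbf{Uniqueness.} Let $\Psi$ be any solution function and fix $x_U$. We prove by induction on $k$ that $\Psi_{C_k\cap V}(x_U)=\Phi_{C_k\cap V}(x_U)$. For the inductive step, assume that the parents of $C_k$ agree under $\Psi$ and $\Phi$; this holds for exogenous parents because $\Psi_U(x_U)=x_U$, and for the other parents by the induction hypothesis. The values $\Psi_{C_k\cap V}(x_U)$ satisfy the equations $F\cap C_k$ with these parent values plugged in. Uniqueness of the local solution then forces $\Psi_{C_k\cap V}(x_U)=\Phi^{C_k}(\text{parents})=\Phi_{C_k\cap V}(x_U)$.

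The main obstacle is the order-theoretic step: ensuring that parents of a cluster lie in strictly earlier clusters, and that no cycle among clusters arises. This rests on the antisymmetry claim for the anterior relation, namely that mutually anterior nodes lie in the same cluster. We take this claim as given, as stated after Definition~\ref{def:anterior}. If it needed justification, we would argue via Dulmage--Mendelsohn (Lemma~\ref{lem:DM}) that a directed cycle through distinct clusters would yield a closed $M$-alternating walk, merging them.
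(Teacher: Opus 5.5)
Your proof is correct and follows the paper's route---recursive substitution of the local cluster solution functions along the acyclic causal ordering of the clusters. It also makes explicit what the paper's two-line proof leaves implicit: every equation lies in an endogenous cluster whose other variables are parents of that cluster, and uniqueness follows by induction over a topological order from pointwise uniqueness of the local solutions; one small note is that measurability need not be assumed, since the paper derives it in Appendix~\ref{app:measurable-solutions} from the measurability of the $\phi_j$ via the Borel-graph theorem.
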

\begin{proof}
The local solution functions $(\Phi^{[v]})_{v \in V\sm U}$ combine into a system of equations
  \begin{align*}
    X_v = \Phi_v^{[v]}(X_{\pa_{\vv{G}}([v])}) \qquad v \in V
  \end{align*}
that has an acyclic structure. 
Recursive substitution of the local solution functions of each cluster along the causal ordering then yields the global solution function.
\end{proof}
This unique (global) solution function then induces a unique joint distribution $\Prb(X_V)$ and unique Markov kernels $\Prb(X_V \mk X_J)$ for $J \subseteq U$.

\section{Markov Property}\label{sec:Markov}

In this section, we formulate a Markov property for bipartite graphical causal models that relates the conditional independence structure of the joint distribution to the graphical structure of the partially oriented bipartite graph.

\subsection{$B$-Separation}\label{sec:Bsep}

We define a graphical separation criterion for partially oriented bipartite graphs, called \emph{$B$-separation} (for ``bipartite''), an analog of classical $d$-separation that is appropriate for our setting.
It combines ideas from the segment-based formulation of $\sigma$-separation \citep{ForreMooij_1710.08775} to deal with cycles (clusters with more than a single variable), and from $D$-separation \citep{GeigerVermaPearl1990} to take into account deterministic relations (each endogenous cluster is a deterministic function of its parents).

\begin{definition}[Segments, exits, collider/non-collider segments]
  A walk $s$ on $\vv{G}$ can be partitioned into \emph{segments}: consecutive maximal subwalks $s_1, \dots, s_m$ of the form $s_{i,1} \otto \dots \otto s_{i,k_i}$ (possibly $k_i = 1$), i.e., with all edges double-undirected ($\otto$).
  We call a boundary node of a segment an \emph{exit} if its bounding edge points out of the segment or it is an end node of the walk.
  That is, $s_{i,1}$ is an exit if the edge on $s$ to the left of it is $\ldots \ot s_{i,1}$ or if it is the first node of $s$ ($i=1$), and $s_{i,k_i}$ is an exit if the edge on $s$ to the right of it is $s_{i,k_i} \to \ldots$ or if it is the last node of $s$ ($i=m$).
  A segment with no exit is a \emph{collider segment}; a segment with one or two exits is a \emph{non-collider segment}.
\end{definition}

The following notion tracks deterministic relations that are imposed by the structure of the bipartite graph.
\begin{definition}[Functionally determined]\label{def:functionally_determined}
  Let $C \subseteq V$ be a subset of variable nodes in $\vv{G}$. 
  Define $C_0 := C$ and
  \[ C_{n+1} := C_n \cup \{ v \in V \sm U : \pa_{\vv{G}}([v]) \subseteq C_n \}. \]
  We define $\fdet_{\vv{G}}(C) := \bigcup_{n \ge 0} C_n$ and refer to those as the variable nodes that are \emph{functionally determined by $C$}.
\end{definition}
Note that exogenous variable nodes are only functionally determined by $C$ if they are in $C$.

With these definitions in place, we define:
\begin{definition}[$B$-blocking]\label{def:B-blocking}
  For $C \subseteq V$, the walk is called \emph{$B$-blocked by $C$} if it contains:
  \begin{enumerate}
    \item a \emph{collider segment} that does not intersect $\ant_{\vv{G}}(C)$, or
    \item a \emph{non-collider segment} with an exit in $\fdet_{\vv{G}}(C)$.
  \end{enumerate}
  Otherwise, the walk is called \emph{$B$-open given $C$}.
\end{definition}
\begin{definition}[$B$-separation]\label{def:B-sep}
Let $A, B, C \subseteq V$ be sets of variable nodes. We say that \emph{$A$ and $B$ are $B$-separated given $C$ in $\vv{G}$}, written
\[
A \Perp^{B}_{\vv{G}} B \given C,
\]
if every walk from a node in $A$ to a node in $B$ is $B$-blocked by $C$ in $\vv{G}$.\footnote{It is equivalent to require only that every \emph{path} from $A$ to $B$ be $B$-blocked by $C$ (Lemma~\ref{lem:B-sep_walk_path}); the path formulation is usually more convenient to check by hand.}
\end{definition}
To build intuition, it is helpful to see how $B$-separation adapts $d$-separation to the two features that distinguish partially oriented bipartite graphs from DAGs: clusters and determinism.

\emph{Clusters.} The variables and equations of a cluster are solved jointly, so segments along a walk behave as a single indivisible unit. A walk can enter or leave a segment only through an \emph{exit}, and conditioning therefore interacts with a segment only through its exits. This is the bipartite-graph counterpart of collapsing a strongly connected component in $\sigma$-separation \citep{ForreMooij_1710.08775}: whether a segment blocks or transmits dependence is decided by the segment as a whole rather than node by node. As in $d$-separation, a \emph{non-collider} segment (a chain, a fork, or an endpoint of the walk) transmits dependence unless it is ``pinned down'' by the conditioning set, whereas a \emph{collider} segment (a common effect, entered by arrows from both sides) blocks association unless it is ``activated'' by the conditioning set.

\emph{Determinism.} Since each endogenous cluster is a deterministic function of its parents, conditioning on $C$ fixes not only $X_C$ but every variable that is functionally determined by it, i.e., $X_{\fdet_{\vv{G}}(C)}$. This is what ``pinned down'' means here: a non-collider segment is already blocked once one of its exits lies in $\fdet_{\vv{G}}(C)$, even if that exit is not itself in $C$. Activation of colliders, on the other hand, is governed by $\ant_{\vv{G}}(C)$: a collider segment transmits dependence only if the common effect, or one of its descendants, actually belongs to the conditioning set $C$. It is precisely this extra blocking granted by $\fdet_{\vv{G}}(C)$ that makes $B$-separation stronger than criteria that ignore determinism. This intuition is made precise in Appendix~\ref{app:direct-proof}.

\subsection{Global Markov Property}\label{sec:global-Markov}

\begin{restatable}[Global Markov property]{theorem}{thmMarkovB}\label{thm:Markov_B}
  Suppose that Assumption~\ref{ass:leu} holds.
  When assigning independent distributions to all exogenous variables, the resulting joint distribution $\Prb(X_V)$ satisfies: for all $A, B, C \subseteq V$, 
  \[ A \Perp^B_{\vv{G}} B \given C \implies X_A \Indep_{\Prb(X_V)} X_B \given X_C. \]
\end{restatable}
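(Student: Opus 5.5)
We plan to reduce the statement to the known soundness of $D$-separation for DAGs with deterministic nodes \citep{GeigerVermaPearl1990}. The reduction goes through an auxiliary acyclic graph $\mathcal{D}$ built from the clusters of $\vv{G}$.

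\textbf{Construction of $\mathcal{D}$.} The nodes of $\mathcal{D}$ are of three kinds:
\begin{itemize}
\item all variable nodes $V$;
\item for every endogenous cluster $[c]$, one extra ``cluster node'' $Y_{[c]}$ representing the random vector $X_{[c]\cap V}$.
\end{itemize}
The edges of $\mathcal{D}$ are as follows:
\begin{itemize}
\item for each $w \in \pa_{\vv{G}}([c])$, an edge $w \to Y_{[c]}$ (parents of a cluster are always variable nodes, since edges $w\to f$ only go from variables to equations);
\item for each $v \in [c]\cap V$, an edge $Y_{[c]} \to v$.
\end{itemize}
Exogenous variables are roots. Because anteriority is a partial order on clusters, $\mathcal{D}$ is acyclic.

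\textbf{Distribution on $\mathcal{D}$.} By Proposition~\ref{prop:global-solution} and its proof, $X_{[c]\cap V}=\Phi^{[c]}(X_{\pa_{\vv{G}}([c])})$, and each $X_v$ is a coordinate projection of $Y_{[c]}$. The joint distribution of $(X_V,(Y_{[c]})_{[c]})$ is therefore generated by a recursive structural model on $\mathcal{D}$ with independent root noises and all non-root nodes deterministic. The classical result then gives
\[
  A \Perp^{D}_{\mathcal{D}} B \given C \implies X_A \Indep X_B \given X_C .
\]
Here $D$-separation uses the functional closure in $\mathcal{D}$. That closure restricted to $V$ coincides with $\fdet_{\vv{G}}(C)$: a cluster node is determined exactly when all of $\pa_{\vv{G}}([c])$ is determined, and then so are all its variable children. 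This matches Definition~\ref{def:functionally_determined} clusterwise. It remains to prove that $A\Perp^B_{\vv{G}}B\given C$ implies $A\Perp^D_{\mathcal{D}}B\given C$. We argue by contraposition: every $D$-open path in $\mathcal{D}$ between $A$ and $B$ will be lifted to a $B$-open walk in $\vv{G}$.

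\textbf{Lifting paths.} Replace each visit $\dots a - Y_{[c]} - b\dots$ by a walk inside the cluster from the equation node adjacent to $a$ (or $a$ itself, if $a\in[c]$) to the one adjacent to $b$, using only $\otto$ edges. Such a walk exists because each cluster is connected via $\otto$ edges: matched edges plus closed alternating walks. The visit then becomes a segment of the lifted walk. Exits of segments are always variable nodes, because arrows only leave variables and walks start and end in $V$. Hence:
\begin{itemize}
\item a collider $w\to Y_{[c]}\ot w'$ in $\mathcal{D}$ becomes a collider segment in $\vv{G}$;
\item a non-collider at $Y_{[c]}$ or at a variable node becomes a non-collider segment whose exits are precisely the non-collider variable nodes on the $\mathcal{D}$-path.
\end{itemize}
The $D$-openness conditions then translate into the two clauses of Definition~\ref{def:B-blocking}:
\begin{itemize}
\item non-collider variables lie outside the functional closure, hence outside $\fdet_{\vv{G}}(C)$;
\item collider cluster nodes have a descendant in $C$ in $\mathcal{D}$, hence the collider segment meets $\ant_{\vv{G}}(C)$, since descendants in $\mathcal{D}$ correspond to anterior relations in $\vv{G}$.
\end{itemize}

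\textbf{Main obstacle.} We expect the bookkeeping in the lifting step to be hardest. A single variable $v\in[c]$ on the $\mathcal{D}$-path appears as $Y_{[c]}\to v$, which may be a collider-free pass-through. We must check that concatenating the two sides through the same cluster still yields a segment whose exit structure matches, and that no spurious exit in $\fdet_{\vv{G}}(C)$ is created. The most delicate case is a path entering and leaving a cluster node via two of its variable children. We would handle it by merging the consecutive $\otto$ pieces into one segment whose exits are exactly the outer endpoints. We would also verify, as an auxiliary lemma, that the $D$-separation soundness result of \citet{GeigerVermaPearl1990} applies to vector-valued nodes in standard Borel spaces; if needed, we would instead reprove it directly by induction on a topological order of $\mathcal{D}$.
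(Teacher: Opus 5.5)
Your argument is correct, but it is organized differently from the paper's.

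\textbf{The paper's route.} The paper removes determinism \emph{inside} $\vv{G}$ first. Lemma~\ref{lem:B-eq-b-fdet} shows that a walk is $B$-blocked by $C$ iff it is $b$-blocked by $\fdet_{\vv{G}}(C)$. The real work there is a collider segment that meets $\ant_{\vv{G}}(\fdet_{\vv{G}}(C))$ but not $\ant_{\vv{G}}(C)$; its walk-neighbors are then shown to be determined. Only after this does the paper project to the acyclification $\vv{G}^\acy$ on $V$ alone. There it proves $b$-separation equivalent to plain $d$-separation, applies the ordinary directed Markov property, and drops $\fdet_{\vv{G}}(C)\setminus C$ from the conditioning set.

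\textbf{Your route.} You project first, to a DAG with one hub per cluster, and let $D$-separation in $\mathcal{D}$ absorb the determinism. Because you use the formulation with colliders in $\anc_{\mathcal{D}}(C)$, the collider issue the paper handles in Lemma~\ref{lem:B-eq-b-fdet} is pushed into the soundness theorem you invoke. The hubs make the lifting transparent. The only parent of a variable $v$ in $\mathcal{D}$ is $Y_{[v]}$, so:
\begin{itemize}
\item no variable is a collider on a $\mathcal{D}$-path;
\item every variable on the path becomes an exit of exactly one segment of the lift;
\item the within-cluster $\otto$ detours contribute no exits;
\item segments with two distinct exits are exactly hub forks $x\ot Y_{[c]}\to y$.
\end{itemize}
So the step you flag as the main obstacle is in fact routine.

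\textbf{What each route buys.} The paper's route stays on the variables $V$ without auxiliary nodes and relies only on the standard $d$-separation Markov property. It also yields exact characterizations ($b$-separation as $d$-separation, and $B$-separation as $D$-separation, in $\vv{G}^\acy$) rather than a one-way implication. Note too that $\mathcal{D}$ only works together with determinism: in plain $d$-separation a hub fork stays open even when all of $\pa_{\vv{G}}([c])$ is conditioned on.

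\textbf{Closing the soundness step.} You need Geiger--Verma--Pearl's soundness beyond its original scope (standard Borel spaces, overlapping $A,B,C$). You do not have to reprove it. Apply Lemma~\ref{lem:D-sep} to $\mathcal{D}$, taking $U$ as the exogenous nodes and all other nodes, including the $Y_{[c]}$, as endogenous. This turns your $D$-separation given $C$ into $d$-separation given $\fdet_{\mathcal{D}}(C)$. Then use the standard directed Markov property, and drop the determined nodes (hub nodes included, all functions of $X_C$) as in Lemma~\ref{lem:condition_on_functionally_determined}. For this to work, an endogenous cluster without parents must yield a deterministic constant hub node rather than a random root. Otherwise $\fdet_{\mathcal{D}}(C)\cap V$ would miss $[c]\cap V$, which $\fdet_{\vv{G}}(C)$ contains.
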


The Markov property ``propagates'' the independence of the exogenous variables through the equations along the partial ordering, yielding conditional independences among endogenous variables.

\begin{example}[Bathtub: Markov property]\label{ex:bathtub-Markov}
In the partially oriented bathtub graph (Example~\ref{ex:bathtub-ordering}), every path from $X_D$ to $X_O$ must pass through $X_P$ (via the equation nodes). One can verify that $X_D \Perp^{B}_{\vv{G}} X_O \given X_P$, which implies the conditional independence:
\[
X_D \Indep X_O \given X_P.
\]
This means the joint distribution factorizes as
\[
\Prb(X_D, X_O, X_P) = \Prb(X_D \given X_P) \otimes \Prb(X_O, X_P).
\]
\end{example}

\subsection{Extended Global Markov Property}\label{sec:extended-Markov}

A more general version of the Markov property allows treating some exogenous variables as non-random, using an extended notion of conditional independence \citep{Forre2021}.

\begin{restatable}[Extended Global Markov property]{theorem}{thmiMarkovB}\label{thm:i-Markov_B}
  Suppose Assumption~\ref{ass:leu} holds.
  Treat exogenous variables $J \subseteq U$ as non-random, and assign independent distributions to exogenous variables in $U \sm J$, yielding Markov kernel $\Prb(X_V \mk X_J)$.
  Then for all $A, B, C \subseteq V$ such that $J \subseteq B \cup C$:\footnote{One can replace the assumption $A \Perp^{B}_{\vv{G}} B \given C$ by $A \Perp^{B}_{\vv{G}} B \cup J \given C$ to obtain a result that is valid for \emph{all} choices of $A,B,C$.}
  \[ A \Perp^{B}_{\vv{G}} B \given C \implies X_A \Indep_{\Prb(X_V \mk X_J)} X_B \given X_C. \]
\end{restatable}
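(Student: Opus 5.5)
The plan is to reduce the extended statement to the ordinary global Markov property (Theorem~\ref{thm:Markov_B}) using the standard device for transitional conditional independence \citep{Forre2021}. Under that device, a statement $X_A \Indep_{\Prb(X_V \mk X_J)} X_B \given X_C$ with $J \subseteq B \cup C$ holds if and only if, for every probability distribution $\mathbb{Q}$ on $\mathcal{X}_J$ with full support, we have $X_A \Indep X_B \given X_C$ in the joint distribution $\Prb(X_V \mk X_J)\otimes \mathbb{Q}(X_J)$. In practice it suffices to use the product $\bigotimes_{j\in J}\mathbb{Q}_j$ of full-support marginals, and the condition $J \subseteq B\cup C$ is exactly what makes this characterization available. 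I will first recall this reduction and, if needed, check the direction we need: for each fixed $x_J$, a conditional independence valid under every such $\mathbb{Q}$ yields the kernel-level factorization required by the definition of transitional conditional independence.

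Next I will fix such a product distribution $\mathbb{Q}=\bigotimes_{j\in J}\mathbb{Q}_j$ and assign it to the input variables. All exogenous variables are then mutually independent and random. The system of equations, its bipartite graph, the partial orientation $\vv{G}$, and Assumption~\ref{ass:leu} are all unchanged, because they do not depend on the distributions. Proposition~\ref{prop:global-solution} gives the same solution function $\Phi$. Hence the pushforward of $\bigotimes_{u\in U\sm J}\Prb(X_u)\otimes\mathbb{Q}$ through $\Phi$ equals $\Prb(X_V\mk X_J)\otimes\mathbb{Q}$. Theorem~\ref{thm:Markov_B} applies directly to this joint distribution: the hypothesis $A \Perp^{B}_{\vv{G}} B \given C$ is purely graphical and therefore gives $X_A \Indep X_B \given X_C$ under $\Prb(X_V\mk X_J)\otimes\mathbb{Q}$. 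Since this holds for every choice of $\mathbb{Q}$, the reduction of the first step gives the claim.

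The main obstacle is the measure-theoretic passage from "independent for all full-support input distributions" to the transitional statement, because conditional independences hold only almost surely. A single $\mathbb{Q}$ would determine the kernel only $\mathbb{Q}$-almost everywhere in $x_J$. I will handle this by invoking the corresponding equivalence result of \citet{Forre2021}, which is stated for standard Borel spaces and for $J \subseteq B\cup C$. If a direct argument is preferred instead, I will use the fact that, conditioning on $X_C$ and $X_B$ with $J$ contained in them, the conditional of $X_A$ can be chosen as a measurable function not depending on $\mathbb{Q}$. That function is obtained from the kernel via disintegration, so its version is uniform over all $\mathbb{Q}$.

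For the footnoted variant, I would argue that $A \Perp^{B}_{\vv{G}} B\cup J\given C$ together with $J\subseteq (B\cup J)\cup C$ gives independence of $X_A$ and $X_{B\cup J}$ given $X_C$ by the theorem just proved, and then apply decomposition.
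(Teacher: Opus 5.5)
Your reduction depends on the claimed characterization ``$X_A \Indep_{\Prb(X_V \mk X_J)} X_B \given X_C$ iff $X_A \Indep X_B \given X_C$ under $\Prb(X_V\mk X_J)\otimes\mathbb{Q}$ for every full-support $\mathbb{Q}$''. Its nontrivial direction is exactly what is missing. Transitional conditional independence asks for \emph{one} measurable kernel $Q(X_A\mk X_C)$ with $\Prb(X_A,X_B,X_C\mk x_J)=Q(X_A\mk X_C)\otimes\Prb(X_B,X_C\mk x_J)$ for \emph{every} $x_J$. Each randomized statement only gives a $\mathbb{Q}$-dependent kernel that is correct for $\mathbb{Q}$-almost every $x_J$.

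Randomization characterizations of this kind (e.g.\ for Constantinou and Dawid's extended conditional independence) are available for discrete input spaces, where ``full support'' means positive mass on every point. For uncountable standard Borel $\mathcal{X}_J$ (such as the inflow $X_I$), ``full support'' is not even intrinsic. Varying $\mathbb{Q}$, e.g.\ by adding atoms, controls only countably many input values at a time, so you would still have to glue uncountably many a.e.-defined kernels into a single measurable one. Unless you can point to a precise result covering uncountable $\mathcal{X}_J$, this step is unproven; I am not aware of one in \citet{Forre2021} in the form you describe.

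Your fallback does not close the gap. Parametrized disintegration does give a $\mathbb{Q}$-free version $R(X_A\mk X_B,X_C)$, but you need a version that depends on $X_C$ only. When $J\subseteq C$ this can be repaired without randomizing at all:
\begin{itemize}
\item fix $x_J$ by Dirac distributions and apply Theorem~\ref{thm:Markov_B} for each $x_J$;
\item set $Q(X_A\mk x_C):=\int R(X_A\mk x_B,x_C)\,S(dx_B\mk x_C)$, where $S$ is the parametrized disintegration of $\Prb(X_B,X_C\mk X_J)$ given $X_C$, with $x_J$ read off from $x_C$.
\end{itemize}
When some inputs lie in $B\sm C$, as in Example~\ref{ex:bathtub-extended-Markov}, this is not enough. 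You must show that the conditional law of $X_A$ given $X_C$ is the same for all values of $x_{J\sm C}$, measurably and without exceptional null sets. Nothing in your plan achieves this.

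The paper avoids randomization altogether:
\begin{itemize}
\item Lemma~\ref{lem:B-eq-b-fdet} rewrites the hypothesis as $A\Perp^{b}_{\vv{G}} B\given\fdet_{\vv{G}}(C)$.
\item This is upgraded to $A\Perp^{b}_{\vv{G}} B\cup J\given\fdet_{\vv{G}}(C)$, since walks ending in $J\sm B\subseteq C$ are blocked at their end node.
\item Lemma~\ref{lem:b-sep_equiv_d-sep} turns this into $d$-separation in the acyclification.
\item The paper reads $\vv{G}^{\acy}$ as a conditional DAG with inputs $J$ and deterministic kernels given by the cluster solution functions. Forr\'e's kernel-level global Markov property (his Theorem~6.3, proved by induction with left/right separoid rules) then gives the factorization for every $x_J$, conditional on $X_{\fdet_{\vv{G}}(C)}$.
\item Equivalent Exchange (Lemma~\ref{lem:condition_on_functionally_determined_Markov_kernel}) shrinks the conditioning set to $X_C$.
\end{itemize}
The kernel-level Markov property is the extra ingredient your plan tries to extract from Theorem~\ref{thm:Markov_B}, and in this generality it cannot. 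Your treatment of the footnoted variant via right decomposition is fine once the main statement is established.
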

Concretely, the conditional independence $X_A \Indep_{\Prb(X_V \mk X_J)} X_B \given X_C$ for a Markov kernel $\Prb(X_V \mk X_J)$ means that there \emph{exists} a Markov kernel $Q(X_A \mk X_C)$ (not depending on $X_B$) such that
\[\Prb(X_A, X_B, X_C \lVert X_J) = Q(X_A \lVert X_C) \otimes \Prb(X_B, X_C \lVert X_J).\]
This is the notion of \emph{transitional conditional independence} introduced by \citet{Forre2021}; it is asymmetric (the roles of $A$ and $B$ are not interchangeable).\footnote{Our notion of $B$-separation does not distinguish non-random and random variables explicitly, which is why the condition $J \subseteq B \cup C$ is needed here; see Appendix~\ref{app:proof-extended}.}

\begin{example}[Bathtub: extended Markov property]\label{ex:bathtub-extended-Markov}
In the bathtub model, treating $X_I$ as non-random yields the Markov kernel $\Prb(X_K, X_g, X_O, X_P, X_D \mk X_I)$. Since $X_D \Perp^{B}_{\vv{G}} X_I \given X_P$, the extended Markov property implies $X_D \Indep X_I \given X_P$, which means there exists a Markov kernel $\Prb(X_D \mk X_P)$ such that:
\[
\Prb(X_D, X_P \mk X_I) = \Prb(X_D \given X_P) \otimes \Prb(X_P \mk X_I).
\]
\end{example}

\section{Interventions in Bipartite Graphical Causal Models}\label{sec:interventions}

Causality is fundamentally about change: how does a system react to externally imposed modifications? In the BGCM framework, there are two types of elementary interventions:
\begin{enumerate}
  \item \textbf{Changing the distribution of an exogenous variable:} replacing $\Prb(X_u)$ by another distribution $\tilde{\Prb}(X_u)$.
  \item \textbf{Replacing an equation:} substituting equation $f_j$ by a different equation $\tilde{f}_j$.
\end{enumerate}

A particularly important class of the second type is the \emph{hard intervention} $\doit(f_j : X_v = \xi_v)$, which replaces equation $f_j$ by the equation $0 = X_v - \xi_v$,\footnote{For $\mathcal{X}_v \ne \mathbb{R}$ one can more generally take $0 = \ind_{\{\xi_v\}}(X_v) - 1$.} thereby fixing variable $X_v$ to value $\xi_v$ via the intervened mechanism $f_j$. This notion naturally corresponds to concrete physical procedures: different choices of $f_j$ lead to genuinely different real-world implementations, even when targeting the same variable $X_v$ at the same value $\xi_v$ (see Example~\ref{ex:bathtub-ambiguous} and Appendix~\ref{app:implementations} for detailed examples).

\subsection{Ambiguity of Perfect Interventions}\label{sec:ambiguity}

A key insight of the BGCM framework is that the standard notion $\doit(X_v = \xi_v)$ for a ``perfect intervention'' can be ambiguous: different equations can be replaced to achieve $X_v = \xi_v$, leading to different causal effects on other variables \citep{BlomVanDiepenMooij_JMLR_21}.

\begin{figure*}[t]
  \centering
  \begin{tikzpicture}
    \node at (0,-2) {$\doit(f_1: X_D=\xi_D)$};
    \node at (0,0) {\includegraphics[height=3cm]{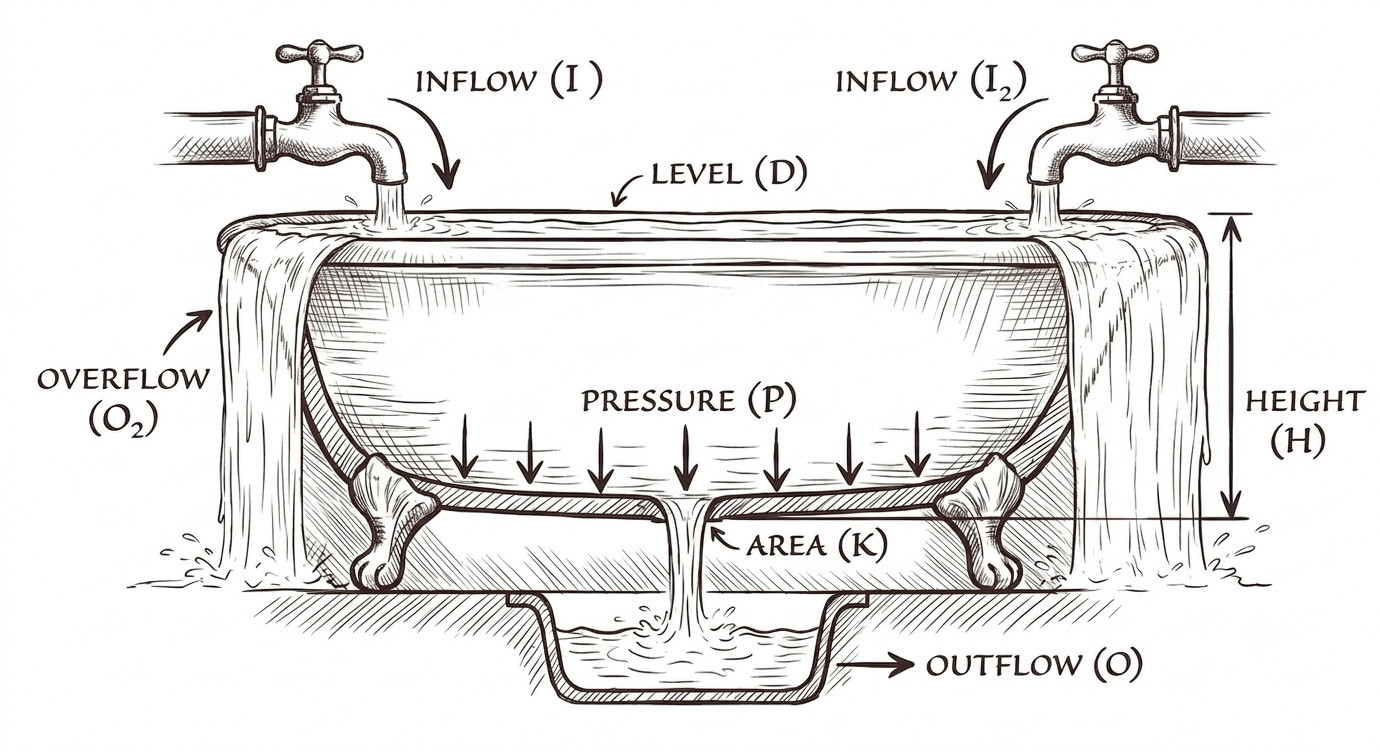}};
    \node at (5.5,-2) {$\doit(f_2: X_D=\xi_D)$};
    \node at (5.5,0) {\includegraphics[height=3cm]{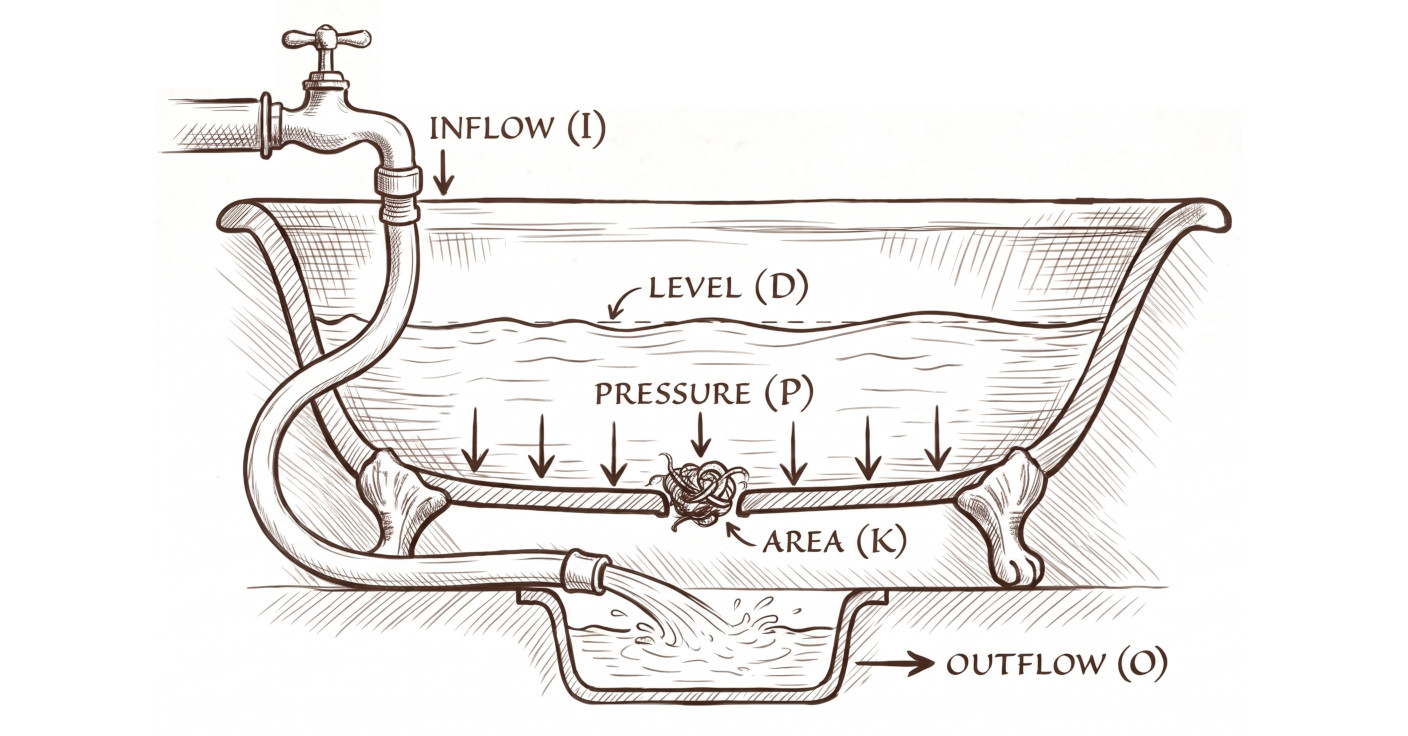}};
    \node at (11,-2) {$\doit(f_3: X_D=\xi_D)$};
    \node at (11,0) {\includegraphics[height=3cm]{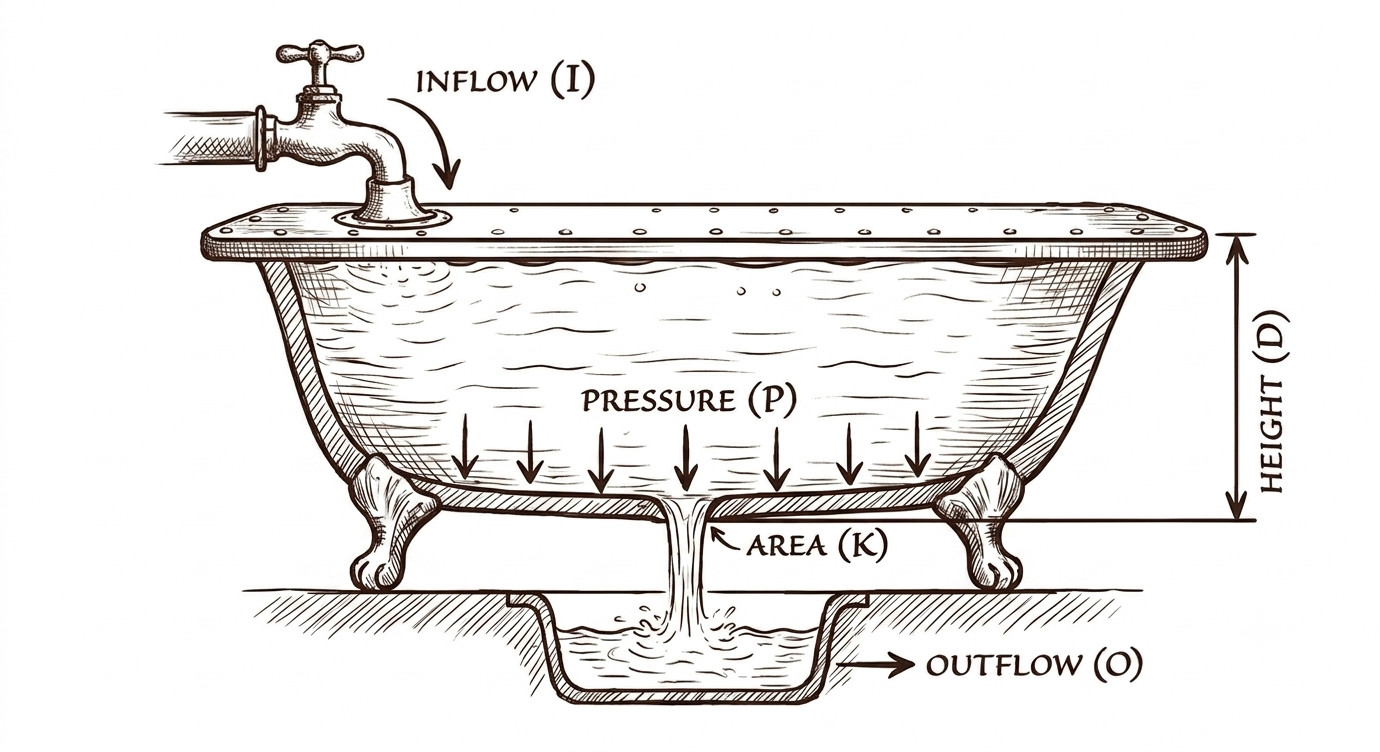}};
  \end{tikzpicture}
  \caption{Three different interventions on the bathtub that all set the water depth to a fixed value (Example~\ref{ex:bathtub-ambiguous}).\label{fig:bathtub-ambiguous}}
\end{figure*}

\begin{example}[Bathtub: ambiguous interventions]\label{ex:bathtub-ambiguous}
  Consider setting the water depth to a fixed value $\xi_D$ in the bathtub model. There are at least three distinct interventions that achieve this (see Figure~\ref{fig:bathtub-ambiguous}). The partially oriented bipartite graphs for these and all other hard interventions are shown in Figure~\ref{fig:intervention-graphs} in Appendix~\ref{app:intervention-graphs}.

\noindent\textbf{(i) $\doit(f_1 : X_D = \xi_D)$:} Replace the equilibrium condition~\eqref{eq:f1} by $0 = X_D - \xi_D$. Physically, this may correspond to cutting the bathtub at height $\xi_D$ and ensuring it overflows. The causal ordering \emph{reverses}: $\tilde{f}_1$ determines $X_D$, then $f_3$ determines $X_P$ from $X_D$ and $X_g$, and finally $f_2$ determines $X_O$ from $X_P$ and $X_K$. The solution is $X_O = X_K \sqrt{X_g \xi_D}$, $X_P = X_g \xi_D$, $X_D = \xi_D$.

\noindent\textbf{(ii) $\doit(f_2 : X_D = \xi_D)$:} Replace Torricelli's law~\eqref{eq:f2} by $0 = X_D - \xi_D$. Physically, this could involve disabling the drain and rerouting the inflow to the outflow once the water has reached height $\xi_D$. The causal ordering changes: $f_1$ determines $X_O$, $\tilde{f}_2$ determines $X_D$, and $f_3$ now determines $X_P$ from $X_D$. The solution is $X_O = X_I$, $X_P = X_g \xi_D$, $X_D = \xi_D$.

\noindent\textbf{(iii) $\doit(f_3 : X_D = \xi_D)$:} Replace Stevin's law~\eqref{eq:f3} by $0 = X_D - \xi_D$. Physically, this can be achieved by sealing the bathtub at height $\xi_D$ and ensuring it is completely filled. The causal ordering is preserved: $f_1$ still determines $X_O$, $f_2$ still determines $X_P$, and $\tilde{f}_3$ determines $X_D$. The solution is $X_O = X_I$, $X_P = X_I^2 / X_K^2$, $X_D = \xi_D$.

These three interventions all set $X_D = \xi_D$ but have different effects on $X_O$ and $X_P$. Therefore, the notion $\doit(X_D = \xi_D)$ is ambiguous and must be refined to $\doit(f_j : X_D = \xi_D)$. 
\end{example}

\subsection{Hard Interventions for the Bathtub}\label{sec:all-interventions}

Table~\ref{tab:interventions} summarizes all possible hard interventions for the bathtub model. 
In Appendix~\ref{app:implementations} we propose possible physical implementations of all feasible hard interventions.
That analysis shows that this is more than a purely mathematical exercise.
Not every combination of target equation and target variable yields a uniquely solvable system (signaled by the corresponding intervened graph having no perfect matching); indeed, some interventions (marked with $\emptyset$) generically lead to systems with no solution.
In those cases it would be futile to attempt to implement such interventions.

\begin{table}[t]
\caption{Feasibility of hard interventions $\doit(f_j : X_v = \xi_v)$ for the bathtub. Checkmarks indicate uniquely solvable systems; $\emptyset$ indicates the system is not uniquely solvable.}
\label{tab:interventions}
\centering
\begin{tabular}{@{}l ccc@{}}
\toprule
$\doit(f_j : X_v = \xi_v)$ & $f_1$ & $f_2$ & $f_3$ \\
\midrule
$X_O = \xi_O$ & $\checkmark$ & $\emptyset$  & $\emptyset$ \\
$X_P = \xi_P$ & $\checkmark$ & $\checkmark$ & $\emptyset$ \\
$X_D = \xi_D$ & $\checkmark$ & $\checkmark$ & $\checkmark$ \\
\bottomrule
\end{tabular}
\end{table}

Table~\ref{tab:effects} shows the solution functions for all well-defined hard interventions, illustrating how different interventions lead to different causal effects.

\begin{table}[t]
\caption{Solution functions for all hard interventions on the bathtub model.}
\label{tab:effects}
\centering
\small
\renewcommand{\arraystretch}{1.5}
\begin{tabular}{@{}l lll@{}}
\toprule
& $X_O$ & $X_P$ & $X_D$ \\
\midrule
observational & $X_I$ & $\frac{X_I^2}{X_K^2}$ & $\frac{X_I^2}{X_K^2 X_g}$ \\
\midrule
$\doit(X_I = \xi_I)$ & $\xi_I$ & $\frac{\xi_I^2}{X_K^2}$ & $\frac{\xi_I^2}{X_K^2 X_g}$ \\
$\doit(X_K = \xi_K)$ & $X_I$ & $\frac{X_I^2}{\xi_K^2}$ & $\frac{X_I^2}{\xi_K^2 X_g}$ \\
$\doit(X_g = \xi_g)$ & $X_I$ & $\frac{X_I^2}{X_K^2}$ & $\frac{X_I^2}{X_K^2 \xi_g}$ \\
\midrule
$\doit(f_1 : X_O = \xi_O)$ & $\xi_O$ & $\frac{\xi_O^2}{X_K^2}$ & $\frac{\xi_O^2}{X_K^2 X_g}$ \\
$\doit(f_1 : X_P = \xi_P)$ & $\sqrt{\xi_P}\, X_K$ & $\xi_P$ & $\frac{\xi_P}{X_g}$ \\
$\doit(f_1 : X_D = \xi_D)$ & $X_K \sqrt{X_g \xi_D}$ & $X_g \xi_D$ & $\xi_D$ \\
$\doit(f_2 : X_P = \xi_P)$ & $X_I$ & $\xi_P$ & $\frac{\xi_P}{X_g}$ \\
$\doit(f_2 : X_D = \xi_D)$ & $X_I$ & $X_g \xi_D$ & $\xi_D$ \\
$\doit(f_3 : X_D = \xi_D)$ & $X_I$ & $\frac{X_I^2}{X_K^2}$ & $\xi_D$ \\
\bottomrule
\end{tabular}
\end{table}

\subsection{Interventions Change the Causal Structure}\label{sec:interventions-change}

The bathtub system cannot be modeled as a CBN or an SCM, because $\doit(X_D = \xi_D)$ does not have a unique meaning. In a CBN or SCM, a perfect intervention $\doit(X_v = \xi_v)$ replaces a unique structural equation (the one with $X_v$ as its dependent variable), but in the bathtub there is no such unique association.

An important caveat is that hard interventions can change the bipartite graph and its partial orientation, and hence the conditional independence structure. For example, the intervention $\doit(f_1 : X_D = \xi_D)$ on the bathtub reverses the causal ordering entirely: instead of $X_O \to X_P \to X_D$, the ordering becomes $X_D \to X_P \to X_O$ (in terms of the directed part of the partially oriented graph), as can be seen in Figure~\ref{fig:intervention-graphs}. This is another phenomenon that has no counterpart in standard CBN or SCM frameworks.

\section{Domain Invariances}\label{sec:domain-invariances}

A central application of causal models is reasoning about what changes---and what remains invariant---across different experimental conditions or ``domains.'' In CBNs, Pearl's three rules of the do-calculus \citep{Pearl2009} formalize such invariances for observational versus interventional distributions. We now develop an analogous theory for BGCMs.

\subsection{The General Recipe}\label{sec:recipe}

To relate the distributions in two domains (e.g., an observational domain and an interventional domain), we employ the following procedure:
\begin{enumerate}
    \item \textbf{Construct the joint model:} Introduce an exogenous \emph{domain indicator} input variable $R$ and write the equations of both domains as a single system, where the equations that differ between domains depend on $R$.
    \item \textbf{Construct the bipartite graph $G^R$:} Build the bipartite graph of the joint model, which includes $R$ as an exogenous variable node connected to the equations in which $R$ occurs.
    \item \textbf{Run causal ordering:} Compute the partial orientation $\vv{G^R}$ of $G^R$.
    \item \textbf{Check solvability:} Verify that Assumption~\ref{ass:leu} holds for the joint model.
    \item \textbf{Apply the Markov property:} Use Theorem~\ref{thm:i-Markov_B} on $\vv{G^R}$ to derive conditional independences involving $R$, which translate into invariances across domains.
\end{enumerate}

Except for the solvability check (step 4), this is a purely graphical procedure.
When applying the conditional invariances (Examples~\ref{eg:example-II} and~\ref{eg:example-IV}), one must carefully handle the null sets arising from conditioning on continuous variables.
Tracking the null sets rigorously requires additional bookkeeping that we do not spell out here \citep[see][]{ForreMooij_Causality_2025}.

\subsection{Bathtub Examples}

\begin{example}[Observational vs.\ $\doit(X_g = \xi_g)$]\label{eg:example-I}
Consider comparing the observational setting (domain A) with the setting where gravitational acceleration is fixed to $\xi_g$ (domain B), for instance by ``moving the bathtubs to Mars.'' We introduce an exogenous variable $U_g$ and write the joint model:
  {\small\begin{align*}
    f_1: &\quad 0 = X_I - X_O, \\
    f_2: &\quad 0 = X_K \sqrt{X_P} - X_O, \\
    f_3: &\quad 0 = X_g X_D - X_P, \\
    f_4: &\quad 0 = X_g - \begin{cases} U_g & \text{if } R = A, \\ \xi_g & \text{if } R = B. \end{cases}
  \end{align*}}%
The bipartite graph $G^R$ has an additional equation node $f_4$ connected to $X_g$, $U_g$, and $R$. In the partial orientation $\vv{G^R}$ (see Figure~\ref{fig:domain-graphs}), the variables $X_P$ and $X_O$ are $B$-separated from $R$ (unconditionally). By the Markov property:
\begin{equation*}\begin{split}
X_P, X_O \Perp^{B}_{\vv{G^R}} R &\implies X_P, X_O \Indep R \\
&\implies \Prb_A(X_P, X_O) = \Prb_B(X_P, X_O).
\end{split}\end{equation*}
Equivalently, $\Prb(X_P, X_O) = \Prb(X_P, X_O \mk \doit(X_g = \xi_g))$. Hence, the joint distribution of pressure and outflow at equilibrium is invariant under changes in gravitational acceleration.
\end{example}

\begin{example}[Observational vs.\ $\doit(f_3 : X_D = \xi_D)$]\label{eg:example-II}
Now compare the observational setting with the intervention $\doit(f_3 : X_D = \xi_D)$ (sealing the bathtub). The joint model replaces $f_3$ by a domain-dependent equation:
  {\small\begin{align*}
    f_1: &\quad 0 = X_I - X_O, \\
    f_2: &\quad 0 = X_K \sqrt{X_P} - X_O, \\
    f_3: &\quad 0 = \begin{cases} X_g X_D - X_P & \text{if } R = A, \\ X_D - \xi_D & \text{if } R = B. \end{cases}
  \end{align*}}%
In the partial orientation $\vv{G^R}$ (Figure~\ref{fig:domain-graphs}), we have $X_O \Perp^{B}_{\vv{G^R}} R \given X_D, X_P$. By the Markov property:
\begin{equation*}\begin{split}
&\Prb_A(X_O \given X_D = \xi_D, X_P) \\
&\quad= \Prb_B(X_O \mk \doit(f_3 : X_D = \xi_D) \given X_P).
\end{split}\end{equation*}
This means that the conditional distribution of outflow given pressure is the same whether we \emph{observe} depth $\xi_D$ or \emph{intervene} to set it to $\xi_D$ by sealing the bathtub.
\end{example}

\begin{example}[Observational vs.\ $\doit(f_1 : X_D = \xi_D)$]\label{eg:example-III}
Comparing the observational setting with the intervention $\doit(f_1 : X_D = \xi_D)$ (cutting the bathtub and letting it overflow) yields a joint model where $f_1$ is domain-dependent:
  {\small\begin{align*}
    f_1: &\quad 0 = \begin{cases} X_I - X_O & \text{if } R = A, \\ X_D - \xi_D & \text{if } R = B, \end{cases} \\
    f_2: &\quad 0 = X_K \sqrt{X_P} - X_O, \\
    f_3: &\quad 0 = X_g X_D - X_P.
  \end{align*}}%
In the partial orientation $\vv{G^R}$ of this joint model (Figure~\ref{fig:domain-graphs}), all endogenous variables and equations belong to a single cluster (all edges are double-undirected). The Markov property does not yield non-trivial conditional independences involving $R$. Thus, we cannot use it to relate the observational and interventional distributions in this case---which is consistent with the fact that this intervention fundamentally changes the entire causal structure of the system.
\end{example}

\begin{example}[$\doit(f_1 : X_D = \xi_D)$ vs.\ $\doit(f_1 : X_D = \xi_D')$]\label{eg:example-IV}
While we cannot relate the observational distribution to $\doit(f_1 : X_D = \xi_D)$, we \emph{can} relate two interventional distributions with different parameter values. Consider the joint model where both domains have the same structural form but different intervention values:
  {\small\begin{align*}
    f_1: &\quad 0 = \begin{cases} X_D - \xi_D & \text{if } R = A, \\ X_D - \xi_D' & \text{if } R = B, \end{cases} \\
    f_2: &\quad 0 = X_K \sqrt{X_P} - X_O, \\
    f_3: &\quad 0 = X_g X_D - X_P.
  \end{align*}}%
Note that in both domains, the causal ordering is the same (reversed compared to the observational setting): $X_D \to X_P \to X_O$. In the partial orientation $\vv{G^R}$ (Figure~\ref{fig:domain-graphs}), we have $X_O \Perp^{B}_{\vv{G^R}} R \given X_P$. By the Markov property:
\begin{equation*}\begin{split}
&\Prb_A(X_O \mk \doit(f_1 : X_D = \xi_D) \given X_P) \\
&\quad= \Prb_B(X_O \mk \doit(f_1 : X_D = \xi_D') \given X_P).
\end{split}\end{equation*}
Hence, overflowing bathtubs yield the same conditional distribution of outflow given pressure, regardless of their height.
This conclusion might not be intuitively obvious but can easily be derived using our (mostly) graphical causal reasoning calculus.
\end{example}

\section{Discussion and Related Work}\label{sec:discussion}

The BGCM framework extends several existing causal modeling frameworks. Every CBN, acyclic SCM, simple SCM, and general SCM can be represented as a BGCM. Conversely, the bathtub example demonstrates that BGCMs can represent systems whose causal semantics are not captured by any of these frameworks.

Our approach builds on Simon's causal ordering algorithm \citep{Simon1953}, the $\sigma$-separation criterion for cyclic SCMs \citep{ForreMooij_1710.08775}, the $D$-separation criterion \citep{GeigerVermaPearl1990} and the framework of \citet{BlomVanDiepenMooij_JMLR_21}. The latter framework uses two distinct graphs that serve complementary purposes: \citet{BlomVanDiepenMooij_JMLR_21} show that their Markov ordering graph does not correctly represent causal effects of interventions, while their causal ordering graph does not directly encode conditional independences.

The notion that perfect interventions $\doit(X = x)$ can be ambiguous was identified by \citet{BlomVanDiepenMooij_JMLR_21}, who proposed the refined notion $\doit(f_j : X_v = \xi_v)$ to resolve the ambiguity. This refinement is essential for systems like the bathtub, where the same target variable value can be achieved through different mechanisms with different causal consequences. By performing a \emph{complete} analysis of the causal semantics of the bathtub system under such interventions, we lend further credibility to their claim that this refined notion is a natural representation of ``elementary'' interventions.

A key contribution of the present paper is that the partially oriented bipartite graph $\vv{G}$ encodes \emph{both} the causal structure and the conditional independence structure in a single object: the cluster structure and edge directions encode the causal ordering, while $B$-separation encodes conditional independences. 
This is more convenient, as it avoids the need to construct and switch between multiple graphs, and it retains the equation nodes so that the intervention structure remains directly visible. 
Furthermore, our $B$-separation Markov property is more powerful than the Markov properties derived by \citet{BlomVanDiepenMooij_JMLR_21}:
by also exploiting the functional determinism among the variables, further conditional independences are obtained.\footnote{Our results also imply that these could alternatively be obtained by using $D$-separation in the Markov ordering graph.} Finally, our extended Markov property, which handles Markov kernels with non-random inputs through transitional conditional independence \citep{Forre2021}, has no counterpart in that work. We believe that these features may also facilitate future extensions and applications.

The BGCM framework is closely related to how engineers reason about causality \citep{Frisk++2012,BunusFritzson2002,KrysanderNyberg2002}. In modeling languages such as Modelica, systems are specified as sets of ``acausal'' equations, and causality is derived automatically through symbolic analysis---precisely the kind of analysis formalized by Simon's causal ordering algorithm and the BGCM framework.

When the bipartite graph does not admit a perfect matching, the Dulmage-Mendelsohn decomposition \citep{DulmageMendelsohn1958} provides a useful generalization that can represent overcomplete subsystems (more equations than variables) and incomplete subsystems (more variables than equations). \citet{BlomVanDiepenMooij_JMLR_21} demonstrate how marginal Markov properties for the complete and overcomplete subsystems can still be derived in this general setting.

To our knowledge, there is little other work using bipartite graphs for causal inference.
\citet{ZiglerPapadogeorgou2021} introduce a framework for estimating causal effects under interference when the treated units are distinct from the observed units.
\citet{Sharifian++2025} prove that every valid graph in the observational equivalence class of linear Gaussian cyclic SCMs corresponds to a perfect matching.


\section{Conclusion}\label{sec:conclusion}

We have proposed bipartite graphical causal models (BGCMs) as a causal modeling framework that uses bipartite graphs with equation nodes and variable nodes. This framework offers several advantages. First, it reduces the ambiguity inherent in specifying interventions by requiring that hard interventions explicitly reference the equation being replaced. Second, Simon's causal ordering algorithm provides a principled method for deriving the partial orientation of the bipartite graph, which encodes the causal structure. Third, the $B$-separation criterion and the resulting Markov property propagate conditional independences along the partial ordering. Fourth, the Markov property facilitates causal reasoning about domain invariances, providing a generalization of Pearl's do-calculus to BGCMs.

BGCMs naturally model equilibrium systems such as the bathtub, and can be applied to a wide range of other systems, including equilibrated economic markets (e.g., supply--demand systems, see Appendix~\ref{app:supply-demand}), electronic circuits, and biochemical reaction networks \citep{BlomMooij_JCI_23}. Directions for future work include dynamical extensions incorporating (stochastic) differential equations, and the development of structure learning algorithms for BGCMs.

\begin{acknowledgements} 
  I thank Claude Code (Opus 4.6--4.8) for assistance in the writing process, and Kathy Molenaar for useful discussions.
\end{acknowledgements}

\bibliography{uai2026,2026_cambridge,mooij}

@inproceedings{Spirtes95,
  title = {Directed Cyclic Graphical Representations of Feedback Models},
  author = {Peter Spirtes},
  booktitle = {Proceedings of the Eleventh Conference on {U}ncertainty in {A}rtificial {I}ntelligence ({UAI}-95)},
  pages = {499--506},
  year = 1995,
  month = 8, 
}

@article{Dawid1979,
  title={Conditional Independence in Statistical Theory},
  author={Dawid, A. Philip},
  journal={Journal of the Royal Statistical Society: Series {B} (Methodological)},
  volume={41},
  number={1},
  pages={1--15},
  year={1979},
}

@book{Nayak1995,
author = {Nayak, P.},
title = {Automated modeling of physical systems},
year = {1995},
publisher = {Springer-Verlag},
address = {Berlin}
}

@article{DulmageMendelsohn1958,
author = {Dulmage, A. L. and Mendelsohn, N. S.},
year = 1958,
title = {Coverings of bipartite graphs},
journal = {Canadian Journal of Mathematics},
volume = 10,
pages = {517-534},
}

@incollection{Simon1953,
author = {Simon, Herbert A.},
title = {Causal Ordering and Identifiability},
booktitle = {Studies in Econometric Methods},
pages = {49--74},
editors = {Hood, William C. and Koopmans, Tjalling C.},
publisher = {John Wiley \& Sons},
year = {1953}
}

@article{Forre2021,
author={Patrick Forr{\'e}},
journal = {arXiv.org preprint},
title={Transitional Conditional Independence},
volume={arXiv:2104.11547 [math.ST]},
year={2021},
month=Apr,
url={https://arxiv.org/abs/2104.11547}
}

@article{IwasakiSimon1994,
title={Causality and model abstraction},
author={Iwasaki, Yumi and Simon, Herbert A},
journal={Artificial intelligence},
volume={67},
issue={1},
pages={143--194},
year={1994},
publisher={Elsevier}
}

@Book{Pearl2009,
author = {Judea Pearl},
title = {Causality: Models, Reasoning and Inference},
year = {2009},
publisher = {Cambridge University Press},
address = {}
}

@article{Bongers++_AOS_21,
  author = {Stephan Bongers and Patrick Forr{\'e} and Jonas Peters and Joris M. Mooij},
  title = {Foundations of Structural Causal Models with Cycles and Latent Variables},
  journal = {Annals of Statistics},
  year = {2021},
  volume = {49},
  number = {5},
  pages = {2885-2915},
  doi = {10.1214/21-AOS2064},
}

@article{BlomVanDiepenMooij_JMLR_21,
  author  = {Tineke Blom and Mirthe M. van Diepen and Joris M. Mooij},
  title   = {Conditional Independences and Causal Relations implied by Sets of Equations},
  journal = {Journal of Machine Learning Research},
  year    = {2021},
  volume  = {22},
  number  = {178},
  pages   = {1-62},
  url     = {http://jmlr.org/papers/v22/20-863.html}
}

@inproceedings{BlomBongersMooij_UAI_19,
  title     = {Beyond Structural Causal Models: Causal Constraints Models},
  author    = {Tineke Blom and Stephan Bongers and Joris M. Mooij},
  pages     = {585-594},
  url       = {https://proceedings.mlr.press/v115/blom20a.html},
  booktitle = {Proceedings of the 35th Uncertainty in Artificial Intelligence Conference ({UAI}-19)},
  editor    = {Adams, Ryan P. and Gogate, Vibhav},
  year      = 2020,
  publisher = {PMLR},
  series    = {Proceedings of Machine Learning Research},
  month     = 7,
  volume    = 115,
}

@article{ForreMooij_1710.08775,
  author = "Patrick Forr{\'e} and Joris M. Mooij",
  journal = "arXiv.org preprint",
  title = "Markov Properties for Graphical Models with Cycles and Latent Variables",
  volume = "arXiv:1710.08775 [math.ST]",
  year = 2017,
  month = Oct,
  url = "https://arxiv.org/abs/1710.08775",
}

@misc{ForreMooij_Causality_2025,
  author       = {Patrick Forr\'e and Joris M. Mooij},
  title        = {A Mathematical Introduction to Causality},
  year         = 2025,
  note         = {Lecture Notes},
  url          = {https://staff.fnwi.uva.nl/j.m.mooij/articles/causality_lecture_notes_2025.pdf}
}

@article{BlomMooij_JCI_23,
  author = {Tineke Blom and Joris M. Mooij},
  journal = {Journal of Causal Inference},
  title = {Causality and Independence in Perfectly Adapted Dynamical Systems},
  volume = 11,
  issue = 1,
  pages = {20210005},
  year = 2023,
  url = {https://www.degruyter.com/document/doi/10.1515/jci-2021-0005/html},
}

@book{Kechris95,
    AUTHOR = {Kechris, Alexander S.},
     TITLE = {Classical Descriptive Set Theory},
    SERIES = {Graduate Texts in Mathematics},
    VOLUME = {156},
 PUBLISHER = {Springer-Verlag, New York},
      YEAR = {1995},
}

@article{GeigerVermaPearl1990,
  author    = {Dan Geiger and Thomas Verma and Judea Pearl},
  title     = {Identifying independence in {B}ayesian networks},
  journal   = {Networks},
  volume    = {20},
  number    = {5},
  pages     = {507--534},
  year      = {1990},
}

@inproceedings{VermaPearl1988,
  author    = {Thomas Verma and Judea Pearl},
  title     = {Causal networks: Semantics and expressiveness},
  booktitle = {Proceedings of the Fourth Conference on Uncertainty in Artificial Intelligence ({UAI})},
  pages     = {352--359},
  year      = {1988},
}

@article{LauritzenDawidLarsenLeimer1990,
  author    = {S.L. Lauritzen and A.P. Dawid and B.N. Larsen and H.-G. Leimer},
  title     = {Independence properties of directed {M}arkov fields},
  journal   = {Networks},
  volume    = {20},
  number    = {5},
  pages     = {491--505},
  year      = {1990},
}

@book{Lauritzen1996,
  title = {Graphical Models},
  author = {S.L. Lauritzen},
  publisher = {Clarendon Press},
  address = {Oxford},
  year = 1996,
  series = {Oxford Statistical Science Series},
  volume = 17,
}

@article{ZiglerPapadogeorgou2021,
  title = {Bipartite Causal Inference with Interference},
  author = {Corwin M. Zigler and Georgia Papadogeorgou},
  journal = {Statistical Science},
  year = 2021,
  volume = 36,
  number = 1,
  pages = {109-123},
  doi = {10.1214/19-STS749},
}

@inproceedings{Sharifian++2025,
 author = {Sharifian, Ehsan and Salehkaleybar, Saber and Kiyavash, Negar},
 booktitle = {Advances in Neural Information Processing Systems ({NeuRIPS 2025})},
 editor = {D. Belgrave and C. Zhang and H. Lin and R. Pascanu and P. Koniusz and M. Ghassemi and N. Chen},
 pages = {63520--63538},
 publisher = {Curran Associates, Inc.},
 title = {Near-Optimal Experiment Design in Linear non-{G}aussian Cyclic Models},
 volume = {38},
 year = {2025},
}

@inproceedings{BunusFritzson2002,
  author = {P. Bunus and P. Fritzson},
  booktitle = {2nd International Modelica Conference Proceedings},
  year = 2002,
  pages = {157-165},
  title = {Methods for Structural Analysis and Debugging of Modelica Models},
}

@article{Frisk++2012,
  author={Frisk, Erik and Bregon, Anibal and Aslund, Jan and Krysander, Mattias and Pulido, Belarmino and Biswas, Gautam},
  journal={IEEE Transactions on Systems, Man, and Cybernetics - Part A: Systems and Humans},
  title={Diagnosability Analysis Considering Causal Interpretations for Differential Constraints},
  year={2012},
  volume={42},
  number={5},
  pages={1216-1229},
  doi={10.1109/TSMCA.2012.2189877},
}

@article{KrysanderNyberg2002,
title = {Structural Analysis for Fault Diagnosis of DAE Systems Utilizing MSS Sets},
journal = {IFAC Proceedings Volumes},
volume = {35},
number = {1},
pages = {143-148},
year = {2002},
note = {15th IFAC World Congress},
issn = {1474-6670},
doi = {10.3182/20020721-6-ES-1901.00755},
url = {https://www.sciencedirect.com/science/article/pii/S147466701539176X},
author = {Mattias Krysander and Mattias Nyberg},
}

@article{HopcroftKarp1973,
title = {An $n^{5/2}$ algorithm for maximum matchings in bipartite graphs},
author = {Hopcroft, John E. and Karp, Richard M.},
year = {1973},
journal = {SIAM Journal on Computing},
volume = 2,
issue = 4,
pages = {225-231},
doi = {10.1137/0202019},
}

\newpage

\onecolumn

\title{Causal Reasoning with Bipartite Graphical Causal Models\\(Supplementary Material)}
\maketitle

This Supplementary Material contains proofs of the main results and additional details.

\appendix

\section{Measurability of Solution Functions}\label{app:measurable-solutions}

We use the following standard measurable-graph fact to justify that the uniquely
defined solution functions appearing in the main text are measurable.

\begin{lemma}[Measurability of uniquely defined solution maps]\label{lem:measurable_solution}
  Let $\mathcal{X}$ and $\mathcal{Y}$ be standard Borel spaces, let
  $\mathcal{Z}$ be a measurable space, let $z_0 \in \mathcal{Z}$ be such that
  $\{z_0\}$ is measurable, and let
  $H:\mathcal{X}\times\mathcal{Y}\to\mathcal{Z}$ be measurable. Suppose that
  for every $x\in\mathcal{X}$ there exists a unique $y=:\psi(x)\in\mathcal{Y}$
  such that $H(x,y)=z_0$. Then $\psi:\mathcal{X}\to\mathcal{Y}$ is measurable.
\end{lemma}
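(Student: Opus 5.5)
The approach is to show that the graph of $\psi$ is a Borel subset of the standard Borel space $\mathcal{X}\times\mathcal{Y}$, and then to invoke the classical graph theorem for Borel functions between standard Borel spaces \citep[see][]{Kechris95}.

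\textbf{Step 1 (the graph is Borel).} Consider $\Gamma := H^{-1}(\{z_0\}) \subseteq \mathcal{X}\times\mathcal{Y}$. This set is measurable because $H$ is measurable and $\{z_0\}$ is a measurable set. By the uniqueness hypothesis, for each $x$ the point $(x,y)$ lies in $\Gamma$ exactly when $y=\psi(x)$. Hence $\Gamma = \{(x,\psi(x)) : x\in\mathcal{X}\}$ is precisely the graph of $\psi$. The product of two standard Borel spaces is again standard Borel, so $\Gamma$ is a Borel subset of a standard Borel space.

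\textbf{Step 2 (Borel graph implies Borel function).} Take a Borel set $B\subseteq\mathcal{Y}$ and let $\pi_{\mathcal{X}}$ denote the projection onto $\mathcal{X}$. Then
\[
\psi^{-1}(B) = \pi_{\mathcal{X}}\bigl(\Gamma\cap(\mathcal{X}\times B)\bigr) \quad\text{and}\quad \mathcal{X}\setminus\psi^{-1}(B) = \pi_{\mathcal{X}}\bigl(\Gamma\cap(\mathcal{X}\times(\mathcal{Y}\setminus B))\bigr).
\]
Both sets are projections of Borel sets, so both are analytic. By Suslin's theorem, a set that is analytic and has an analytic complement is Borel. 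Therefore $\psi^{-1}(B)$ is Borel, and $\psi$ is measurable. A shortcut is also available: $\pi_{\mathcal{X}}$ restricted to $\Gamma$ is an injective Borel map on a Borel set, so by the Lusin--Souslin theorem it sends Borel sets to Borel sets. This gives the same conclusion directly.

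\textbf{Main obstacle.} The only nontrivial ingredient is Step 2. A projection of a Borel set need not be Borel in general, so naive arguments fail. The point is that injectivity, which comes from the uniqueness hypothesis, combined with the standard Borel structure of both spaces, is what makes Suslin's theorem or the Lusin--Souslin theorem applicable. I would therefore state Step 2 as an appeal to Kechris's results rather than reprove it. One should also note that $\mathcal{Z}$ needs no structure beyond the measurability of $\{z_0\}$, since it enters only in Step 1.
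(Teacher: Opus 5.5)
Your proposal is correct and follows the paper's proof: the paper also identifies the graph of $\psi$ with the measurable set $H^{-1}(\{z_0\})$ and then invokes the Borel graph theorem for standard Borel spaces (Kechris, 14.12). Your Step 2 unpacks that cited theorem via Suslin's theorem or the Lusin--Souslin theorem; this is correct, but the paper does not need it, since it cites the result directly.
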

\begin{proof}
  The graph of $\psi$ is
  \[
    \Gamma_\psi
    = \{(x,y)\in\mathcal{X}\times\mathcal{Y}: H(x,y)=z_0\}.
  \]
  This set is measurable because it is the inverse image of $\{z_0\}$ under the
  measurable map $(x,y)\mapsto H(x,y)$.
  Hence, by \citep[14.12]{Kechris95}, $\psi$ is measurable.
\end{proof}

\begin{corollary}[Measurability of cluster solution functions]\label{cor:cluster_solution_measurable}
  Under Assumption~\ref{ass:leu}, suppose an endogenous cluster $[c]$ is uniquely
  solvable in the sense of Definition~\ref{def:cluster_uniquely_solvable}. Then
  its local solution function
  \[
    \Phi^{[c]}:\mathcal{X}_{\pa_{\vv{G}}([c])}\to\mathcal{X}_{[c]\cap V}
  \]
  is measurable.
\end{corollary}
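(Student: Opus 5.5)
The plan is to reduce the corollary to Lemma~\ref{lem:measurable_solution} by packaging the cluster's equations into a single measurable map. Write $P := \pa_{\vv{G}}([c])$ and $W := [c] \cap V$, and set $\mathcal{X} := \mathcal{X}_P$ and $\mathcal{Y} := \mathcal{X}_W$. Finite products of standard Borel spaces are standard Borel, so both are standard Borel. Take $\mathcal{Z} := \mathbb{R}^{F \cap [c]}$ with its Borel $\sigma$-algebra and $z_0 := 0$; singletons in $\mathbb{R}^n$ are closed, hence measurable.

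Next I define
\[ H(x_P, y_W) := \bigl(\phi_j(x_{\mathrm{nb}(f_j)})\bigr)_{j \in F \cap [c]}. \]
For this to be well defined, each $\mathrm{nb}(f_j)$ with $f_j \in [c]$ must be contained in $P \cup W$. This holds by Definition~\ref{def:partial-orientation}. Any variable $v$ adjacent to $f_j$ satisfies either $v \sim f_j$, in which case $v \in W$, or $v \not\sim f_j$, in which case the edge is $v \to f_j$ and so $v \in \pa_{\vv{G}}([c])$. Hence $H$ is obtained from the measurable functions $\phi_j$ by composing with coordinate projections and then forming a tuple. Each component is measurable, so $H$ is measurable into the product Borel $\sigma$-algebra.

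By Definition~\ref{def:cluster_uniquely_solvable}, for every $x_P$ there is a unique $y_W$ with $H(x_P, y_W) = 0$, and $\Phi^{[c]}(x_P)$ is exactly this $y_W$. Lemma~\ref{lem:measurable_solution} therefore gives measurability of $\Phi^{[c]}$ directly.

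The only delicate step is reading unique solvability as ``for every $x_P \in \mathcal{X}_P$ there exists exactly one solution''. Two points need care here. First, the whole product space $\mathcal{X}_P$ must be the domain. Variation independence (Assumption~\ref{ass:leu}) is the relevant hypothesis for the exogenous parents, but the corollary only concerns the local map, so I take the definition at face value on the full product. Second, the parent set $P$ may include other endogenous variables. That causes no difficulty, because the local map is defined on their full value spaces.
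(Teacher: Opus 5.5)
Your proposal is correct and matches the paper's proof: it packages the cluster's equations into a measurable map $H:\mathcal{X}_{\pa_{\vv{G}}([c])}\times\mathcal{X}_{[c]\cap V}\to\mathbb{R}^{F\cap[c]}$ and applies Lemma~\ref{lem:measurable_solution} with $z_0=0$. Your explicit argument that $\mathrm{nb}(f_j)\subseteq([c]\cap V)\cup\pa_{\vv{G}}([c])$ from the partial orientation fills in a step the paper only asserts.
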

\begin{proof}
  Write
  $\mathcal{X}:=\mathcal{X}_{\pa_{\vv{G}}([c])}$ and
  $\mathcal{Y}:=\mathcal{X}_{[c]\cap V}$. For each equation
  $f_j\in F\cap[c]$, the variables occurring in $f_j$ are contained in
  $([c]\cap V)\cup\pa_{\vv{G}}([c])$. Hence the measurable equation map
  $\phi_j$ induces a measurable function of $\mathcal{X}\times \mathcal{Y} \to \mathbb{R}$. 
  Collect these equations into the measurable map
  \[
    H : \mathcal{X} \times \mathcal{Y} \to \mathbb{R}^{F\cap[c]} : (x,y) \mapsto (\phi_j(x,y))_{j\in F\cap[c]}.
  \]
  Unique solvability says that for every
  parent value $x\in\mathcal{X}$ there is a unique $y=\Phi^{[c]}(x)$ such that
  $H(x,y)=0$. Lemma~\ref{lem:measurable_solution} therefore implies that
  $\Phi^{[c]}$ is measurable.
\end{proof}
Since the graph has finitely many clusters, recursive substitution of the measurable
local solution functions along the causal ordering shows that the global solution
function $\Phi:\mathcal{X}_U\to\mathcal{X}_V$ of Proposition~\ref{prop:global-solution}
is measurable as well.

\section{Preliminaries on $d$-Separation and $D$-Separation}\label{app:preliminaries}

We recall the standard notion of $d$-separation, introduced by \citet{VermaPearl1988} (see also \citealp{Lauritzen1996,Pearl2009}), and, for graphs with deterministic relations, that of $D$-separation \citep{GeigerVermaPearl1990}.

\begin{definition}[$d$-blocking]\label{def:d-blocking}
  A walk $v_1 \dots v_k$ on a DAG $G$ is $d$-blocked by $C \subseteq V$, if it contains:
  \begin{itemize}
    \item a \emph{collider} $v_{i-1} \to v_i \ot v_{i+1}$ with $v_i \notin \anc_G(C)$, or
    \item a \emph{non-collider} (possibly endpoint) $v_i \in C$.
  \end{itemize}
  Otherwise, the walk is called \emph{$d$-open given $C$}.
\end{definition}
\begin{definition}\label{def:d-sep}
Let $A, B, C \subseteq V$ be sets of variable nodes in a DAG $G$. We say that \emph{$A$ and $B$ are $d$-separated given $C$ in $G$}, written
\[
A \Perp^{d}_{G} B \given C,
\]
if every walk from a node in $A$ to a node in $B$ is $d$-blocked by $C$ in $G$.\footnote{It suffices if every path from a node in $A$ to a node in $B$ is $d$-blocked by $C$ in $G$, yielding an equivalent formulation of $d$-separation that is easier to check manually.}
\end{definition}

\begin{definition}[Functionally determined in an acyclic SCM]\label{def:functionally_determined_DAG}
  Let $C \subseteq V$ be a subset of nodes in an acyclic SCM with graph $G$, with exogenous nodes $U$ and endogenous nodes $V \sm U$.
  Define $C_0 := C$ and
  \[ C_{n+1} := C_n \cup \{ v \in V \sm U : \pa_{G}(v) \subseteq C_n \}. \]
  We define $\fdet_{G}(C) := \bigcup_{n \ge 0} C_n$ and refer to those as the nodes that are \emph{functionally determined by $C$}.
\end{definition}
The following definition is inspired by \citep{GeigerVermaPearl1990}.
\begin{definition}[$D$-separation]\label{def:D-sep}
Let $A,B,C\subseteq V$ be sets of nodes in an acyclic SCM with graph $G$ (exogenous nodes $U$, endogenous $V\sm U$). We say that \emph{$A$ and $B$ are $D$-separated given $C$ in $G$}, written $A \Perp^{D}_{G} B \given C$, if $A \Perp^{d}_{G} B \given \fdet_{G}(C)$ (formulation~$(2)$ of Lemma~\ref{lem:D-sep}).
\end{definition}
This formulation of $D$-separation (which corresponds with formulation (2) in the following lemma) is equivalent to two other formulations:
\begin{lemma}\label{lem:D-sep}
  Let $A,B,C$ be sets of nodes in an acyclic SCM with graph $G$, with exogenous nodes $U$ and endogenous nodes $V \sm U$.
  Let $\fdet_{G}(C)$ be the nodes in the DAG that are functionally determined by $C$ (as in Definition~\ref{def:functionally_determined_DAG}).
  The following three formulations of $D$-separation are equivalent:
  \begin{enumerate}
    \item all walks between a node in $A$ and a node in $B$ contain
      \begin{enumerate}
        \item a collider not in $\anc_G(C)$
        \item a non-endpoint non-collider in $\fdet_{G}(C)$
        \item an end node in $\fdet_{G}(C)$
      \end{enumerate}
    \item all walks between a node in $A$ and a node in $B$ contain
      \begin{enumerate}
        \item a collider not in $\anc_G(\fdet_{G}(C))$
        \item a non-endpoint non-collider in $\fdet_{G}(C)$
        \item an end node in $\fdet_{G}(C)$
      \end{enumerate}
      ($\iff$ $A$, $B$ are $d$-separated given $\fdet_{G}(C)$)
    \item all walks between a node in $A$ and a node in $B$ contain
      \begin{enumerate}
        \item a collider not in $\anc_G(C)$
        \item a non-endpoint non-collider in $C$
        \item an end node in $\fdet_{G}(C)$
        \item a fork in $\fdet_{G}(C)$
      \end{enumerate}
  \end{enumerate}
\end{lemma}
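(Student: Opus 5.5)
All three equivalences can be proved walk by walk: for a fixed walk between $A$ and $B$, it satisfies the blocking clause of one formulation if and only if it satisfies the clause of each other formulation. No walk modification is needed, so the quantifier ``all walks'' passes through unchanged. The proof rests on one structural fact about Definition~\ref{def:functionally_determined_DAG}, which I would establish first:
\[
  v \in \fdet_G(C) \sm C \implies v \in V\sm U \text{ and } \pa_G(v) \subseteq \fdet_G(C).
\]
To see this, let $n$ be minimal with $v \in C_{n+1}$. Then $v \notin C_n$, so $v$ was added at step $n+1$, which means $v$ is endogenous and $\pa_G(v) \subseteq C_n \subseteq \fdet_G(C)$. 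I will call this the \emph{parent-closure property}.

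\textbf{(1)$\iff$(2).} Clauses (b) and (c) coincide, and $\anc_G(C) \subseteq \anc_G(\fdet_G(C))$. Hence (2a) implies (1a), and so (2)$\Rightarrow$(1).

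For (1)$\Rightarrow$(2), fix a walk that is blocked in the sense of (1) but suppose it is not blocked in the sense of (2). It then has no non-collider or end node in $\fdet_G(C)$, and it has a collider $v_{i-1}\to w \ot v_{i+1}$ with $w \in \anc_G(\fdet_G(C)) \sm \anc_G(C)$. Choose a directed path $w = p_0 \to p_1 \to \dots \to p_m$ with $p_m \in \fdet_G(C)$, and let $p_j$ be the first node on it that lies in $\fdet_G(C)$.

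If $p_j \in C$, then $w \in \anc_G(C)$, which is a contradiction.

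Otherwise the parent-closure property applies to $p_j$. If $j>0$, then $p_{j-1} \in \pa_G(p_j) \subseteq \fdet_G(C)$, contradicting the minimality of $j$. If $j=0$, then $w \in \fdet_G(C)\sm C$, so its parent $v_{i-1}$ lies in $\fdet_G(C)$. On the walk $v_{i-1}$ has an outgoing edge towards $w$, so it is not a collider: it is either an end node or a non-endpoint non-collider lying in $\fdet_G(C)$. Either way the walk is blocked in the sense of (2), a contradiction. (The case $j=0$, $w\in C$, is covered by the case $p_j\in C$.)

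\textbf{(3)$\Rightarrow$(1).} This is immediate. Clauses (a) and (c) coincide. A non-endpoint non-collider in $C$ lies in $\fdet_G(C)$. A fork is a non-endpoint non-collider, so a fork in $\fdet_G(C)$ satisfies (1b).

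\textbf{(1)$\Rightarrow$(3).} This is the step that needs the argument. The only case not already covered is a walk blocked by a non-endpoint non-collider $v_i \in \fdet_G(C)\sm C$ that is not a fork. Such a $v_i$ is a chain node, so, after reversing the orientation of the walk if necessary, it has the form $v_{i-1}\to v_i \to v_{i+1}$.

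I would then walk backwards against the arrows. By the parent-closure property, $v_{i-1}\in\fdet_G(C)$, and $v_{i-1}$ has an outgoing edge towards $v_i$, so it is not a collider. There are four possibilities for $v_{i-1}$:
\begin{itemize}
\item it is an end node in $\fdet_G(C)$, giving (3c);
\item it is a fork in $\fdet_G(C)$, giving (3d);
\item it is a non-collider in $C$, giving (3b);
\item it is a chain node $v_{i-2}\to v_{i-1}\to v_i$ lying in $\fdet_G(C)\sm C$, in which case we repeat the argument with $v_{i-1}$ in place of $v_i$.
\end{itemize}
The walk is finite, so the iteration terminates in one of the first three cases.

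The main (mild) obstacle is bookkeeping. The walk direction must be chosen consistently, and walks may revisit nodes; this is harmless because each step moves strictly towards the end node $v_1$ along the walk's index. Finally, all three formulations treat end nodes via the same clause (c), so endpoints need no special care.
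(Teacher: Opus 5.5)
Your proof is correct and follows the paper's argument essentially step for step. You use the same parent-closure property of $\fdet_G(C)$, the same first-hit argument along a directed path from the collider to $\fdet_G(C)$ for (1)$\Leftrightarrow$(2), and the same backward walk along parent-neighbors to rule out chain nodes in $\fdet_G(C)\sm C$ for (1)$\Leftrightarrow$(3). You phrase it via ``blocked'' rather than ``active'', and your termination argument via the walk index is, if anything, slightly cleaner than the paper's appeal to acyclicity.

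The only thing you leave implicit is the parenthetical identification of formulation (2) with $d$-separation given $\fdet_G(C)$. That identification is immediate from Definition~\ref{def:d-blocking}.
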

\begin{proof}
Write $\bar C:=\fdet_G(C)$. From Definition~\ref{def:functionally_determined_DAG} we have $C\subseteq\bar C$ and
\begin{equation}\label{eq:fdet-parents}
  n\in\bar C\setminus C \;\Longrightarrow\; n\in V\setminus U \text{ and } \pa_G(n)\subseteq\bar C ;
\end{equation}
in particular $\anc_G(C)\subseteq\anc_G(\bar C)$. On a walk $\pi$ we call an internal node a \emph{collider} if both incident edges point into it, a \emph{fork} if both point out of it, and a \emph{chain} if one points in and one out; the two end nodes are treated separately. A \emph{parent-neighbor} of a node $n$ on $\pi$ is a neighbor $p$ on $\pi$ with $p\to n$ in $G$; thus a collider has two parent-neighbors, a chain has one, and a fork has none. Call $\pi$ \emph{$j$-active} if it is not blocked according to formulation $(j)$. We prove that the three notions of ``active'' coincide on every walk $\pi$ between $A$ and $B$; the equivalence of the three ``all walks are blocked'' statements is then immediate.

\emph{$(1)\Leftrightarrow(2)$.} The two criteria differ only in the collider clause, and $\anc_G(C)\subseteq\anc_G(\bar C)$, so a collider outside $\anc_G(\bar C)$ is also outside $\anc_G(C)$; as the other clauses coincide, every walk blocked under $(2)$ is blocked under $(1)$, i.e., every $1$-active walk is $2$-active.

Conversely, let $\pi$ be $2$-active: every collider lies in $\anc_G(\bar C)$, and no node of $\bar C$ occurs on $\pi$ as a non-collider or as an end node. Let $k$ be a collider of $\pi$; we show $k\in\anc_G(C)$. If $k\in\bar C$ then in fact $k\in C$: otherwise \eqref{eq:fdet-parents} gives $\pa_G(k)\subseteq\bar C$, so the two parent-neighbors of $k$ would be nodes of $\bar C$ occurring as non-colliders or end nodes, contradicting $2$-activity; hence $k\in C\subseteq\anc_G(C)$. If $k\notin\bar C$, pick a shortest directed path $k\to n_1\to\dots\to n_t$ with $n_t\in\bar C$ (one exists since $k\in\anc_G(\bar C)$), so that $n_1,\dots,n_{t-1}\notin\bar C$. Were $n_t\in\bar C\setminus C$, then \eqref{eq:fdet-parents} would place its predecessor on the path---$n_{t-1}$, or $k$ if $t=1$---in $\pa_G(n_t)\subseteq\bar C$, contradicting the choice of that predecessor outside $\bar C$. Hence $n_t\in C$ and $k\in\anc_G(C)$. So every collider of $\pi$ lies in $\anc_G(C)$ and $\pi$ is $1$-active.

\emph{$(1)\Leftrightarrow(3)$.} The collider clause and the end-node clause are identical in the two formulations. If $\pi$ is $1$-active it has no internal non-collider in $\bar C$; in particular it has no internal non-collider in $C$ and no fork in $\bar C$, so $\pi$ is $3$-active.

Conversely, let $\pi$ be $3$-active. To show it is $1$-active it suffices to rule out internal non-colliders in $\bar C$. Forks in $\bar C$ are excluded by $3$-activity, and chains in $C$ are excluded as well, so the only remaining possibility is a chain node $m\in\bar C\setminus C$; suppose one occurs. Define $q_0:=m$ and let $q_{i+1}$ be the parent-neighbor of $q_i$, continuing as long as $q_i$ is a chain in $\bar C\setminus C$ (which by \eqref{eq:fdet-parents} guarantees a parent-neighbor in $\bar C$). Each $q_i$ lies in $\bar C$, and $q_{i+1}\to q_i\to\dots\to q_0$ is a directed path in the DAG $G$, so the $q_i$ are distinct and the process stops, at some $q_k\in\bar C$. Since $q_k$ has an edge pointing out of it (toward $q_{k-1}$), it is not a collider. If $q_k$ is an end node, then an end node lies in $\bar C$; if $q_k$ is a fork, then a fork lies in $\bar C$; and if $q_k$ is a chain, then---the process having stopped---$q_k\in C$, so an internal non-collider lies in $C$. Each case contradicts $3$-activity. Hence no chain node of $\pi$ lies in $\bar C\setminus C$, so $\pi$ has no internal non-collider in $\bar C$ and is $1$-active.

Thus the three notions of ``active'' (equivalently, of ``blocked'') coincide on every walk, and the three formulations of $D$-separation are equivalent. 
By definition, formulation $(2)$ is $d$-separation of $A$ and $B$ given $\bar C$.
\end{proof}
\citet{GeigerVermaPearl1990} defined $D$-separation (restricted to disjoint $A,B,C$) with formulation $(3)$, and showed that it is equivalent to formulation $(1)$.
What Lemma~\ref{lem:D-sep} adds is the equivalence with formulation~$(2)$: $D$-separation given $C$ coincides with ordinary $d$-separation given the enlarged conditioning set $\bar C=\fdet_G(C)$.\footnote{This equivalence was observed by Claude Code.} This reduction is useful because it lets us fall back on the theory of $d$-separation, which is considerably wider in scope than that of $D$-separation: it extends to cyclic systems through $\sigma$-separation \citep{ForreMooij_1710.08775} and underpins a broad range of Markov-property, completeness, and algorithmic results that have no direct $D$-separation counterpart. 

\section{Proof of the Markov Property}\label{app:direct-proof}

Our strategy to prove the Markov property for BGCMs (Theorem~\ref{thm:Markov_B}) will be as follows.

We will first ignore deterministic relations and prove a weaker Markov property using a separation notion that we call $b$-separation (lowercase $b$ for ``bipartite'').
This separation notion is designed to correspond to $d$-separation on the \emph{acyclification}, a directed acyclic graph constructed from the partially ordered bipartite graph.
This mimics the acyclification strategy for cyclic SCMs \citep{Spirtes95,ForreMooij_1710.08775,Bongers++_AOS_21}.
However, we do not make the clusters (corresponding to strongly connected components in SCMs) fully connected, because we typically work with the augmented graph that contains all nodes, including exogenous random variable nodes, and there is no reason to assume a latent noise source feeds into a cycle.
From the equivalence of $b$-separation on the partially oriented bipartite graph and $d$-separation on its acyclification (Lemma~\ref{lem:b-sep_equiv_d-sep}), we then prove a $b$-separation Markov property 
(Theorem~\ref{thm:Markov_b}) by reduction to the standard Markov property for acyclic SCMs.

We then strengthen the Markov property by taking into account determinism, analogous to how $D$-separation \citep{GeigerVermaPearl1990} strengthens $d$-separation in Bayesian networks.
The key intuition is: once every parent of a cluster is fixed by the conditioning information, all variables in the cluster are fixed too.
This leads directly to the notion of $B$-separation.
Similarly to how $D$-separation is related to $d$-separation, $B$-separation can be expressed in terms of $b$-separation (Lemma~\ref{lem:B-eq-b-fdet}).
This observation allows us to obtain Theorem~\ref{thm:Markov_B} as a Corollary of Theorem~\ref{thm:Markov_b}.

\subsection{$b$-Separation Markov Property}

We repeatedly make use of the following elementary consequences of the definitions:
\begin{itemize}
  \item Directed edges in $\vv{G}$ always point from variable to equation;
  \item Parents of a cluster are variables;
  \item For a walk between two variable nodes, all segment exits will be variable nodes;
  \item The end nodes of the walk are always exits of their segments.
\end{itemize}
We will also use that clusters are connected by double-undirected edges.
\begin{lemma}[Double-edge connectivity of clusters]\label{lem:cluster_double_connected}
  Let $a,b \in V \cup F$. If $a \sim b$, then there exists a possibly trivial walk from
  $a$ to $b$ in $\vv{G}$ that uses only $\otto$ edges and whose nodes all lie in $[a]=[b]$.
\end{lemma}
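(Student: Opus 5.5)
The plan is to unfold the definition of $\sim$ as the equivalence relation \emph{generated} by two kinds of elementary relations, and to check that each of them is witnessed by a walk of the required kind. Since $\sim$ is the reflexive--symmetric--transitive closure of these relations, $a \sim b$ means there is a finite chain $a = c_0, c_1, \dots, c_m = b$ in which each consecutive pair $c_{i-1}, c_i$ either satisfies $c_{i-1} \ttt c_i \in M$ (in either direction), or lies on a common closed $M$-alternating walk. Walks can be reversed and concatenated, and every $c_i$ lies in $[a]$. So it suffices to treat a single elementary step: produce a walk between $c_{i-1}$ and $c_i$ that uses only $\otto$ edges and stays inside $[a]$. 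The case $m=0$ is handled by the trivial walk.

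For a matched pair $c_{i-1} \ttt c_i \in M$, the matched edge itself is the required walk. The paper already notes that matched edges are oriented $\otto$, since matched endpoints are equivalent by definition.

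For a pair lying on a closed $M$-alternating walk $w$, I will show that every node on $w$ is equivalent to $c_{i-1}$ (and hence lies in $[a]$), and that every edge of $w$ is oriented $\otto$. The first claim is immediate: by definition, any two nodes on a common closed alternating walk are related, so all nodes of $w$ lie in one cluster. For the second claim, let $v \ttt f$ be an edge of $w$. Both endpoints lie on $w$, so $v \sim f$, and Definition~\ref{def:partial-orientation} then orients the edge as $v \otto f$. Finally, I take the sub-walk of $w$ running from $c_{i-1}$ to $c_i$ (following $w$ in one direction). It consists only of $\otto$ edges, and all its nodes lie in the cluster.

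The one point I expect to require care is whether the "lie on a closed alternating walk" clause includes the edges of the walk, or only its nodes. As argued above, this is not an issue: the orientation of an edge depends only on whether its endpoints are equivalent, and the endpoints of every edge of $w$ are nodes of $w$. A second minor point is that exogenous nodes never occur on alternating walks or matched edges, because $M$ covers only $(V\sm U)\cup F$ and an alternating walk must use matched edges at every node. Consequently, in the exogenous case $a \sim b$ forces $a=b$, and the trivial walk suffices.

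Concatenating the walks obtained for the steps $c_0, \dots, c_m$ then yields the desired walk from $a$ to $b$.
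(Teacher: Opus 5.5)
Your proposal is correct and follows essentially the same route as the paper. You reduce to the generating relations of $\sim$ and concatenate; you use the matched edge itself in the first case; and in the second case you take the sub-walk of the closed $M$-alternating walk, whose edges are all oriented $\otto$ because both endpoints of each edge lie on that walk and are therefore equivalent.
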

\begin{proof}
  It suffices to prove the claim for each generating relation in
  Definition~\ref{def:equivalence}, since walks can then be concatenated along a finite
  chain of such relations. If $a \ttt b$ is a matched edge, then $a \sim b$, so this edge is
  oriented as $a \otto b$ in $\vv{G}$ by Definition~\ref{def:partial-orientation}. If $a$
  and $b$ lie on a common closed $M$-alternating walk, take the subwalk of that closed walk
  from $a$ to $b$. Every edge on this subwalk has both endpoints on the same closed
  $M$-alternating walk, hence its endpoints are equivalent; by
  Definition~\ref{def:partial-orientation}, each such edge is therefore oriented as
  $\otto$. The resulting walk stays inside the common equivalence class.
\end{proof}
We first define the appropriate acyclification.
\begin{definition}
  For a partially oriented bipartite graph $\vv{G}$ with variable nodes $V$ and equation nodes $F$, we define its \emph{acyclification} as the directed acyclic graph $\vv{G}^\acy = (V,E^\acy)$ with nodes $V$ and with a directed edge $v \to v'$ for $v,v' \in V$ if and only if $v \in \pa_{\vv{G}}([v'])$.
\end{definition}
First we show that ``anterior in $\vv{G}$'' (at the node level) corresponds to ``ancestral in $\vv{G}^\acy$'' (at the cluster level).
\begin{lemma}\label{lem:acyclification_ancestral}
  For nodes $a,b \in V$:
  \[ a \in \ant_{\vv{G}}(b) \iff [a]\cap\anc_{\vv{G}^\acy}(b)\neq\emptyset. \]
\end{lemma}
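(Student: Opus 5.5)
We prove the two implications separately, using two structural facts about $\vv{G}$. Directed edges always run from a variable node to an equation node, and clusters are connected by $\otto$ edges (Lemma~\ref{lem:cluster_double_connected}). A $\to$ edge $v \to f$ in $\vv{G}$ with $v \in V$ and $f \in F$ means $v \notin [f]$ and $v \in \pa_{\vv{G}}([f])$. Every cluster $[f]$ containing an equation is endogenous and so contains a matched variable $v'$ with $[v'] = [f]$; hence $v \to v'$ is an edge of $\vv{G}^\acy$.

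($\Rightarrow$) Take a walk $a = n_0, \dots, n_k = b$ in $\vv{G}$ using only $\to$ and $\otto$ edges. Group consecutive nodes by cluster. An $\otto$ edge joins two equivalent nodes (Definition~\ref{def:partial-orientation}), so it stays inside a cluster. A $\to$ edge $n_{i-1} \to n_i$ leaves the cluster $[n_{i-1}]$, whose member $n_{i-1}$ is a variable and a parent of $[n_i]$. The walk therefore induces a sequence of clusters $[a] = K_0, K_1, \dots, K_r = [b]$, where each transition is a $\to$ edge from some variable $w_j \in K_{j-1}$ into $K_j$.

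The point to handle carefully is that the walk enters $K_j$ at one variable and may leave it from another. In $\vv{G}^\acy$ every variable of $K_{j-1}$ that is a parent of $K_j$ points to every variable of $K_j$, because the edge set only depends on $\pa_{\vv{G}}([v'])$. So $w_j \to w_{j+1}$ for $j \ge 1$. Chaining these edges gives a directed path $w_1 \to w_2 \to \dots \to w_r \to b$ in $\vv{G}^\acy$, since $w_r \in \pa_{\vv{G}}([b])$. Here $w_1 \in [a]$, so $[a] \cap \anc_{\vv{G}^\acy}(b) \neq \emptyset$. If $r = 0$, then $a \sim b$ and $a \in [a] \cap \anc(b)$ trivially.

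($\Leftarrow$) Let $a' \in [a]$ with a directed path $a' = u_0 \to u_1 \to \dots \to u_m = b$ in $\vv{G}^\acy$. By Lemma~\ref{lem:cluster_double_connected} there is an $\otto$-walk from $a$ to $a'$. For each edge $u_{i-1} \to u_i$ we have $u_{i-1} \in \pa_{\vv{G}}([u_i])$, so there is an edge $u_{i-1} \to f$ in $\vv{G}$ with $f \in [u_i]$. Again by Lemma~\ref{lem:cluster_double_connected} there is an $\otto$-walk from $f$ to $u_i$. Concatenating these pieces yields a walk from $a$ to $b$ using only $\to$ and $\otto$ edges, so $a \in \ant_{\vv{G}}(b)$.

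The main obstacle is the bookkeeping in ($\Rightarrow$): the variable at which the walk enters a cluster need not be the one from which it leaves. This is resolved by noting that acyclification edges depend only on cluster parenthood.
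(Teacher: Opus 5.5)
Your proof is correct and follows the paper's argument. In ($\Rightarrow$), your $w_j$ (the tail of the $j$-th $\to$ edge) is exactly the paper's outgoing node $s_{i,k_i}$ of each $\otto$-segment, giving the directed chain $w_1\to\dots\to w_r\to b$ in $\vv{G}^\acy$. In ($\Leftarrow$), you use the same lift $u_{i-1}\to f\otto\cdots\otto u_i$ via Lemma~\ref{lem:cluster_double_connected}.

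One small slip: when $r=0$ the witness must be $b$, not $a$. The node $b$ lies in $[a]=[b]$ and in $\anc_{\vv{G}^\acy}(b)$. By contrast, $\vv{G}^\acy$ has no edges within a cluster, so $a\notin\anc_{\vv{G}^\acy}(b)$ unless $a=b$.
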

\begin{proof}
  Let $\pi$ be an anterior walk in $\vv{G}$, that is, a walk of the form
  \[ s_{1,1} \otto \dots \otto s_{1,k_1} \to s_{2,1} \otto \dots \otto s_{2,k_2} \to \dots \to s_{m,1} \otto \dots \otto s_{m,k_m} \]
  which we partitioned into maximal subwalks $s_i$ of equivalent nodes, each $s_i$ being of the form $s_{i,1} \otto \dots \otto s_{i,k_i}$ (with possibly $k_i = 1$).
  We project it onto a directed walk in $\vv{G}^\acy$ by picking from each segment $s_i$ the outgoing node $s_{i,k_i}$:
  \[ s_{1,k_1} \to s_{2,k_2} \to \dots \to s_{m,k_m}. \]
  Hence, if $s_{1,1}$ is anterior to $s_{m,k_m}$ in $\vv{G}$, then $s_{1,k_1}$ is an ancestor of $s_{m,k_m}$ in $\vv{G}^\acy$.
  Since $s_{1,k_1} \in [s_{1,1}]$, the claim follows.

  Vice versa, let $\pi^\acy$ be a directed walk in $\vv{G}^\acy$:
  \[ v_1 \to \dots \to v_m. \]
  By definition, each edge in $\pi^\acy$ connects variables in different clusters of $\vv{G}$.
  The edge $v_i \to v_{i+1}$ in $\vv{G}^\acy$ (with $v_i \in \pa_{\vv{G}}([v_{i+1}])$) can be lifted to a walk in $\vv{G}$ as follows.
  Choose an equation $f_i \in F \cap [v_{i+1}]$ with $v_i \to f_i$ in $\vv{G}$, and connect $f_i$ to $v_{i+1}$ by a $\otto$-walk within $[v_{i+1}]$ using Lemma~\ref{lem:cluster_double_connected}, resulting in the lift $v_i \to f_i \otto \cdots \otto v_{i+1}$ in $\vv{G}$.
  Concatenating these lifts yields an anterior walk in $\vv{G}$ from $v_1$ to $v_m$.
  By concatenating this with the double-edge walk from Lemma~\ref{lem:cluster_double_connected}, we may obtain an anterior walk in $\vv{G}$ from any node in $[v_1]$ to $v_m$.
  Hence, if a node in $[a]$ is ancestral to $b$ in $\vv{G}^\acy$, then $a$ itself is anterior to $b$ in $\vv{G}$.
\end{proof}

The following notion is related to the segment-based version of $\sigma$-separation \citep{ForreMooij_1710.08775}, but strengthens it by adding another way in which non-collider segments can block (the third rule).
\begin{definition}[$b$-blocking]\label{def:b-blocking}
  For $C \subseteq V$, the walk is called \emph{$b$-blocked by $C$} if it contains:
  \begin{enumerate}
    \item a collider segment that does not intersect $\ant_{\vv{G}}(C)$, or
    \item a non-collider segment that has an exit in $C$, or
    \item a non-collider segment with two distinct exits whose cluster has all its $\vv{G}$-parents in $C$.
  \end{enumerate}
  Otherwise, the walk is called \emph{$b$-open given $C$}.
\end{definition}
Note: Since the two end nodes qualify as exits, an end node in $C$ always $b$-blocks the walk.
\begin{definition}[$b$-separation]\label{def:b-sep}
Let $A, B, C \subseteq V$ be sets of variable nodes. We say that \emph{$A$ and $B$ are $b$-separated given $C$ in $\vv{G}$}, written
\[
A \Perp^{b}_{\vv{G}} B \given C,
\]
if every walk from a node in $A$ to a node in $B$ is $b$-blocked by $C$ in $\vv{G}$.\footnote{Thanks to rule~3 of Definition~\ref{def:b-blocking}, it is equivalent to require only that every \emph{path} from $A$ to $B$ be $b$-blocked by $C$ (Lemma~\ref{lem:b-sep_walk_path}); the path formulation is usually more convenient to check by hand.}
\end{definition}

The three rules in which $b$-separation blocks a walk mirror, segment by segment, the way $d$-separation blocks the corresponding structure in the acyclification $\vv{G}^\acy$ (Lemma~\ref{lem:b-sep_equiv_d-sep}): a collider segment projects to a collider whose center is a common child; a non-collider segment with a single exit $v$ (a chain, an endpoint, or a one-node fork $\ot v \to$) projects to a chain/fork centered at $v$, blocked iff $v \in C$; and a non-collider segment with two \emph{distinct} exits $a \neq b$ projects to a fork $a \ot p \to b$ through a common parent $p$, blocked iff $a \in C$ or $b \in C$ or every such $p$ lies in $C$.\footnote{Note that a one-node fork $\ot v \to$ has a single (distinct) exit and is thus governed by rule~2 only, not rule~3.}

We added the third rule to make $b$-separation via walks coincide with $b$-separation via paths.
\begin{lemma}[$b$-separation via walks or paths]\label{lem:b-sep_walk_path}
For all $A, B, C \subseteq V$, every walk from $A$ to $B$ is $b$-blocked by $C$ if and only if every path from $A$ to $B$ is $b$-blocked by $C$.
\end{lemma}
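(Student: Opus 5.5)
The direction ``every walk blocked $\Rightarrow$ every path blocked'' is trivial, since paths are walks. For the converse I argue by contraposition: given a walk $\pi$ from $a\in A$ to $b\in B$ that is $b$-open given $C$, I construct a path from $A$ to $B$ that is also $b$-open. Shortcutting $\pi$ directly at repeated nodes can create new non-collider segments or destroy the collider status of segments. So the plan is to route the construction through the acyclification $\vv{G}^\acy$ and lift back to $\vv{G}$.

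\textbf{Step 1 (projection).} Partition $\pi$ into segments. All exits of an open walk between variable nodes are variable nodes. Project $\pi$ onto a walk $\pi^\acy$ in $\vv{G}^\acy$ as follows. A collider segment, entered as $v\to\cdots\ot w$, becomes the collider $v\to x\ot w$, where $x$ is any variable node of that cluster. A one-exit non-collider segment becomes its exit node. A two-exit non-collider segment with exits $e\neq e'$ (or the degenerate $\ot v\to$) becomes $e\ot p\to e'$ for a common parent $p$ of the cluster.

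I then check that $\pi^\acy$ is $d$-open given $C$:
\begin{itemize}
\item Colliders lie in $\anc_{\vv{G}^\acy}(C)$ by Lemma~\ref{lem:acyclification_ancestral}, because the collider segment meets $\ant_{\vv{G}}(C)$.
\item Exits are not in $C$ by rule~2.
\item For two-exit segments, rule~3 lets me choose $p\notin C$, since not all parents of the cluster lie in $C$.
\end{itemize}
I need to verify that $p$ is a parent of both exits in $\vv{G}^\acy$. This holds because the parents of a cluster are parents of every variable in it, by the definition of the acyclification.

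\textbf{Step 2 (shortening in the DAG).} By the standard fact for $d$-separation (the path/walk equivalence in the footnote to Definition~\ref{def:d-sep}), a $d$-open walk from $a$ to $b$ in $\vv{G}^\acy$ yields a $d$-open path $\rho$ from $a$ to $b$. I recall the usual argument: cut at the repeated node, and if that creates a collider, use the ancestral path of a collider to $C$.

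\textbf{Step 3 (lifting).} Lift $\rho$ back to $\vv{G}$:
\begin{itemize}
\item An edge $v\to w$ of $\rho$ becomes $v\to f\otto\cdots\otto w$, using Lemma~\ref{lem:cluster_double_connected} inside $[w]$.
\item Two consecutive lifted edges entering the same cluster at a collider $v\to w\ot v'$ are merged through that cluster.
\item A fork $w\ot p\to w'$ with $[w]=[w']$ is merged into one segment within the cluster.
\end{itemize}
Because $\rho$ is a path in a DAG whose nodes are variables, and each cluster is visited along contiguous stretches, I then make the lifted walk a path. The $\otto$-connecting walks inside a cluster are replaced by simple paths, and clusters visited by $\rho$ at several non-contiguous places are merged by shortcutting inside the cluster.

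\textbf{Main obstacle.} The hard step is this final shortcutting, which must preserve openness. When a cluster is visited twice by the lifted walk, I jump between the two visits within the cluster via a $\otto$-path. The resulting single segment is a non-collider if either visit had an exit and a collider otherwise. I must then show:
\begin{itemize}
\item Its exits are not in $C$; they are inherited from $\rho$'s non-colliders.
\item If it has two distinct exits, not all of the cluster's parents lie in $C$.
\item If it is a collider, it meets $\ant_{\vv{G}}(C)$.
\end{itemize}
The second condition is exactly where rule~3 is needed: two exits of the merged segment correspond to $d$-open non-colliders of $\rho$ into the same cluster, which makes the cluster's parents transmitting. I would first try to choose $\rho$ so that it minimizes the number of cluster visits, which should remove this case. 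Failing that, I would handle it with a case analysis on the orientations at the two visits, using that a collider visit already implies membership in $\ant_{\vv{G}}(C)$, which then propagates to the merged segment.
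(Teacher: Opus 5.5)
\textbf{There is a genuine gap, and it sits in Step~3.}

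Steps~1--2 are essentially the ``$\Leftarrow$'' half of Lemma~\ref{lem:b-sep_equiv_d-sep} plus the standard DAG shortening. They work after two small repairs. First, a collider segment must project to some $x\in[c]\cap V\cap\anc_{\vv{G}^\acy}(C)$, not to an arbitrary variable of the cluster, because variables of one cluster are not ancestors of each other in $\vv{G}^\acy$. Second, a one-node fork $\ot v\to$ must project to $v$ itself: rule~3 does not apply to it, and an exogenous $v$ has no parent $p$.

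Step~3 carries all the content of the lemma, and you leave it open. The lift of a $d$-open path $\rho$ need not be a path. For example, if $\rho$ enters a cluster at two non-contiguous places through parents that both point only into the same equation $f$, both lifts pass through $f$. So shortcutting inside clusters is unavoidable, and you must prove it preserves $b$-openness. That is exactly the difficulty the detour through $\vv{G}^\acy$ was meant to avoid; it is only postponed. Moreover, after one shortcut the walk is no longer a lift of a DAG path, so a case analysis in terms of ``visits of $\rho$'' must be redone for general walks anyway. Your hope that minimizing cluster visits removes the problem is not substantiated.

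Your tentative plan for Step~3 is also unsound as stated. Jumping between two visits of a cluster via a $\otto$-path can destroy openness. Likewise, two $d$-open non-collider exits in one cluster do not force a cluster parent outside $C$. A counterexample:
\begin{itemize}
\item Take a cluster $[c]=\{f_1,f_2,a,b\}$ with all four edges present and matching $f_1\ttt a$, $f_2\ttt b$, whose single parent is $u\to f_1$.
\item Take a second cluster $\{h,y\}$ with $a\to h$, $b\to h$, $h\otto y$, and let $C=\{u,y\}$.
\item The path $a\to h\ot b$ is $b$-open: its collider segment $\{h\}$ is anterior to $y\in C$, and $a,b\notin C$. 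It is also exactly what your Step~3 produces from $\rho=a\to y\ot b$.
\item Your merge of the two visits of $[c]$ gives $a\otto f_1\otto b$. This is one segment with two distinct exits whose cluster has all its parents in $C$, so rule~3 blocks it.
\end{itemize}

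The missing idea is to shortcut only at a genuinely repeated node $x$ of a minimal-length $b$-open walk $\pi$ in $\vv{G}$ itself, after first deleting repeats inside segments. Let $s'$ and $s''$ be the segments of $\pi$ containing the two occurrences of $x$, and use their bounding edges.
\begin{itemize}
\item For rule~3: an edge pointing into $s'$ or $s''$ comes from a parent of $[x]$ that is an exit of a neighboring segment of $\pi$, hence not in $C$. An edge pointing out gives that segment two distinct exits, hence a parent outside $C$. The exception $s'=\{x\}$ is settled by examining $s''$.
\item For rule~1: if neither $s'$ nor $s''$ is a collider, the deleted subwalk leaves $[x]$ downward and re-enters from below. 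It therefore contains an activated collider segment to which $[x]$ is anterior.
\end{itemize}
This is the paper's proof, and it needs no acyclification.
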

\begin{proof}
Since paths are walks, ``all walks $b$-blocked'' implies ``all paths $b$-blocked''.

  Conversely, suppose there exists a $b$-open walk from $A$ to $B$, and among all such walks
  with the same end nodes choose one, say $\pi$, of minimal length. We show that $\pi$ has no
  repeated node, and hence is a $b$-open path.

  First note that no segment of $\pi$ contains the same node twice. Indeed, if a segment
  contains two occurrences of a node $y$, deleting the closed $\otto$-subwalk between these
  two occurrences gives a strictly shorter walk with the same end nodes. Only this segment is
  changed; its bounding directed edges, exits, and cluster remain the same. Hence rules~2
  and~3 of Definition~\ref{def:b-blocking} have the same truth value before and after the
  deletion. If the segment is a collider, then, since $\pi$ is $b$-open, it meets
  $\ant_{\vv{G}}(C)$. But all nodes in a segment are connected by $\otto$-walks, so if one
  node of the segment is anterior to $C$, then every node of the segment is anterior to $C$.
  Thus the shortened collider segment is still activated. The shortened walk is therefore
  $b$-open, contradicting the minimality of $\pi$.

  Now suppose, for contradiction, that $\pi$ nevertheless visits some node twice. 
  
  Write $\pi = n_0, e_1, \dots, e_k, n_k$, let $x = n_\mu = n_\nu$ with $\mu < \nu$ and let $\pi'$ be obtained by deleting $e_{\mu+1}, \dots, n_\nu$, i.e., $\pi' = n_0, \dots, n_\mu, e_{\nu+1}, n_{\nu+1}, \dots, n_k$. 
  This is a valid walk, since $e_{\nu+1}$ joins $n_\nu = n_\mu$ to $n_{\nu+1}$, and its end nodes $n_0, n_k$ are unchanged.
  The two occurrences of $x$ lie in distinct segments of $\pi$, since no segment of $\pi$ contains a repeated node.

Let $s'$ be the segment of $\pi$ containing the occurrence $n_\mu$, and $s''$ the segment
containing $n_\nu$; both lie in $[x]$. In $\pi'$ the part of $s'$ from its left boundary to
$n_\mu$ and the part of $s''$ from $n_\nu$ to its right boundary merge into a single segment
$t \subseteq [x]$ whose left bounding edge is that of $s'$ (if applicable) and whose right bounding edge is
that of $s''$ (if applicable). Hence $t$ has a left exit iff $s'$ does, and a right exit iff $s''$ does.
Every other segment of $\pi'$ coincides with a segment of $\pi$ (same nodes, bounding edges,
exits, and type), so $\pi'$ can fail to be $b$-open only at $t$. We show $t$ does not
$b$-block. 

\emph{Rule 1 (collider).} Suppose $t$ is a collider, i.e., $s'$ has no left exit and $s''$
has no right exit. If $s'$ is itself a collider then, being a segment of the $b$-open $\pi$,
it meets $\ant_{\vv{G}}(C)$; as $s' \subseteq [x] = [t]$, so does $t$. The same holds if $s''$
is a collider. Otherwise $s'$ has a right exit $\rho'$ and $s''$ a left exit $\lambda''$, both
variables of $[x]$ whose exit edges point into strict descendant clusters. Hence the deleted
sub-walk leaves $[x]$ downward at $\rho'$ and re-enters it from below at $\lambda''$, so along
it some descending edge (one traversed from its variable into a child cluster) is immediately
followed---across a single segment---by an ascending edge; the first such segment $s^\dagger$
is a collider, and every directed edge before it descends, so $[x]$ is anterior to
$s^\dagger$. As $\pi$ is $b$-open, $s^\dagger$ meets $\ant_{\vv{G}}(C)$; since $[x]$ is
anterior to $s^\dagger$, so does $[x]$. 

\emph{Rule 2.} Every exit of $t$ is a left exit of $s'$ or a right exit of $s''$, hence an
exit of a segment of the $b$-open $\pi$, hence not in $C$.

\emph{Rule 3.} Suppose $t$ has two distinct exits and $\pa_{\vv{G}}([x]) \subseteq C$; we
derive a contradiction. Then $s'$ has a left exit $\ell$ and $s''$ a right exit $r$ with
$\ell \neq r$. Consider the right bounding edge of $s'$.
\begin{itemize}
  \item If it points out of $s'$, its endpoint $\rho'$ is a right exit of $s'$. 
    If $\rho' \neq \ell$, then $s'$ has two distinct exits and $\pa_{\vv{G}}([x]) \subseteq C$, so $s'$ already $b$-blocks $\pi$---a contradiction. 
    If $\rho' = \ell$, then $s' = \{\ell\}$ is a single node, because no segment of $\pi$ contains a repeated node; hence $\ell=x$. 
    Now $x$ also lies on $s''$, while $s''$ has right exit $r\neq x$. 
    If the left bounding edge of $s''$ pointed out of $s''$, its left exit would be distinct from $r$ (otherwise $s''$ would repeat $r$, or would be the single node $r$, both impossible since it also contains $x\neq r$), so $s''$ would have two distinct exits and would already $b$-block $\pi$. 
    Thus the left boundary of $s''$ is an equation entered by an edge $q \to \cdot$ with $q \in \pa_{\vv{G}}([x]) \subseteq C$; then $q$ is a right exit lying in $C$ of the segment preceding $s''$, which therefore $b$-blocks $\pi$---a contradiction.
  \item If it points into $s'$, the right boundary of $s'$ is an equation entered by an edge
    $\cdot \ot q'$ with $q' \in \pa_{\vv{G}}([x]) \subseteq C$; then $q'$ is a left exit lying
    in $C$ of the segment following $s'$, which $b$-blocks $\pi$---a contradiction.
\end{itemize}
Hence $t$ does not $b$-block, so $\pi'$ is $b$-open. 
This contradicts the minimality of $\pi$. 
Therefore the minimal $b$-open walk $\pi$ has no repeated node, i.e., it is a $b$-open path from $A$ to $B$.
\end{proof}

The following lemma shows that we have correctly designed $b$-separation such that it is equivalent to $d$-separation in the acyclification.
\begin{lemma}\label{lem:b-sep_equiv_d-sep}
For all $A, B, C \subseteq V$:
\[ A \Perp^{b}_{\vv{G}} B \given C \iff A \Perp^{d}_{\vv{G}^\acy} B \given C.  \]
\end{lemma}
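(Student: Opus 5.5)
We prove both directions by contraposition, translating open walks between $\vv{G}$ and $\vv{G}^\acy$. In the forward direction we start from a $b$-open walk in $\vv{G}$ and build a $d$-open walk in $\vv{G}^\acy$. In the backward direction we start from a $d$-open walk in $\vv{G}^\acy$ and lift it to a $b$-open walk in $\vv{G}$. In both directions the walks connect nodes of $A$ and $B$. The dictionary is the one already sketched after Definition~\ref{def:b-blocking}, together with Lemma~\ref{lem:acyclification_ancestral} to translate ``collider segment meets $\ant_{\vv{G}}(C)$'' into ``collider node in $\anc_{\vv{G}^\acy}(C)$''. Note that $\ant_{\vv{G}}$ is cluster-closed, so a segment meets $\ant_{\vv{G}}(C)$ iff its cluster does, iff some node of the cluster is an ancestor of $C$ in $\vv{G}^\acy$.

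\emph{($\Leftarrow$, lifting).} Take a $d$-open walk $v_0 \ttt v_1 \ttt \dots \ttt v_k$ in $\vv{G}^\acy$. We lift each edge $v_i \to v_{i+1}$ as in the proof of Lemma~\ref{lem:acyclification_ancestral}, to $v_i \to f \otto \cdots \otto v_{i+1}$ with $f \in [v_{i+1}]$, and reverse it for edges $v_i \ot v_{i+1}$. Concatenating gives a walk in $\vv{G}$ whose segments sit at the $v_i$. We then check the three cases.
\begin{itemize}
\item A collider $v_{i-1}\to v_i \ot v_{i+1}$ yields a collider segment in $[v_i]$. It meets $\ant_{\vv{G}}(C)$ because $v_i\in\anc_{\vv{G}^\acy}(C)$, by Lemma~\ref{lem:acyclification_ancestral}.
\item A chain or endpoint at $v_i$ yields a non-collider segment whose only exit is $v_i\notin C$.
\item A fork $v_{i-1}\ot v_i\to v_{i+1}$ yields the one-node segment $\{v_i\}$, which has a single exit. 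Rule~3 therefore does not apply, and rule~2 fails since $v_i\notin C$.
\end{itemize}
So the lift is $b$-open.

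\emph{($\Rightarrow$, projection).} Take a $b$-open walk in $\vv{G}$ between variable nodes and go through its segments. For a collider segment in cluster $K$, we pick any variable $w\in K$ that is anterior to $C$; this gives a collider in $\vv{G}^\acy$ between the variable-tails of the two incoming edges. Those tails are parents of $K$, hence parents of $w$ in the acyclification, and $w\in\anc_{\vv{G}^\acy}(C)$ by Lemma~\ref{lem:acyclification_ancestral}. For a non-collider segment with a single exit $v\notin C$ (chain or endpoint), we route through $v$. The incoming parent $p$ of the cluster satisfies $p\to v$ in $\vv{G}^\acy$. For a one-node fork $v$, the outgoing edges go to equations in child clusters, which gives $v\to w$ for variables $w$ in those clusters. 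For a non-collider segment with two distinct exits $a\neq b$, both exits are outside $C$ by rule~2. By rule~3 some parent $p$ of $[a]=[b]$ lies outside $C$, and we replace the segment by the fork $a\ot p\to b$, whose non-collider $p\notin C$. In each case the adjacent segments must be joined consistently: every directed edge $x\to f$ with $f$ in a child cluster $K'$ is represented by $x\to w$ in $\vv{G}^\acy$, for the variable $w\in K'$ that the next segment's projection uses.

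The main obstacle is this bookkeeping in the projection step. The representative chosen for a segment (the exit, or the anterior $w$ in a collider cluster) must be the same node used by both neighbouring edges. The new nodes introduced, the fork parents $p$ and the collider representatives $w$, must also have the right blocking status. This works because $p\notin C$ and $w\in\anc(C)$ is a collider, and because an exit $x$ pointing into $K'$ lies in $\pa_{\vv{G}}(K')$ and so is a parent of every variable of $K'$. I would handle this by defining the projection segment by segment: each segment contributes one or three acyclification nodes, and consecutive contributions are glued by the single $\vv{G}^\acy$ edge corresponding to the directed edge between them. Exits are always variables, so these edges exist. A final case check confirms that every new node has a $d$-status matching the $b$-status of its segment.
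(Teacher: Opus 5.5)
Your proposal is correct and follows the paper's proof essentially step for step: you lift each $\vv{G}^\acy$-edge to $v_i \to f \otto \cdots \otto v_{i+1}$ to turn a $d$-open walk into a $b$-open one, and you project a $b$-open walk segment by segment (to its exit, to a collider representative, or to a fork $a \ot p \to b$) to get a $d$-open one. There are two small slips to fix: your direction labels are swapped (by contraposition, lifting proves $\Rightarrow$ and projecting proves $\Leftarrow$), and the collider representative must be chosen in $K \cap \anc_{\vv{G}^\acy}(C)$, which is nonempty by Lemma~\ref{lem:acyclification_ancestral}, rather than as an arbitrary variable of $K$ anterior to $C$, because $\anc_{\vv{G}^\acy}(C)$ is not closed under clusters.
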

\begin{proof}
``$\Rightarrow$'': $b$-separation implies $d$-separation in the acyclification.
By contrapositive: given a path $\pi^\acy = v_0, v_1, \ldots, v_k$ in $\vv{G}^\acy$ from $v_0 \in A$ to $v_k \in B$ that is $d$-open given $C$, we construct a walk $\pi$ in $\vv{G}$ between $v_0$ and $v_k$ that is $b$-open given $C$.
By definition, each edge in $\pi^\acy$ connects variables in different clusters of $\vv{G}$.
It can be lifted to a walk in $\vv{G}$ by lifting each directed edge in the same way as in the proof of Lemma~\ref{lem:acyclification_ancestral}:
  \begin{center}\begin{tabular}{ccc}
    $v_i \to v_{i+1}$ on $\pi^\acy$ & is lifted to & $v_i \to f_i \otto \cdots \otto v_{i+1}$ in $\vv{G}$ \\
    $v_i \ot v_{i+1}$ on $\pi^\acy$ & is lifted to & $v_i \otto \cdots \otto f_i \ot v_{i+1}$ in $\vv{G}$.
  \end{tabular}\end{center}
Concatenating these lifts yields a walk $\pi$ in $\vv{G}$ between $v_0$ and $v_k$.

We show that $\pi$ is $b$-open given $C$ by checking each possibility in which it could be blocked (cf.\ Definition~\ref{def:b-blocking}).
\begin{itemize}
  \item A \emph{collider segment} stems from the concatenated lifts $v_{i-1} \to f_{i-1} \otto \cdots \otto v_i \otto \cdots \otto f_i \ot v_{i+1}$ of some collider $v_{i-1} \to v_i \ot v_{i+1}$ on $\pi^\acy$.
    Since $\pi^\acy$ is $d$-open given $C$, $v_i \in \anc_{\vv{G}^\acy}(C)$, hence $v_i \in \ant_{\vv{G}}(C)$ (by Lemma~\ref{lem:acyclification_ancestral}).
    All nodes in the segment are anterior to $v_i$, and by transitivity, each node in the segment lies in $\ant_{\vv{G}}(C)$. 
    Thus the segment does not $b$-block.
  \item A \emph{non-collider segment} stems from a non-collider $v_i$ on $\pi^\acy$ (a chain, a fork, or an end node). The lift preserves outgoing edges on variable nodes, and it preserves end points.
    So $v_i$ must be an exit of the segment. The lifting procedure cannot generate a segment with two distinct exits.
    Since $\pi^\acy$ is $d$-open given $C$ and $v_i$ is a non-collider on $\pi^\acy$, we have $v_i \notin C$.
    Thus the segment does not $b$-block.
\end{itemize}
Hence, $\pi$ is $b$-open given $C$.

``$\Leftarrow$'': $d$-separation in the acyclification implies $b$-separation.
By contrapositive: given a walk $\pi = r_0, r_1, \ldots, r_n$ in $\vv{G}$ from $r_0 \in A$ to $r_n \in B$ that is $b$-open given $C$, we construct a walk $\pi^\acy$ in $\vv{G}^\acy$ between $r_0$ and $r_n$ that is $d$-open given $C$.
The walk $\pi$ consists of segments $s_1, \dots, s_m$ separated by directed edges; each such directed edge is of the form $v \to f$ or $f \ot v$ with $v \in V$, $f \in F$, and $v \in \pa_{\vv{G}}([f])$.
We project each segment $s_i$ to a piece (node $v_i$ or a walk $a_i \ot p_i \to b_i$), according to its type:
\begin{itemize}
  \item A \emph{collider segment} $s_i$ is $b$-open, so it intersects $\ant_{\vv{G}}(C)$; we pick a node $v_i \in [s_i] \cap V \cap \anc_{\vv{G}^\acy}(C)$ (which exists by Lemma~\ref{lem:acyclification_ancestral}) and project $s_i$ to $v_i$.
    On $\pi^\acy$, $v_i$ will become a collider $\to v_i \ot$, which is $d$-open given $C$ as $v_i \in \anc_{\vv{G}^\acy}(C)$.
  \item A \emph{non-collider segment with a single exit} $v_i$ (a chain, a one-node fork, or an end node $r_0$ resp.\ $r_n$): since $\pi$ is $b$-open, $v_i \notin C$; we project $s_i$ to $v_i$. 
    As $v_i$ either carries an outgoing edge leaving its cluster or is an end node, it is a non-collider (chain, fork, or endpoint) on $\pi^\acy$, and $d$-open since $v_i \notin C$.
  \item A \emph{non-collider segment with two distinct exits} $a_i \neq b_i$: since $\pi$ is $b$-open, $a_i, b_i \notin C$ and there must be a parent $p_i \in \pa_{\vv{G}}([s_{i,1}]) \sm C$; we project $s_i$ to the fork $a_i \ot p_i \to b_i$.
    As both $a_i, b_i$ either carry an outgoing edge leaving their cluster or form an end node, they are non-colliders on $\pi^\acy$, and both are $d$-open given $C$ because $a_i, b_i \notin C$.
    Additionally, $p_i$ does not $d$-block given $C$ on $\pi^\acy$ as $p_i \notin C$.
\end{itemize}
The consecutive pieces are joined together to form a walk, preserving the directed edges that separated the segments on $\pi$.
By construction, two consecutive variable nodes on this sequence must lie in different clusters, and one must be parent of the other in $\vv{G}^\acy$.
The resulting walk $\pi^\acy$ is $d$-open given $C$ by construction.
\end{proof}

We can now prove the global Markov property for partially oriented bipartite graphs via reduction to the Markov property for acyclic SCMs.
\begin{theorem}\label{thm:Markov_b}
  Suppose Assumption~\ref{ass:leu} holds.
  When assigning independent distributions to all exogenous variables, the resulting joint distribution $\Prb(X_V)$ satisfies: for all $A, B, C \subseteq V$, 
    \[ A \Perp^{b}_{\vv{G}} B \given C \implies X_A \Indep_{\Prb(X_V)} X_B \given X_C. \]
\end{theorem}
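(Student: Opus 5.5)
By Lemma~\ref{lem:b-sep_equiv_d-sep}, the hypothesis $A \Perp^{b}_{\vv{G}} B \given C$ is equivalent to $A \Perp^{d}_{\vv{G}^\acy} B \given C$. The plan is therefore to show that $\Prb(X_V)$ is induced by an acyclic SCM whose causal graph is exactly the acyclification $\vv{G}^\acy$, with the exogenous variables $X_U$ as its mutually independent source nodes. The standard directed global Markov property for acyclic SCMs (equivalently, for the DAG $\vv{G}^\acy$) then gives $X_A \Indep X_B \given X_C$ directly.

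\textbf{Step 1: construct the SCM.} By Assumption~\ref{ass:leu} and Corollary~\ref{cor:cluster_solution_measurable}, each endogenous cluster $[c]$ has a unique measurable local solution function $\Phi^{[c]} : \mathcal{X}_{\pa_{\vv{G}}([c])} \to \mathcal{X}_{[c]\cap V}$. For every endogenous $v \in V \sm U$ we set the structural equation
\[
X_v := \Phi^{[v]}_v\bigl(X_{\pa_{\vv{G}}([v])}\bigr).
\]
For exogenous $u \in U$, $X_u$ is a source node with distribution $\Prb(X_u)$. By construction, the parents of $v$ in $\vv{G}^\acy$ are exactly $\pa_{\vv{G}}([v])$, so this is an SCM with graph $\vv{G}^\acy$. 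Exogenous nodes have no parents there, since each exogenous node is a singleton cluster with no incoming edges.

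\textbf{Step 2: acyclicity.} $\vv{G}^\acy$ is acyclic because the anterior relation is a partial order on clusters. An edge $v \to v'$ in $\vv{G}^\acy$ means $[v]$ is strictly anterior to $[v']$, since $v \in \pa_{\vv{G}}([v'])$ lies outside $[v']$. A directed cycle would therefore make two distinct clusters mutually anterior, which is impossible.

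\textbf{Step 3: identify the distribution.} Recursive substitution along the causal ordering shows that the unique solution of this acyclic SCM coincides with the global solution function $\Phi$ of Proposition~\ref{prop:global-solution}, which is built from the same local solutions. Hence the SCM-induced distribution equals $\Prb(X_V)$, the pushforward of $\bigotimes_{u\in U}\Prb(X_u)$ under $\Phi$.

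\textbf{Step 4: apply the Markov property.} An acyclic SCM with independent exogenous sources and measurable structural functions on standard Borel spaces yields a distribution that factorizes according to its DAG, with deterministic kernels for endogenous nodes. It therefore satisfies the directed global Markov property with respect to $d$-separation \citep{Lauritzen1996,ForreMooij_1710.08775}. Combined with Lemma~\ref{lem:b-sep_equiv_d-sep}, this completes the proof.

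The main obstacle is Step 3, specifically checking that the graph of the constructed SCM is exactly $\vv{G}^\acy$ and not merely a subgraph of it. Each component $X_v$ is assigned all of $\pa_{\vv{G}}([v])$ as parents, even those it does not actually depend on. Extra edges are harmless, because $d$-separation in a supergraph implies $d$-separation in any subgraph on the same nodes, so the Markov property still applies. Degenerate cases such as empty $A$, $B$, or $C$, or overlapping sets, are covered by the walk-based formulation of $d$-separation, where an end node in $C$ blocks.
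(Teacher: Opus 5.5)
Your proposal is correct and essentially matches the paper's proof: both replace the system by the acyclic SCM $X_v = \Phi^{[v]}_v(X_{\pa_{\vv{G}}([v])})$, note that it has the same solutions and hence induces $\Prb(X_V)$, use Lemma~\ref{lem:b-sep_equiv_d-sep} to pass from $b$-separation to $d$-separation in $\vv{G}^\acy$, and apply the standard directed global Markov property. Your remark that extra edges are harmless corresponds to the paper's observation that the rewritten SCM's graph may be a strict subgraph of $\vv{G}^\acy$. This is a routine detail rather than a real obstacle, and it does not sit in Step~3 as you placed it.
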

\begin{proof}
  The clusters of $\vv{G}$ are partially ordered by the directed edges between them. 
  For each endogenous cluster $[v]$ with $v \in V \sm U$, clusterwise unique solvability (Assumption~\ref{ass:leu}) provides a cluster solution function $\Phi^{[v]}$ that expresses the endogenous variables $X_{(V \sm U) \cap [v]}$ as a function of the parent variables $X_{\pa_{\vv{G}}([v])}$. 
  Write $\Phi_w^{[v]}$ for the component of $\Phi^{[v]}$ corresponding to variable $w \in (V \sm U) \cap [v]$.

  Replace the original system of equations by the acyclic system: for each endogenous variable $v \in (V \sm U)$, the structural equation is
  \[ X_v = \Phi_v^{[v]}(X_{\pa_{\vv{G}}([v])}). \]
  Each exogenous variable $X_u$ ($u \in U$) retains its original independent distribution. 
  By construction, the solutions of the rewritten system coincide with the solutions of the original system (Proposition~\ref{prop:global-solution}): in both cases, variables are determined by recursively substituting cluster solution functions along the partial ordering of the clusters. 
  The rewritten system is an acyclic SCM: the structural equation for each variable $v$ depends only on variables in strictly earlier clusters.

  Now by Lemma~\ref{lem:b-sep_equiv_d-sep}, the assumption
  \[ A \Perp^{b}_{\vv{G}} B \given C \]
  implies:
  \[ A \Perp^{d}_{\vv{G}^\acy} B \given C. \]
  The graph of the rewritten acyclic SCM is a subgraph of the acyclification DAG $\vv{G}^\acy$ (it can be a strict subgraph if one or more functional dependences cancel out).
  By the standard directed global Markov property for acyclic SCMs with independent exogenous variables \citep[][see also \citealp{Lauritzen1996,Pearl2009}]{LauritzenDawidLarsenLeimer1990}:
  \[ X_A \Indep_{\Prb(X_V)} X_B \given X_C. \qedhere \]
\end{proof}

\subsection{Accounting for Determinism}

We now strengthen the Markov property for partially oriented bipartite graphs by taking into account determinism,
analogous to how $D$-separation \citep{GeigerVermaPearl1990} strengthens $d$-separation in Bayesian networks.

We will write $S \preceq T$ iff $X_S$ is a measurable function of $X_T$.

\begin{lemma}\label{lem:condition_on_functionally_determined}
  Let $A,B,S,T \subseteq V$ with $S \preceq T$. Then:
  \[ X_A \Indep_{\Prb(X_V)} X_B \given X_{S\cup T} \iff X_A \Indep_{\Prb(X_V)} X_B \given X_T. \]
\end{lemma}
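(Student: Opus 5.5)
The key observation is that conditioning on $X_{S\cup T}$ and conditioning on $X_T$ generate the same information up to null sets. I would make this precise through the $\sigma$-algebras involved, or equivalently through the existence of conditional distributions, and then apply standard properties of conditional independence.

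\emph{Step 1.} Since $S \preceq T$, there is a measurable map $h$ with $X_S = h(X_T)$, at least $\Prb$-almost surely. I take ``$X_S$ is a measurable function of $X_T$'' to mean this holds surely; the a.s.\ version works the same way after modifying on a null set. It follows that $X_{S\cup T} = g(X_T)$, where $g$ is the measurable map $x_T \mapsto (h(x_T)_{S\sm T}, x_T)$. Conversely, $X_T$ is a coordinate projection of $X_{S\cup T}$. Hence $\sigma(X_{S\cup T}) = \sigma(X_T)$ as sub-$\sigma$-algebras of the underlying probability space; at worst the two agree up to $\Prb$-null sets.

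\emph{Step 2.} Conditional independence $X_A \Indep X_B \given X_C$ depends on $X_C$ only through $\sigma(X_C)$ (completed by null sets). I would use the definition via conditional expectations: for all bounded measurable $\varphi,\psi$,
\[ \mathbb{E}[\varphi(X_A)\psi(X_B)\given \sigma(X_C)] = \mathbb{E}[\varphi(X_A)\given \sigma(X_C)]\,\mathbb{E}[\psi(X_B)\given\sigma(X_C)] \quad \text{a.s.} \]
Here $C = S\cup T$ on one side and $C = T$ on the other. Because the two conditioning $\sigma$-algebras coincide, the conditional expectations agree a.s., and both statements are literally the same statement.

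\emph{Alternative.} To stay within Dawid's semi-graphoid axioms instead, I would argue as follows. For ``$\Leftarrow$'', use weak union applied to $X_A \Indep X_{B\cup S}\given X_T$. To get that premise, first note $X_A \Indep X_S \given X_T$, which is trivial since $X_S$ is $X_T$-measurable. Combining this with $X_A\Indep X_B\given X_T$ by contraction in the form $X_A \Indep X_B \given X_T$ and $X_A\Indep X_S\given X_{B\cup T}$ needs care; the $\sigma$-algebra route is cleaner. For ``$\Rightarrow$'', use the factorization of the conditional kernel of $X_A$ given $X_{S\cup T}$, composed with $g$.

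\emph{Main obstacle.} The only delicate point is the measure-theoretic bookkeeping. If $S\preceq T$ holds only almost surely, the $\sigma$-algebras agree only modulo null sets. I would handle this by working with $\Prb$-completed $\sigma$-algebras. Conditional expectations are unchanged by completion, since they are defined up to a.s.\ equality, so the argument of Step 2 goes through unchanged.
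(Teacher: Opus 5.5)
Your proof is correct, but it argues differently from the paper. The paper settles the lemma in one line by appeal to Dawid's elementary axioms for conditional independence. Spelled out, that is a formal derivation. For ``$\Leftarrow$'', contract $X_A \Indep X_B \given X_T$ with $X_A \Indep X_S \given X_{B\cup T}$, then apply weak union. For ``$\Rightarrow$'', contract $X_A \Indep X_S \given X_T$ with the hypothesis, then apply decomposition. Both auxiliary independences are trivial because $X_S$ is a function of $X_T$.

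This is also how your ``Alternative'' paragraph closes. The contraction step you flagged needs no special care, since $X_A \Indep X_S \given X_{B\cup T}$ holds trivially for that same reason.

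Your main argument instead notes that $\sigma(X_{S\cup T})=\sigma(X_T)$. The two statements are then not merely equivalent but literally identical as statements about conditional expectations. This is more elementary and shows directly why the lemma holds. The axiomatic route is purely formal and needs no measure-theoretic bookkeeping once the separoid rules are granted.

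For the Markov-kernel analogue (Lemma~\ref{lem:condition_on_functionally_determined_Markov_kernel}), the paper itself switches to essentially your style of argument. It observes that $X_{S\cup T}$ and $X_T$ are equivalent transitional random variables and invokes Forr\'e's Equivalent Exchange rule. That rule is the kernel counterpart of ``conditional independence depends on the conditioning variable only through the $\sigma$-algebra it generates.''

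The null-set issue you single out as the main obstacle does not arise where the lemma is used. There $\fdet_{\vv{G}}(C) \preceq C$ holds surely, by construction of the measurable global solution function.
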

\begin{proof}
  This follows from the elementary axioms for conditional independence \citep{Dawid1979}.
\end{proof}

Although $B$-blocking is stated with $\ant_{\vv{G}}(C)$ in its collider rule (Definition~\ref{def:B-blocking}), $B$-separation in fact coincides with $b$-separation given the enlarged conditioning set $\fdet_{\vv{G}}(C)$.
This equivalence is the graphical device behind the strengthening.

\begin{lemma}\label{lem:B-eq-b-fdet}
For all $C \subseteq V$, a walk is $B$-blocked by $C$ if and only if it is $b$-blocked by $\fdet_{\vv{G}}(C)$. Hence, for all $A, B, C \subseteq V$: 
  \[ A \Perp^{B}_{\vv{G}} B \given C \iff A \Perp^{b}_{\vv{G}} B \given \fdet_{\vv{G}}(C). \]
\end{lemma}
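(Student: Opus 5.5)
We prove the walkwise statement: for each walk $\pi$ between variable nodes, $\pi$ is $B$-blocked by $C$ iff it is $b$-blocked by $\bar C := \fdet_{\vv{G}}(C)$. The separation equivalence then follows by quantifying over all walks from $A$ to $B$. We mirror the $(1)\Leftrightarrow(2)$ part of the proof of Lemma~\ref{lem:D-sep}, working segment by segment.

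\emph{Preliminary facts about $\bar C$.} Definition~\ref{def:functionally_determined} gives $C \subseteq \bar C$, hence $\ant_{\vv{G}}(C) \subseteq \ant_{\vv{G}}(\bar C)$. Moreover, if $v \in \bar C \sm C$, then $v$ is endogenous and $\pa_{\vv{G}}([v]) \subseteq \bar C$. From this closure property, $[v]\cap V \subseteq \bar C$: all variables of the cluster are added together once the parents are in. So $\bar C \sm C$ is a union of (variable parts of) clusters whose parents lie in $\bar C$.

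\emph{Direction $B$-blocked $\Rightarrow$ $b$-blocked by $\bar C$.}
\begin{itemize}
\item A non-collider segment with an exit in $\bar C$ $b$-blocks by rule~2.
\item A collider segment disjoint from $\ant_{\vv{G}}(C)$ is handled contrapositively. Suppose $\pi$ is $b$-open given $\bar C$ and some collider segment $s$ meets $\ant_{\vv{G}}(\bar C)$. We show $s$ meets $\ant_{\vv{G}}(C)$.
\begin{itemize}
\item If $[s]\cap V \subseteq \bar C\sm C$, then $\pa_{\vv{G}}([s]) \subseteq \bar C$. The directed edges entering $s$ on $\pi$ come from parents $q$ of $[s]$. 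Such a $q$ is a right (resp.\ left) exit of the neighbouring segment, which is then a non-collider with an exit in $\bar C$, contradicting $b$-openness. Since a collider segment has at least one entering edge, this case cannot occur. Hence $s$ is anterior to $C$ directly (if it meets $C$), or it falls in the next case.
\item Otherwise, take a shortest anterior walk from $s$ to $\bar C$ and look at its first node in $\bar C$. If that node lies in $\bar C\sm C$, its cluster's parents are in $\bar C$, and the walk enters that cluster through a parent that is already in $\bar C$. This contradicts minimality, exactly as in Lemma~\ref{lem:D-sep}. So the endpoint lies in $C$. We must handle walks that reach $\bar C$ inside the starting cluster: then the whole cluster is in $\bar C$, which was the first case.
\end{itemize}
\end{itemize}

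\emph{Direction $b$-blocked by $\bar C$ $\Rightarrow$ $B$-blocked.}
\begin{itemize}
\item Rule~1 transfers because $\ant_{\vv{G}}(C) \subseteq \ant_{\vv{G}}(\bar C)$.
\item Rule~2 with $\bar C$ is literally $B$-rule~2.
\item Rule~3 with $\bar C$ concerns a non-collider segment with two distinct exits whose cluster has all parents in $\bar C$. Any exit $v$ is a variable in that endogenous cluster. (Exogenous clusters are singletons, so there are no two distinct exits.) Since $\pa([v]) \subseteq \bar C$, closure gives $v \in \bar C$, so $B$-rule~2 applies.
\end{itemize}

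The main obstacle is the collider case of the first direction. There one must argue that activation via $\ant_{\vv{G}}(\bar C)$ on a $b$-open walk can be pushed back to $\ant_{\vv{G}}(C)$, using that parents of fdet-clusters occur on $\pi$ as exits of adjacent non-collider segments. I would first try to establish this directly with the shortest-anterior-walk argument, treating carefully the case where the shortest walk ends within the same cluster. A fallback is to pass through the acyclification: combine Lemma~\ref{lem:acyclification_ancestral} with the $D$-separation Lemma~\ref{lem:D-sep}, noting that $\fdet$ in $\vv{G}$ matches $\fdet$ in $\vv{G}^\acy$.
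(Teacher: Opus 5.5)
Your proof is correct and follows essentially the same route as the paper's. At $\fdet_{\vv{G}}(C)$, rule~3 collapses into rule~2, and rule~1 transfers by monotonicity of $\ant_{\vv{G}}$. A collider segment that meets $\ant_{\vv{G}}(\fdet_{\vv{G}}(C))$ but not $\ant_{\vv{G}}(C)$ must have its cluster inside $\fdet_{\vv{G}}(C)$, so its walk-neighbours are exits in $\fdet_{\vv{G}}(C)$ and block by rule~2. The only difference is how you show that the cluster meets $\fdet_{\vv{G}}(C)$: you use a first-hit argument along an anterior walk directly in $\vv{G}$, whereas the paper derives it from Lemma~\ref{lem:acyclification_ancestral} and the identity $\anc_{\vv{G}^\acy}(\fdet_{\vv{G}}(C)) = \anc_{\vv{G}^\acy}(C)\cup\fdet_{\vv{G}}(C)$.
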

\begin{proof}
We show the per-walk equivalence; the separation statement follows by quantifying over all walks (equivalently, all paths).
Running Definition~\ref{def:b-blocking} at $\fdet_{\vv{G}}(C)$, rule~3 is subsumed by rule~2: a two-distinct-exit segment with $\pa_{\vv{G}}([s_{i,1}]) \subseteq \fdet_{\vv{G}}(C)$ has its whole cluster---hence both exits---in $\fdet_{\vv{G}}(C)$. So $b$-blocking by $\fdet_{\vv{G}}(C)$ reduces to: a collider segment not meeting $\ant_{\vv{G}}(\fdet_{\vv{G}}(C))$, or a non-collider segment with an exit in $\fdet_{\vv{G}}(C)$. This differs from $B$-blocking by $C$ only in the collider condition, which uses $\ant_{\vv{G}}(\fdet_{\vv{G}}(C))$ rather than $\ant_{\vv{G}}(C)$.

Since $\ant_{\vv{G}}(C) \subseteq \ant_{\vv{G}}(\fdet_{\vv{G}}(C))$, every walk $b$-blocked by $\fdet_{\vv{G}}(C)$ is $B$-blocked by $C$. Conversely, let $\pi$ be $B$-blocked by $C$; we show it is $b$-blocked by $\fdet_{\vv{G}}(C)$. As the two criteria share the non-collider rule, we may assume the block comes from a collider segment $s^c$, in a cluster $[c^\ast]$, that does not meet $\ant_{\vv{G}}(C)$. If $[c^\ast]$ also fails to meet $\ant_{\vv{G}}(\fdet_{\vv{G}}(C))$, then $s^c$ blocks $\pi$ under $b$-blocking by $\fdet_{\vv{G}}(C)$ as well. Otherwise $[c^\ast]$ meets $\ant_{\vv{G}}(\fdet_{\vv{G}}(C))$ but not $\ant_{\vv{G}}(C)$, and we claim $\pa_{\vv{G}}([c^\ast]) \subseteq \fdet_{\vv{G}}(C)$.

By Lemma~\ref{lem:acyclification_ancestral}, meeting $\ant_{\vv{G}}(S)$ is equivalent to $[c^\ast]$ meeting $\anc_{\vv{G}^\acy}(S)$; and in the acyclification $\anc_{\vv{G}^\acy}(\fdet_{\vv{G}}(C)) = \anc_{\vv{G}^\acy}(C) \cup \fdet_{\vv{G}}(C)$. (For ``$\subseteq$'': along a directed path from an ancestor $v$ to a determined node $w$, either the path meets $C$, so $v \in \anc_{\vv{G}^\acy}(C)$, or, reading it back from $w$, each node is a parent of a determined node outside $C$ and hence itself determined, so $v \in \fdet_{\vv{G}}(C)$.) Thus $[c^\ast]$ meets $\fdet_{\vv{G}}(C)$: pick $y \in [c^\ast] \cap \fdet_{\vv{G}}(C)$. As $[c^\ast]$ avoids $\ant_{\vv{G}}(C) \supseteq C$, we have $y \notin C$, so $y$ is endogenous with $\pa_{\vv{G}}([c^\ast]) = \pa_{\vv{G}}([y]) \subseteq \fdet_{\vv{G}}(C)$, proving the claim.

The two walk-neighbors of $s^c$ are variables of $\pa_{\vv{G}}([c^\ast]) \subseteq \fdet_{\vv{G}}(C)$---they point into the boundary equations of $s^c$---and each is an exit of the adjacent non-collider segment. Hence both neighbors are $b$-blocked by $\fdet_{\vv{G}}(C)$, so $\pi$ is $b$-blocked by $\fdet_{\vv{G}}(C)$.
\end{proof}

Combined with the acyclification, this yields a clean correspondence between $B$-separation and $D$-separation that mirrors Lemma~\ref{lem:b-sep_equiv_d-sep}, now accounting for determinism.
\begin{lemma}[$B$-separation equals $D$-separation in the acyclification]\label{lem:B-sep_equiv_D-sep}
For all $A, B, C \subseteq V$:
\[ A \Perp^{B}_{\vv{G}} B \given C \iff A \Perp^{D}_{\vv{G}^\acy} B \given C. \]
\end{lemma}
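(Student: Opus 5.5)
The plan is to chain together the equivalences already established and reduce everything to one identity between the two notions of functional determination. By Lemma~\ref{lem:B-eq-b-fdet}, $A \Perp^{B}_{\vv{G}} B \given C$ holds if and only if $A \Perp^{b}_{\vv{G}} B \given \fdet_{\vv{G}}(C)$. Applying Lemma~\ref{lem:b-sep_equiv_d-sep} with the conditioning set $\fdet_{\vv{G}}(C)$, this is equivalent to $A \Perp^{d}_{\vv{G}^\acy} B \given \fdet_{\vv{G}}(C)$. On the other side, Definition~\ref{def:D-sep} (formulation~(2) of Lemma~\ref{lem:D-sep}) states that $A \Perp^{D}_{\vv{G}^\acy} B \given C$ means $A \Perp^{d}_{\vv{G}^\acy} B \given \fdet_{\vv{G}^\acy}(C)$. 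So the statement follows once I show
\[ \fdet_{\vv{G}}(C) = \fdet_{\vv{G}^\acy}(C) \qquad \text{for all } C \subseteq V. \]

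To prove this identity, I first fix the meaning of $\vv{G}^\acy$ as the graph of an acyclic SCM. Its node set is $V$, and its exogenous nodes are the same set $U$. This is consistent with the acyclification: each $u\in U$ forms a singleton cluster containing no equation node, so $\pa_{\vv{G}}([u])=\emptyset$, and hence $u$ has no parents in $\vv{G}^\acy$. It is also the structure used in the proof of Theorem~\ref{thm:Markov_b}, where the rewritten acyclic SCM lives on (a subgraph of) $\vv{G}^\acy$ with exogenous variables $U$.

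Next I observe that, by the definition of the acyclification, $\pa_{\vv{G}^\acy}(v) = \pa_{\vv{G}}([v])$ for every $v\in V$, since $v'\to v$ in $\vv{G}^\acy$ iff $v' \in \pa_{\vv{G}}([v])$. Now compare Definition~\ref{def:functionally_determined} with Definition~\ref{def:functionally_determined_DAG}. Both start from $C_0=C$, and both add precisely the endogenous $v\in V\sm U$ whose parent set lies in $C_n$. Since the parent sets agree, an induction on $n$ shows that the sequences $(C_n)$ coincide term by term, and therefore so do their unions.

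I expect no real obstacle. The only point needing care is the bookkeeping just described: the exogenous set of the acyclic SCM must be $U$, and parents must be taken cluster-wise. With the identity in hand, the equivalences compose into the claimed statement for arbitrary (not necessarily disjoint) $A,B,C$, since none of the cited lemmas requires disjointness.
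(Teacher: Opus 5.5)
Your proposal is correct and matches the paper's proof: it chains Lemma~\ref{lem:B-eq-b-fdet}, Lemma~\ref{lem:b-sep_equiv_d-sep} at the conditioning set $\fdet_{\vv{G}}(C)$, and Definition~\ref{def:D-sep}, using the identity $\fdet_{\vv{G}^\acy}(C)=\fdet_{\vv{G}}(C)$ that follows from $\pa_{\vv{G}^\acy}(v)=\pa_{\vv{G}}([v])$. Your explicit check that the acyclification carries the exogenous set $U$ is left implicit in the paper, and it is right.
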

\begin{proof}
For every $v \in V$ the acyclification satisfies $\pa_{\vv{G}^\acy}(v) = \pa_{\vv{G}}([v])$, so the closure recursions of Definitions~\ref{def:functionally_determined} and~\ref{def:functionally_determined_DAG} coincide; hence $\fdet_{\vv{G}^\acy}(C) = \fdet_{\vv{G}}(C)$ for all $C \subseteq V$. Therefore
\begin{align*}
A \Perp^{B}_{\vv{G}} B \given C
&\iff A \Perp^{b}_{\vv{G}} B \given \fdet_{\vv{G}}(C) && \text{(Lemma~\ref{lem:B-eq-b-fdet})}\\
&\iff A \Perp^{d}_{\vv{G}^\acy} B \given \fdet_{\vv{G}}(C) && \text{(Lemma~\ref{lem:b-sep_equiv_d-sep})}\\
&\iff A \Perp^{d}_{\vv{G}^\acy} B \given \fdet_{\vv{G}^\acy}(C) && (\fdet_{\vv{G}^\acy}(C) = \fdet_{\vv{G}}(C))\\
&\iff A \Perp^{D}_{\vv{G}^\acy} B \given C && \text{(Definition~\ref{def:D-sep}).}
\end{align*}
\end{proof}

Theorem~\ref{thm:Markov_B} now follows from Theorem~\ref{thm:Markov_b}.
\thmMarkovB*
\begin{proof}
  By Lemma~\ref{lem:B-eq-b-fdet}, the hypothesis $A \Perp^{B}_{\vv{G}} B \given C$ is equivalent to
  \[ A \Perp^{b}_{\vv{G}} B \given \fdet_{\vv{G}}(C). \]
  By Theorem~\ref{thm:Markov_b},
  \[ X_A \Indep_{\Prb(X_V)} X_B \given X_{\fdet_{\vv{G}}(C)}. \]
  The clusterwise solvability condition gives, for each endogenous cluster $[c]$:
  \[ [c] \cap V \preceq \pa_{\vv{G}}([c]) \]
  From the definitions, it follows that $\fdet_{\vv{G}}(C) \preceq C$.
  By Lemma~\ref{lem:condition_on_functionally_determined}, it then suffices to condition only on $C$:
  \[ X_A \Indep_{\Prb(X_V)} X_B \given X_C. \qedhere \]
\end{proof}

Similarly to $b$-separation, also $B$-separation may equivalently be defined via walks or via paths.
\begin{lemma}[$B$-separation via walks or paths]\label{lem:B-sep_walk_path}
For all $A, B, C \subseteq V$, every walk from a node in $A$ to a node in $B$ is $B$-blocked by $C$ if and only if every path from a node in $A$ to a node in $B$ is $B$-blocked by $C$.
\end{lemma}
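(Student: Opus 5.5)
The plan is to reduce the claim to the already established path/walk equivalence for $b$-separation (Lemma~\ref{lem:b-sep_walk_path}) by way of the per-walk correspondence of Lemma~\ref{lem:B-eq-b-fdet}. Since paths are walks, ``every walk $B$-blocked'' trivially implies ``every path $B$-blocked'', so only the converse requires work.

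For the converse, suppose every path from $A$ to $B$ is $B$-blocked by $C$. By Lemma~\ref{lem:B-eq-b-fdet}, applied walk by walk, a walk (in particular a path) is $B$-blocked by $C$ if and only if it is $b$-blocked by $\bar C := \fdet_{\vv{G}}(C)$. Hence every path from $A$ to $B$ is $b$-blocked by $\bar C$. Lemma~\ref{lem:b-sep_walk_path}, applied with conditioning set $\bar C$, then shows that every walk from $A$ to $B$ is $b$-blocked by $\bar C$. A second application of the per-walk statement of Lemma~\ref{lem:B-eq-b-fdet} converts this back: every walk from $A$ to $B$ is $B$-blocked by $C$.

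The only point needing care, and the one I would check first, is that Lemma~\ref{lem:B-eq-b-fdet} is indeed a per-walk equivalence rather than merely an equivalence of separation statements. Its proof is written walk by walk, and it uses nothing about the walk being repeated or non-repeated. Likewise, Lemma~\ref{lem:b-sep_walk_path} holds for an arbitrary conditioning set, so substituting $\bar C$ is legitimate. With both points confirmed, the argument is a three-line chain of equivalences and has no real obstacle.

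If instead one wanted a direct proof, the alternative would be to redo the shortest-open-walk argument of Lemma~\ref{lem:b-sep_walk_path} with $B$-blocking. The delicate step there is the collider case of the merged segment $t$, which would need anteriority to $C$ rather than to $\bar C$. The reduction above avoids that step entirely.
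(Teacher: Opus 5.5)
Your proposal is correct and is essentially the paper's own proof. The paper likewise uses the per-walk form of Lemma~\ref{lem:B-eq-b-fdet}, applied to walks and to paths separately, to turn $B$-blocking by $C$ into $b$-blocking by $\fdet_{\vv{G}}(C)$, and then invokes Lemma~\ref{lem:b-sep_walk_path} at that enlarged conditioning set.
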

\begin{proof}
By Lemma~\ref{lem:B-eq-b-fdet} (per-walk form), a walk is $B$-blocked by $C$ iff it is $b$-blocked by $\fdet_{\vv{G}}(C)$; applied to walks and to paths separately, this gives that $B$-separation given $C$, defined via walks or via paths, equals $b$-separation given $\fdet_{\vv{G}}(C)$ defined via walks resp.\ paths. The latter two coincide by Lemma~\ref{lem:b-sep_walk_path}.
\end{proof}

Because $\fdet_{\vv{G}}(C)$ can be strictly larger than $C$, $B$-separation is genuinely stronger than $b$-separation: $A \Perp^{b}_{\vv{G}} B \given C$ implies $A \Perp^{B}_{\vv{G}} B \given C$ (comparing the two criteria at the same $C$: the collider rule is common to both, while each non-collider case of $b$-blocking---an exit in $C$, or two distinct exits whose cluster satisfies $\pa_{\vv{G}}([c]) \subseteq C$---puts an exit in $\fdet_{\vv{G}}(C)$ and hence $B$-blocks), but not conversely, as Example~\ref{ex:B-stronger-b} shows.

\begin{example}[$B$-separation is strictly stronger than $b$-separation]\label{ex:B-stronger-b}
Consider the observational bathtub graph $\vv{G}$ of Example~\ref{ex:bathtub-ordering} and condition on $C=\{X_I\}$. Since $\pa_{\vv{G}}([X_O])=\{X_I\}$, the outflow is functionally determined by $C$, so $\fdet_{\vv{G}}(\{X_I\})=\{X_I,X_O\}$; the variables $X_P$ and $X_D$ are not determined, as their clusters also require $X_K$, resp.\ $X_g$.

Take $A=\{X_O\}$ and $B=\{X_D\}$. The walk $X_O \to f_2 \otto X_P \to f_3 \otto X_D$ splits into the segments $\{X_O\}$, $\{f_2, X_P\}$, and $\{f_3, X_D\}$, with single exits $X_O$, $X_P$, and $X_D$ respectively. None of these exits lies in $C=\{X_I\}$, hence this walk is $b$-open and $X_O \nPerp^{b}_{\vv{G}} X_D \given X_I$. Under $B$-separation the outcome differs: every walk out of $X_O$ begins with the endpoint segment $\{X_O\}$ whose exit $X_O$ lies in $\fdet_{\vv{G}}(\{X_I\})$, so $X_O \Perp^{B}_{\vv{G}} X_D \given X_I$.

The $B$-separation verdict is the correct one: conditioning on $X_I$ forces $X_O = X_I$ (Example~\ref{ex:bathtub-ordering}), so $X_O$ is constant given $X_I$ and hence trivially independent of $X_D$. Thus $B$-separation detects a conditional independence arising from determinism that $b$-separation misses. 
\end{example}

\section{Proof of the Extended Global Markov Property}\label{app:proof-extended}

We prove Theorem~\ref{thm:i-Markov_B}, which extends the Global Markov Property (Theorem~\ref{thm:Markov_B}) from joint distributions to Markov kernels with non-random input variables.
Our strategy will be similar to that of Appendix~\ref{app:direct-proof}, but rather than applying the standard global Markov property for acyclic SCMs, we make use of the global Markov property for causal Bayesian networks
with input variables established by \citet{Forre2021}.

\citet[Theorem~6.3]{Forre2021} proves the global Markov property for CBNs with input variables using \emph{transitional conditional independence} \citep[Definition~3.1]{Forre2021}. 
This is an asymmetric notion of conditional independence for Markov kernels: $X_A \Indep_{\Prb(\cdot \mk X_J)} X_B \given X_C$ means that there \emph{exists} a Markov kernel $Q(X_A \mk X_C)$ (not depending on $X_B$) such that
\[ \Prb(X_A, X_B, X_C \mk X_J) = Q(X_A \mk X_C) \otimes \Prb(X_B, X_C \mk X_J). \]
The proof of \citet{Forre2021} proceeds by induction over the topological ordering of the conditional DAG (CDAG)---which marks the variables in $J \subseteq U \subseteq V$ as \emph{input} variables---chaining the (asymmetric) separoid rules for transitional conditional independence and $d$-separation in CDAGs, an (asymmetric) extension of $d$-separation in DAGs. 
A crucial feature of \citeauthor{Forre2021}'s approach is that it does \emph{not} rely on symmetry of conditional independence (which fails for Markov kernels in general), but instead uses left and right versions of the separoid rules separately.


\begin{theorem}\label{thm:i-Markov_b}
Suppose Assumption~\ref{ass:leu} holds.
Treat exogenous variables $J\subseteq U$ as non-random and assign independent distributions to the remaining exogenous variables in $U\sm J$, yielding the Markov kernel $\Prb(X_V\mk X_J)$.
Then for all $A,B,C\subseteq V$:
\[ A \Perp^{b}_{\vv{G}} B \cup J \given C \implies X_A \Indep_{\Prb(X_V \mk X_J)} X_B \given X_C. \]
\end{theorem}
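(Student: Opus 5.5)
The plan mirrors the proof of Theorem~\ref{thm:Markov_b}, with the standard directed global Markov property replaced by the one for causal Bayesian networks with input variables, \citet[Theorem~6.3]{Forre2021}.

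\textbf{Step 1: pass to an acyclic model.} Using clusterwise unique solvability (Assumption~\ref{ass:leu}) and the measurability of cluster solution functions (Corollary~\ref{cor:cluster_solution_measurable}), I rewrite the system as the acyclic system
\[ X_v = \Phi^{[v]}_v(X_{\pa_{\vv{G}}([v])}), \qquad v \in V\sm U. \]
The exogenous variables in $U \sm J$ keep their independent distributions, and those in $J$ become input nodes. This yields a CDAG $\mathcal{G}$ on $V$ with input set $J$. Its graph is a subgraph of $\vv{G}^\acy$, and the nodes of $J$ have no parents there. Every endogenous $X_v$ is then a deterministic Markov kernel (a Dirac kernel) given its parents. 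Each $X_u$ with $u\in U\sm J$ has its fixed marginal. By Proposition~\ref{prop:global-solution}, the Markov kernel $\Prb(X_V\mk X_J)$ of this CBN with inputs coincides with the one induced by the original system.

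\textbf{Step 2: translate the separation statement.} By Lemma~\ref{lem:b-sep_equiv_d-sep}, the hypothesis $A \Perp^{b}_{\vv{G}} B\cup J \given C$ is equivalent to $A \Perp^{d}_{\vv{G}^\acy} B\cup J \given C$. Deleting edges can only destroy open walks, so this $d$-separation also holds in the subgraph $\mathcal{G}$.

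\textbf{Step 3: convert to separation in the CDAG.} Forré's $d$-separation in a CDAG treats input nodes as if they were attached to a source and requires separation from $B \cup J$. I expect the right form is exactly $A \Perp^{d}_{\mathcal{G}} B\cup J\given C$, read in the ordinary DAG $\mathcal{G}$ in which the nodes of $J$ are parentless. I will check Forré's Definition carefully to confirm this. If his criterion is instead phrased via an augmented graph with an extra node pointing into $J$, I will show the two notions agree. The argument is that any open walk reaching that extra node must pass through a node of $J$ as a non-collider. Since $J\subseteq B\cup J$, it then yields an open walk from $A$ to $B\cup J$ in $\mathcal{G}$.

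\textbf{Step 4: conclude.} Forré's global Markov property then gives $X_A \Indep_{\Prb(X_V\mk X_J)} X_{B\cup J}\given X_C$. Right decomposition of transitional conditional independence, which Forré proves holds without symmetry, reduces $B\cup J$ to $B$.

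\textbf{Main obstacle.} I expect the main obstacle to be Step 3, together with the verification that Forré's theorem applies to Dirac kernels and to parentless non-input exogenous nodes. The latter should be unproblematic, since his framework allows arbitrary Markov kernels on standard Borel spaces.
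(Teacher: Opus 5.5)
Your proposal is correct and follows essentially the same route as the paper. Both rewrite the system with the cluster solution functions as a CBN with inputs $X_J$, transfer the hypothesis to $A \Perp^{d}_{\vv{G}^\acy} B\cup J \given C$ via Lemma~\ref{lem:b-sep_equiv_d-sep}, read it as Forr\'e's CDAG separation (which implicitly includes $J$ on the right), and invoke \citet[Theorem~6.3]{Forre2021}; the only differences are cosmetic, since the paper uses $\vv{G}^\acy$ itself as the CDAG (so no subgraph step is needed) and applies the theorem with $B$ directly rather than with $B\cup J$ followed by right decomposition.
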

\begin{proof}
  As in the proof of Theorem~\ref{thm:Markov_b}, solutions of the system satisfy the acyclic system of equations
  \[ X_v = \Phi_v^{[v]}(X_{\pa_{\vv{G}}([v])}), \qquad v \in V \sm U. \]
  Treat $J \subseteq U$ as non-random and put independent distributions on the remaining exogenous variables $W := U \sm J$.
  For each endogenous cluster $[v]$ ($v \in V \sm U$), let the deterministic Markov kernel $\mathcal{X}_{\pa_{\vv{G}}([v])} \to \mathcal{P}(\mathcal{X}_v)$ be the one corresponding to the cluster solution function $\Phi_v^{[v]}$,
  and for each $w \in W$ let $\Prb(X_w)$ be its distribution.
  Together with the non-stochastic inputs $X_J$, these Markov kernels define a \emph{causal Bayesian network} $\mathcal{M}$ in the sense of \citet[Definition~6.1]{Forre2021}, with:
  \begin{itemize}
      \item non-stochastic input variables: $X_J$,
      \item stochastic variables: $X_{V \sm J}$,
      \item graph: the acyclification $\vv{G}^\acy$, viewed as a conditional DAG $\vv{G}^\acy(V \sm J \given \doit(J))$.
  \end{itemize}
  This is a valid conditional DAG: $\vv{G}^\acy$ is acyclic, and no edge points into any $j\in J$, since each such $j$ is an exogenous singleton cluster with $\pa_{\vv{G}}([j])=\emptyset$. 
  The joint Markov kernel of $\mathcal{M}$ is precisely $\Prb(X_V \mk X_J)$.

  Let $A, B, C \subseteq V$ be such that
  \[ A \Perp^{b}_{\vv{G}} B \cup J \given C. \]
  By Lemma~\ref{lem:b-sep_equiv_d-sep},
  \[ A \Perp^{d}_{\vv{G}^\acy} B \cup J \given C. \]
  Since $\vv{G}^\acy$ is acyclic, $d$-separation coincides with $\sigma$-separation \citep[Definition~5.9 and Remark~5.10]{Forre2021}.
  Reading the acyclification as the conditional DAG $\vv{G}^\acy(V \sm J \given \doit(J))$, whose $\sigma$-separation criterion implicitly includes the input nodes $J$ on the right, this is exactly
  \[ A \Perp^{\sigma}_{\vv{G}^\acy(V \sm J \given \doit(J))} B \given C. \]
  By the global Markov property of \citet[Theorem~6.3]{Forre2021} applied to $\mathcal{M}$, this implies the transitional conditional independence
  \[ X_A \Indep_{\Prb(X_V \mk X_J)} X_B \given X_C. \qedhere \]
\end{proof}
\begin{corollary}\label{thm:i-Markov_b_restricted}
  Suppose Assumption~\ref{ass:leu} holds.
  Treat exogenous variables $J \subseteq U$ as non-random, and assign independent distributions to exogenous variables in $U \sm J$, yielding Markov kernel $\Prb(X_V \mk X_J)$.
  Then for all $A, B, C \subseteq V$ such that $J \subseteq B \cup C$:
  \[ A \Perp^{b}_{\vv{G}} B \given C \implies X_A \Indep_{\Prb(X_V \mk X_J)} X_B \given X_C. \]
\end{corollary}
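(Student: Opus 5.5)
The plan is to reduce this corollary to Theorem~\ref{thm:i-Markov_b}. That theorem needs the stronger hypothesis $A \Perp^{b}_{\vv{G}} B \cup J \given C$. So the work splits into two parts: first show that under $J \subseteq B \cup C$ the weaker hypothesis $A \Perp^{b}_{\vv{G}} B \given C$ implies the stronger one; then apply Theorem~\ref{thm:i-Markov_b} directly.

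Write $J = J_B \cup J_C$ with $J_B := J \cap B$ and $J_C := J \cap C$ (these may overlap). For the graphical step, take any walk from a node of $A$ to a node $j \in J$.
\begin{itemize}
\item If $j \in B$, the walk is a walk from $A$ to $B$. It is $b$-blocked by $C$ by hypothesis.
\item If $j \in C$, note that the end node $j$ is always an exit of its segment. That segment is therefore a non-collider segment with an exit in $C$. By rule~2 of Definition~\ref{def:b-blocking} the walk is $b$-blocked (this is the note following that definition).
\end{itemize}
Hence every walk from $A$ to $B \cup J$ is $b$-blocked by $C$, i.e.\ $A \Perp^{b}_{\vv{G}} B \cup J \given C$.

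For the probabilistic step, Theorem~\ref{thm:i-Markov_b} now gives $X_A \Indep_{\Prb(X_V\mk X_J)} X_B \given X_C$ directly. No further manipulation of transitional conditional independence is needed, because the conclusion of Theorem~\ref{thm:i-Markov_b} already has exactly the form $X_A \Indep X_B \given X_C$ with $B$ and $C$ unchanged.

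The only point requiring care is the end-node argument for $j \in C$. The walk-ending segment always has $j$ as an exit, by the definition of exits (an end node of the walk counts as an exit). So the segment is non-collider and rule~2 applies, even in degenerate cases such as $j \in A \cap C$ with a trivial walk. I do not expect any genuine obstacle; the corollary is essentially this one-line graphical observation combined with Theorem~\ref{thm:i-Markov_b}.
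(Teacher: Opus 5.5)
Your proposal is correct and follows essentially the same route as the paper: since $J \sm B \subseteq C$, every walk from $A$ to such a node ends in $C$ and is $b$-blocked by rule~2, so $A \Perp^{b}_{\vv{G}} B \given C$ upgrades to $A \Perp^{b}_{\vv{G}} B \cup J \given C$, and Theorem~\ref{thm:i-Markov_b} then gives the conclusion directly.
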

\begin{proof}
When $J\subseteq B\cup C$ we have $J\sm B\subseteq C$, so every walk from $A$ to a node of $J\sm B$ ends in $C$ and is therefore $b$-blocked by $C$; hence
$A\Perp^{b}_{\vv{G}}B\given C$ implies $A\Perp^{b}_{\vv{G}}B\cup J\given C$, and Theorem~\ref{thm:i-Markov_b} yields $X_A\Indep_{\Prb(X_V\mk X_J)}X_B\given X_C$.
\end{proof}

\subsection{Accounting for Determinism}

Just as we strengthened the global Markov property for partially oriented bipartite graphs by accounting for determinism, we can do the same for the extended version.
Definition~\ref{def:functionally_determined} still applies, and does not need to distinguish exogenous input nodes $J$ from exogenous random nodes $U \sm J$: exogenous variable nodes are only functionally determined by $C$ if they are in $C$.

\begin{lemma}\label{lem:condition_on_functionally_determined_Markov_kernel}
  Let $A,B,S,T \subseteq V$ with $S \preceq T$. Then:
  \[ X_A \Indep_{\Prb(X_V \mk X_J)} X_B \given X_{S \cup T} \iff X_A \Indep_{\Prb(X_V \mk X_J)} X_B \given X_T. \]
\end{lemma}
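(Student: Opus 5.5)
The plan is to work directly with the definition of transitional conditional independence. $X_A \Indep_{\Prb(X_V \mk X_J)} X_B \given X_C$ holds iff there exists a Markov kernel $Q(X_A \mk X_C)$ with
\[ \Prb(X_A, X_B, X_C \mk X_J) = Q(X_A \mk X_C) \otimes \Prb(X_B, X_C \mk X_J). \]
Since $S \preceq T$, there is a measurable map $g$ with $X_S = g(X_T)$ for all solutions, hence $\Prb(\cdot \mk X_J)$-almost surely for every value of $X_J$. The key observation is that $X_{S\cup T} = (g(X_T), X_T)$ is a measurable bijective re-encoding of $X_T$ on the support. Its inverse is the projection $\mathrm{pr}_T$.

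For the direction ``$\Leftarrow$'', suppose a kernel $Q(X_A \mk X_T)$ exists. Define $Q'(X_A \mk X_{S \cup T}) := Q(X_A \mk \mathrm{pr}_T(X_{S\cup T}))$, which is measurable as a composition. Then $\Prb(X_A, X_B, X_{S\cup T} \mk X_J)$ is the pushforward of $\Prb(X_A, X_B, X_T \mk X_J)$ under $(a,b,t) \mapsto (a,b,g(t),t)$. This pushforward commutes with the product $Q \otimes \Prb(X_B, X_T \mk X_J)$, because $Q$ only acts on the $X_A$ coordinate and $g(t)$ is a function of the already-present $t$. Writing this out gives $Q' \otimes \Prb(X_B, X_{S\cup T} \mk X_J)$. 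In effect this is the right ``functional augmentation'' step, and it follows from the separoid rules of \citet{Forre2021}.

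For ``$\Rightarrow$'', suppose $Q'(X_A \mk X_{S\cup T})$ exists. Define $Q(X_A \mk X_T) := Q'(X_A \mk g(X_T), X_T)$, which is again measurable. Marginalizing the factorization along the map $(a,b,s,t) \mapsto (a,b,t)$ yields the claim. Here one uses that, under $\Prb(\cdot\mk X_J)$, $s = g(t)$ almost surely, so $Q'(\cdot \mk s,t) = Q(\cdot\mk t)$ on the support of $\Prb(X_B, X_{S \cup T}\mk X_J)$ for every $x_J$.

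The main obstacle is this almost-sure replacement, which must hold uniformly in the input $x_J$ without symmetry. I will handle it by noting that $X_S = g(X_T)$ holds identically on the image of the global solution function $\Phi$, not merely almost surely. The kernel $\Prb(X_V \mk X_J)$ is the pushforward of $\bigotimes_{u \in U\sm J} \Prb(X_u)$ under $\Phi(x_J,\cdot)$, so it is concentrated on that image for every $x_J$. Everything then reduces to checking the equality of measures on measurable rectangles. Alternatively, both directions follow from the left/right decomposition and weak-union axioms for transitional conditional independence \citep{Forre2021}, applied with the deterministic kernel $X_S = g(X_T)$; I would cite those if available.
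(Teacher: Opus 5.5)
Your argument is correct, but it takes a different route from the paper's. The paper never touches the factorization. It observes that $X_{S\cup T}$ and $X_T$ are deterministic transitional random variables that are measurable functions of each other, hence equivalent in the sense of \citet[Notation~2.19, Remark~2.20]{Forre2021}. It then invokes the Equivalent Exchange rule \citep[Corollary~3.14]{Forre2021}, which lets one replace the conditioning variable by an equivalent one.

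You instead verify the defining factorization directly. You take $Q' := Q\circ \mathrm{pr}_T$ for ``$\Leftarrow$'' and $Q := Q'(\cdot \mk g(\cdot),\cdot)$ for ``$\Rightarrow$''. You also correctly isolate the one delicate point: $X_S = g(X_T)$ must hold under $\Prb(\cdot\mk x_J)$ for \emph{every} $x_J$, which follows from $\Phi_S = g\circ\Phi_T$ holding pointwise on $\mathcal{X}_U$. Your version is self-contained and makes this uniformity in the non-random input explicit. The paper's version is a one-liner that delegates exactly this point to Forr\'e's machinery; your computation is essentially what Equivalent Exchange does internally.

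Two small precision points. First, the image of $\Phi(x_J,\cdot)$ need not be measurable. Say instead that the kernel is concentrated on the measurable graph $\{x_V : x_S = g(x_T)\}$, or just use the pointwise identity of the underlying maps on $\mathcal{X}_{U\sm J}$. Second, if $S\cap T\neq\emptyset$, write $X_{S\cup T}$ as $(g'(X_T),X_T)$ with $g'$ the $S\sm T$ component.

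Your closing fallback is too optimistic for ``$\Rightarrow$''. Dropping $X_S$ from the conditioning set does not follow from decomposition and weak union alone. The symmetric-style derivation uses contraction together with $X_A \Indep X_S \given X_T$. In the transitional setting that independence is not automatic, because the input $X_J$ sits implicitly on the right: it asserts in particular that the conditional law of $X_A$ given $X_T$ does not depend on $x_J$. Deriving it from the hypothesis amounts to your substitution $Q'(\cdot\mk g(t),t)$ anyway. This is why the paper appeals to Equivalent Exchange rather than the plain separoid rules, and why your direct argument, not the fallback, should carry the proof.
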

\begin{proof}
  This is the transitional-conditional-independence analog of Lemma~\ref{lem:condition_on_functionally_determined}; because transitional conditional independence is asymmetric, we cannot use the symmetric argument and instead invoke Forr\'e's Equivalent Exchange rule.
  By hypothesis $X_S$ is a measurable function of $X_T$.
  Then $X_{S \cup T}$ is a measurable function of $X_T$; conversely $X_T$ is a coordinate projection of $X_{S \cup T}$.
  Since all model variables are measurable functions of the exogenous variables $X_U$, both $X_{S \cup T}$ and $X_T$ are deterministic transitional random variables; as each is a measurable function of the other, they are equivalent \citep[Notation~2.19 and Remark~2.20]{Forre2021}.
  The Equivalent Exchange rule for transitional conditional independence \citep[Corollary~3.14]{Forre2021}, which allows the conditioning variable to be replaced by an equivalent one, then yields both implications:
  \[ X_A \Indep_{\Prb(X_V \mk X_J)} X_B \given X_{S \cup T} \iff X_A \Indep_{\Prb(X_V \mk X_J)} X_B \given X_T. \qedhere \]
\end{proof}

We obtain Theorem~\ref{thm:i-Markov_B} from Corollary~\ref{thm:i-Markov_b_restricted}.
\thmiMarkovB*
\begin{proof}
  By Lemma~\ref{lem:B-eq-b-fdet}, the hypothesis $A \Perp^{B}_{\vv{G}} B \given C$ is equivalent to
  \[ A \Perp^{b}_{\vv{G}} B \given \fdet_{\vv{G}}(C). \]
  By Corollary~\ref{thm:i-Markov_b_restricted} (whose hypothesis $J \subseteq B \cup \fdet_{\vv{G}}(C)$ holds because $J \subseteq B \cup C \subseteq B \cup \fdet_{\vv{G}}(C)$),
  \[ X_A \Indep_{\Prb(X_V \mk X_J)} X_B \given X_{\fdet_{\vv{G}}(C)}. \]
  The clusterwise solvability condition gives, for each endogenous cluster $[c]$:
  \[ [c] \cap V \preceq \pa_{\vv{G}}([c]). \]
  From the definitions, it follows that $\fdet_{\vv{G}}(C) \preceq C$.
  By Lemma~\ref{lem:condition_on_functionally_determined_Markov_kernel}, it then suffices to condition only on $C$:
  \[ X_A \Indep_{\Prb(X_V \mk X_J)} X_B \given X_C. \qedhere \]
\end{proof}

%
%
%
%
%
%
%
%

\section{Physical Implementations of Hard Interventions}\label{app:implementations}

The formal notation $\doit(f_j : X_v = \xi_v)$ has a natural physical interpretation: it specifies \emph{which mechanism} $f_j$ in the system is replaced in order to enforce $X_v = \xi_v$. Different choices of $f_j$ correspond to genuinely different physical procedures for achieving the same target value. We illustrate this for four of the hard interventions on the bathtub model. Combined with the ones in Example~\ref{ex:bathtub-ambiguous}, this gives a complete ``physical implementation'' of the causal semantics of the bathtub system under the hard interventions we consider elementary in our framework.

\paragraph{$\doit(f_1 : X_O = \xi_O)$.}
Equation $f_1$ (the equilibrium condition $X_I = X_O$) is replaced by $\tilde{f}_1 : X_O = \xi_O$. The causal ordering is preserved: $\tilde{f}_1$ determines $X_O$, $f_2$ determines $X_P$, and $f_3$ determines $X_D$. A physical implementation is to divert the original inflow away from the tub and install a new faucet with inflow rate $X_{I_2} = \xi_O$.

\paragraph{$\doit(f_1 : X_P = \xi_P)$.}
Equation $f_1$ is replaced by $\tilde{f}_1 : X_P = \xi_P$. The causal ordering changes: $\tilde{f}_1$ now determines $X_P$ (instead of $X_O$), and consequently $f_2$ must solve for $X_O$ given $X_P$. A physical implementation requires diverting the original inflow, installing a sufficiently large new inflow, and connecting a pressure relief valve to the bottom of the tub that activates when $X_P > \xi_P$.

\paragraph{$\doit(f_2 : X_P = \xi_P)$.}
Here, Torricelli's law $f_2$ is replaced by $\tilde{f}_2 : X_P = \xi_P$, while the equilibrium condition $f_1$ remains intact. This requires a more involved physical procedure: clog the drain, reroute the original inflow directly to the outflow through a pipe (bypassing the tub and drain), install an additional sufficiently large inflow, and connect a pressure relief valve that activates when $X_P > \xi_P$. Note that this intervention targets the same variable ($X_P = \xi_P$) as $\doit(f_1 : X_P = \xi_P)$, but through a different mechanism: the equilibrium condition $f_1$ is preserved, so $X_O = X_I$ still holds, whereas in $\doit(f_1 : X_P = \xi_P)$ we get $X_O = X_K \sqrt{\xi_P}$.

\paragraph{$\doit(f_2 : X_D = \xi_D)$.}
Torricelli's law $f_2$ is replaced by $\tilde{f}_2 : X_D = \xi_D$. This changes the causal ordering: $f_3$ now determines $X_P$ from $X_D$ (rather than $X_D$ from $X_P$). The physical implementation is similar to the previous case---clog the drain, reroute inflow to outflow, install an additional inflow---but instead of a pressure valve, the bathtub is cut at height $\xi_D$.

These examples illustrate a key advantage of the BGCM framework: the notion $\doit(f_j : X_v = \xi_v)$ makes the physical implementation explicit by specifying which mechanism is targeted, resolving the ambiguity inherent in the standard notion $\doit(X_v = \xi_v)$.
Furthermore, we have shown explicitly that each such hard intervention that leads to a solvable system can indeed be realized as a real-world intervention.

\section{Partially Oriented Bipartite Graphs under Interventions}\label{app:intervention-graphs}

Figure~\ref{fig:intervention-graphs} shows the partially oriented bipartite graphs for the observational bathtub model, all six well-defined hard interventions (cf.\ Table~\ref{tab:interventions}), and the three infeasible hard interventions. Edges that change relative to the observational case are drawn in red; intervened equation nodes are also shown in red. The systems that (generically) do not have solutions are drawn in gray.

\begin{figure*}[t]
\centering

\begin{minipage}[t]{0.23\textwidth}
\centering\textbf{$\doit(f_1 : X_O = \xi_O)$}\\[6pt]
\scalebox{0.7}{\begin{tikzpicture}
\node[var] (vO) at (2,0) {$X_O$};
\node[var] (vP) at (4,0) {$X_P$};
\node[var] (vD) at (6,0) {$X_D$};
\node[varc,red] (f1) at (2,1.5) {\color{red}$\tilde{f}_1$};
\node[varc] (f2) at (4,1.5) {$f_2$};
\node[varc] (f3) at (6,1.5) {$f_3$};
\node (XI) at (2,3) {$X_I$};
\node (XK) at (4,3) {$X_K$};
\node (Xg) at (6,3) {$X_g$};
\draw[otto] (f1) -- (vO);
\draw[arr] (vO) -- (f2); \draw[otto] (f2) -- (vP); \draw[arr] (XK) -- (f2);
\draw[arr] (vP) -- (f3); \draw[arr] (Xg) -- (f3); \draw[otto] (f3) -- (vD);
\end{tikzpicture}}
\end{minipage}
\begin{minipage}[t]{0.23\textwidth}
\centering\textbf{$\doit(f_2 : X_O = \xi_O)$}\\[6pt]
\scalebox{0.7}{\begin{tikzpicture}
\node[var,gray] (vO) at (2,0) {$X_O$};
\node[var,gray] (vP) at (4,0) {$X_P$};
\node[var,gray] (vD) at (6,0) {$X_D$};
\node[varc,gray] (f1) at (2,1.5) {$f_1$};
\node[varc,gray] (f2) at (4,1.5) {$\tilde{f}_2$};
\node[varc,gray] (f3) at (6,1.5) {$f_3$};
\node (XI) at (2,3) {$X_I$};
\node (XK) at (4,3) {$X_K$};
\node (Xg) at (6,3) {$X_g$};
\draw[noarr,gray] (XI) -- (f1); \draw[noarr,gray] (f1) -- (vO);
\draw[noarr,gray] (vO) -- (f2); 
\draw[noarr,gray] (vP) -- (f3); \draw[noarr,gray] (Xg) -- (f3); \draw[noarr,gray] (f3) -- (vD);
\end{tikzpicture}}
\end{minipage}
\begin{minipage}[t]{0.23\textwidth}
\centering\textbf{$\doit(f_3 : X_O = \xi_O)$}\\[6pt]
\scalebox{0.7}{\begin{tikzpicture}
\node[var,gray] (vO) at (2,0) {$X_O$};
\node[var,gray] (vP) at (4,0) {$X_P$};
\node[var,gray] (vD) at (6,0) {$X_D$};
\node[varc,gray] (f1) at (2,1.5) {$f_1$};
\node[varc,gray] (f2) at (4,1.5) {$f_2$};
\node[varc,gray] (f3) at (6,1.5) {$\tilde{f}_3$};
\node (XI) at (2,3) {$X_I$};
\node (XK) at (4,3) {$X_K$};
\node (Xg) at (6,3) {$X_g$};
\draw[noarr,gray] (XI) -- (f1); \draw[noarr,gray] (f1) -- (vO);
\draw[noarr,gray] (vO) -- (f2); \draw[noarr,gray] (f2) -- (vP); \draw[noarr,gray] (XK) -- (f2);
\draw[noarr,gray] (f3) -- (vO);
\end{tikzpicture}}
\end{minipage}\hfill
\begin{minipage}[t]{0.23\textwidth}
\centering\textbf{Observational}\\[6pt]
\scalebox{0.7}{\begin{tikzpicture}
\node[var] (vO) at (2,0) {$X_O$};
\node[var] (vP) at (4,0) {$X_P$};
\node[var] (vD) at (6,0) {$X_D$};
\node[varc] (f1) at (2,1.5) {$f_1$};
\node[varc] (f2) at (4,1.5) {$f_2$};
\node[varc] (f3) at (6,1.5) {$f_3$};
\node (XI) at (2,3) {$X_I$};
\node (XK) at (4,3) {$X_K$};
\node (Xg) at (6,3) {$X_g$};
\draw[arr] (XI) -- (f1); \draw[otto] (f1) -- (vO);
\draw[arr] (vO) -- (f2); \draw[otto] (f2) -- (vP); \draw[arr] (XK) -- (f2);
\draw[arr] (vP) -- (f3); \draw[arr] (Xg) -- (f3); \draw[otto] (f3) -- (vD);
\end{tikzpicture}}
\end{minipage}
  
\bigskip

\begin{minipage}[t]{0.23\textwidth}
\centering\textbf{$\doit(f_1 : X_P = \xi_P)$}\\[6pt]
\scalebox{0.7}{\begin{tikzpicture}
\node[var] (vO) at (2,0) {$X_O$};
\node[var] (vP) at (4,0) {$X_P$};
\node[var] (vD) at (6,0) {$X_D$};
\node[varc,red] (f1) at (2,1.5) {\color{red}$\tilde{f}_1$};
\node[varc] (f2) at (4,1.5) {$f_2$};
\node[varc] (f3) at (6,1.5) {$f_3$};
\node (XI) at (2,3) {$X_I$};
\node (XK) at (4,3) {$X_K$};
\node (Xg) at (6,3) {$X_g$};
\draw[otto,red] (f1) -- (vP);
\draw[otto,red] (f2) -- (vO); \draw[arr,red] (vP) -- (f2); \draw[arr] (XK) -- (f2);
\draw[arr] (vP) -- (f3); \draw[arr] (Xg) -- (f3); \draw[otto] (f3) -- (vD);
\end{tikzpicture}}
\end{minipage}
\begin{minipage}[t]{0.23\textwidth}
\centering\textbf{$\doit(f_2 : X_P = \xi_P)$}\\[6pt]
\scalebox{0.7}{\begin{tikzpicture}
\node[var] (vO) at (2,0) {$X_O$};
\node[var] (vP) at (4,0) {$X_P$};
\node[var] (vD) at (6,0) {$X_D$};
\node[varc] (f1) at (2,1.5) {$f_1$};
\node[varc,red] (f2) at (4,1.5) {\color{red}$\tilde{f}_2$};
\node[varc] (f3) at (6,1.5) {$f_3$};
\node (XI) at (2,3) {$X_I$};
\node (XK) at (4,3) {$X_K$};
\node (Xg) at (6,3) {$X_g$};
\draw[arr] (XI) -- (f1); \draw[otto] (f1) -- (vO);
\draw[otto] (f2) -- (vP);
\draw[arr] (vP) -- (f3); \draw[arr] (Xg) -- (f3); \draw[otto] (f3) -- (vD);
\end{tikzpicture}}
\end{minipage}
\begin{minipage}[t]{0.23\textwidth}
\centering\textbf{$\doit(f_3 : X_P = \xi_P)$}\\[6pt]
\scalebox{0.7}{\begin{tikzpicture}
\node[var,gray] (vO) at (2,0) {$X_O$};
\node[var,gray] (vP) at (4,0) {$X_P$};
\node[var,gray] (vD) at (6,0) {$X_D$};
\node[varc,gray] (f1) at (2,1.5) {$f_1$};
\node[varc,gray] (f2) at (4,1.5) {$f_2$};
\node[varc,gray] (f3) at (6,1.5) {$\tilde{f}_3$};
\node (XI) at (2,3) {$X_I$};
\node (XK) at (4,3) {$X_K$};
\node (Xg) at (6,3) {$X_g$};
\draw[noarr,gray] (XI) -- (f1); \draw[noarr,gray] (f1) -- (vO);
\draw[noarr,gray] (vO) -- (f2); \draw[noarr,gray] (f2) -- (vP); \draw[noarr,gray] (XK) -- (f2);
\draw[noarr,gray] (vP) -- (f3); 
\end{tikzpicture}}
\end{minipage}\hfill
\begin{minipage}[t]{0.23\textwidth}
\end{minipage}

\bigskip

\begin{minipage}[t]{0.23\textwidth}
\centering\textbf{$\doit(f_1 : X_D = \xi_D)$}\\[6pt]
\scalebox{0.7}{\begin{tikzpicture}
\node[var] (vO) at (2,0) {$X_O$};
\node[var] (vP) at (4,0) {$X_P$};
\node[var] (vD) at (6,0) {$X_D$};
\node[varc,red] (f1) at (2,1.5) {\color{red}$\tilde{f}_1$};
\node[varc] (f2) at (4,1.5) {$f_2$};
\node[varc] (f3) at (6,1.5) {$f_3$};
\node (XI) at (2,3) {$X_I$};
\node (XK) at (4,3) {$X_K$};
\node (Xg) at (6,3) {$X_g$};
\draw[otto,red] (f1) -- (vD);
\draw[otto,red] (f2) -- (vO); \draw[arr,red] (vP) -- (f2); \draw[arr] (XK) -- (f2);
\draw[otto,red] (f3) -- (vP); \draw[arr] (Xg) -- (f3); \draw[arr,red] (vD) -- (f3);
\end{tikzpicture}}
\end{minipage}
\begin{minipage}[t]{0.23\textwidth}
\centering\textbf{$\doit(f_2 : X_D = \xi_D)$}\\[6pt]
\scalebox{0.7}{\begin{tikzpicture}
\node[var] (vO) at (2,0) {$X_O$};
\node[var] (vP) at (4,0) {$X_P$};
\node[var] (vD) at (6,0) {$X_D$};
\node[varc] (f1) at (2,1.5) {$f_1$};
\node[varc,red] (f2) at (4,1.5) {\color{red}$\tilde{f}_2$};
\node[varc] (f3) at (6,1.5) {$f_3$};
\node (XI) at (2,3) {$X_I$};
\node (XK) at (4,3) {$X_K$};
\node (Xg) at (6,3) {$X_g$};
\draw[arr] (XI) -- (f1); \draw[otto] (f1) -- (vO);
                         \draw[otto,red] (f2) -- (vD);
\draw[otto,red] (f3) -- (vP); \draw[arr] (Xg) -- (f3); \draw[arr,red] (vD) -- (f3);
\end{tikzpicture}}
\end{minipage}
\begin{minipage}[t]{0.23\textwidth}
\centering\textbf{$\doit(f_3 : X_D = \xi_D)$}\\[6pt]
\scalebox{0.7}{\begin{tikzpicture}
\node[var] (vO) at (2,0) {$X_O$};
\node[var] (vP) at (4,0) {$X_P$};
\node[var] (vD) at (6,0) {$X_D$};
\node[varc] (f1) at (2,1.5) {$f_1$};
\node[varc] (f2) at (4,1.5) {$f_2$};
\node[varc,red] (f3) at (6,1.5) {\color{red}$\tilde{f}_3$};
\node (XI) at (2,3) {$X_I$};
\node (XK) at (4,3) {$X_K$};
\node (Xg) at (6,3) {$X_g$};
\draw[arr] (XI) -- (f1); \draw[otto] (f1) -- (vO);
\draw[arr] (vO) -- (f2); \draw[otto] (f2) -- (vP); \draw[arr] (XK) -- (f2);
\draw[otto] (f3) -- (vD);
\end{tikzpicture}}
\end{minipage}\hfill
\begin{minipage}[t]{0.1\textwidth}
\end{minipage}

\caption{Partially oriented bipartite graphs for the bathtub model under all hard interventions (cf.\ Table~\ref{tab:interventions}). The observational graph is shown top-right for reference. Intervened equation nodes are shown in red; edges whose orientation changes relative to the observational case are drawn in red. Note how replacing different equations can lead to fundamentally different causal orderings: e.g., $\doit(f_1 : X_D = \xi_D)$ reverses the causal flow entirely. For three hard interventions, the intervened bipartite graph cannot be oriented (no perfect matching exists), so the undirected intervened bipartite graph is displayed instead.}
\label{fig:intervention-graphs}
\end{figure*}

\section{Partially Oriented Bipartite Graphs for the Domain Invariance Examples}\label{app:domain-graphs}

Figure~\ref{fig:domain-graphs} shows the partially oriented bipartite graphs $\vv{G^R}$ of the joint models constructed for each of the four domain invariance examples in Section~\ref{sec:domain-invariances}. In each case, the exogenous domain indicator variable $R$ is connected to the equation(s) that differ between domains. The graphical structure of $\vv{G^R}$ determines which $B$-separation statements hold, and hence which domain invariances can be derived from the Markov property.

\begin{figure*}[t]
\centering

\begin{minipage}[t]{0.48\textwidth}
\centering\textbf{Example~\ref{eg:example-I}:} obs.\ vs.\ $\doit(X_g = \xi_g)$\\[6pt]
\scalebox{0.7}{\begin{tikzpicture}
\node[var] (vO) at (2,0) {$X_O$};
\node[var] (vP) at (4,0) {$X_P$};
\node[var] (vD) at (6,0) {$X_D$};
\node[var] (vg) at (8,0) {$X_g$};
\node[varc] (f1) at (2,1.5) {$f_1$};
\node[varc] (f2) at (4,1.5) {$f_2$};
\node[varc] (f3) at (6,1.5) {$f_3$};
\node[varc] (f4) at (8,1.5) {$f_4$};
\node (XI) at (2,2.8) {$X_I$};
\node (XK) at (4,2.8) {$X_K$};
\node (Ug) at (6,2.8) {$U_g$};
\node (R) at (8,2.8) {$R$};
\draw[arr] (XI) -- (f1); \draw[otto] (f1) -- (vO);
\draw[arr] (vO) -- (f2); \draw[otto] (f2) -- (vP); \draw[arr] (XK) -- (f2);
\draw[arr] (vP) -- (f3); \draw[arr] (vg) -- (f3); \draw[otto] (f3) -- (vD);
\draw[otto] (f4) -- (vg); \draw[arr] (Ug) -- (f4); \draw[arr] (R) -- (f4);
\end{tikzpicture}}
\\[4pt]
{\small $X_P, X_O \Perp^{B}_{\vv{G^R}} R$}
\end{minipage}\hfill
\begin{minipage}[t]{0.48\textwidth}
\centering\textbf{Example~\ref{eg:example-II}:} obs.\ vs.\ $\doit(f_3 : X_D = \xi_D)$\\[6pt]
\scalebox{0.7}{\begin{tikzpicture}
\node[var] (vO) at (2,0) {$X_O$};
\node[var] (vP) at (4,0) {$X_P$};
\node[var] (vD) at (6,0) {$X_D$};
\node[varc] (f1) at (2,1.5) {$f_1$};
\node[varc] (f2) at (4,1.5) {$f_2$};
\node[varc] (f3) at (6,1.5) {$f_3$};
\node (XI) at (2,2.8) {$X_I$};
\node (XK) at (4,2.8) {$X_K$};
\node (Xg) at (6,2.8) {$X_g$};
\node (R) at (7,2.8) {$R$};
\draw[arr] (XI) -- (f1); \draw[otto] (f1) -- (vO);
\draw[arr] (vO) -- (f2); \draw[otto] (f2) -- (vP); \draw[arr] (XK) -- (f2);
\draw[arr] (vP) -- (f3); \draw[arr] (Xg) -- (f3); \draw[otto] (f3) -- (vD);
\draw[arr] (R) -- (f3);
\end{tikzpicture}}
\\[4pt]
{\small $X_O \Perp^{B}_{\vv{G^R}} R \given X_D, X_P$}
\end{minipage}

\bigskip

\begin{minipage}[t]{0.48\textwidth}
\centering\textbf{Example~\ref{eg:example-III}:} obs.\ vs.\ $\doit(f_1 : X_D = \xi_D)$\\[6pt]
\scalebox{0.7}{\begin{tikzpicture}
\node[var] (vO) at (2,0) {$X_O$};
\node[var] (vP) at (4,0) {$X_P$};
\node[var] (vD) at (6,0) {$X_D$};
\node[varc] (f1) at (2,1.5) {$f_1$};
\node[varc] (f2) at (4,1.5) {$f_2$};
\node[varc] (f3) at (6,1.5) {$f_3$};
\node (XI) at (2,2.8) {$X_I$};
\node (XK) at (4,2.8) {$X_K$};
\node (Xg) at (6,2.8) {$X_g$};
\node (R) at (1,2.8) {$R$};
\draw[otto] (f1) -- (vO);
\draw[otto] (vO) -- (f2); \draw[otto] (f2) -- (vP); \draw[arr] (XK) -- (f2);
\draw[otto] (vP) -- (f3); \draw[arr] (Xg) -- (f3); \draw[otto] (f3) -- (vD);
\draw[otto] (f1) -- (vD);
\draw[arr] (XI) -- (f1); \draw[arr] (R) -- (f1);
\end{tikzpicture}}
\\[4pt]
{\small No non-trivial $B$-separation involving $R$}
\end{minipage}\hfill
\begin{minipage}[t]{0.48\textwidth}
\centering\textbf{Example~\ref{eg:example-IV}:} $\doit(f_1 : X_D\!=\!\xi_D)$ vs.\ $\doit(f_1 : X_D\!=\!\xi_D')$\\[6pt]
\scalebox{0.7}{\begin{tikzpicture}
\node[var] (vO) at (2,0) {$X_O$};
\node[var] (vP) at (4,0) {$X_P$};
\node[var] (vD) at (6,0) {$X_D$};
\node[varc] (f1) at (2,1.5) {$f_1$};
\node[varc] (f2) at (4,1.5) {$f_2$};
\node[varc] (f3) at (6,1.5) {$f_3$};
\node (XI) at (2,2.8) {$X_I$};
\node (XK) at (4,2.8) {$X_K$};
\node (Xg) at (6,2.8) {$X_g$};
\node (R) at (1,2.8) {$R$};
\draw[arr] (vP) -- (f2); \draw[otto] (vO) -- (f2); \draw[arr] (XK) -- (f2);
\draw[arr] (vD) -- (f3); \draw[otto] (vP) -- (f3); \draw[arr] (Xg) -- (f3);
\draw[otto] (f1) -- (vD);
\draw[arr] (R) -- (f1);
\end{tikzpicture}}
\\[4pt]
{\small $X_O \Perp^{B}_{\vv{G^R}} R \given X_P$}
\end{minipage}

\caption{Partially oriented bipartite graphs of the joint models for the four domain invariance examples (Section~\ref{sec:domain-invariances}). In each graph, the exogenous domain indicator $R$ is connected to the equation(s) that differ between domains. The $B$-separation statement below each graph summarizes the graphical criterion from which the corresponding domain invariance is derived via the Markov property. In Example~\ref{eg:example-III}, all endogenous nodes and equation nodes form a single cluster, so no non-trivial $B$-separation involving $R$ exists.}
\label{fig:domain-graphs}
\end{figure*}

\section{A Worked Example with a Genuine Cycle: Supply and Demand}\label{app:supply-demand}

The bathtub of Example~\ref{ex:bathtub-ordering}, although a feedback system at
equilibrium, has a partial orientation whose endogenous clusters each consist of
one equation and one variable ($\{f_1,X_O\}$, $\{f_2,X_P\}$, $\{f_3,X_D\}$); its
causal ordering is therefore acyclic. To illustrate the part of the framework that deals with
genuine cycles---multi-node clusters, whose segments are treated in $B$-separation as
single indivisible units (Definition~\ref{def:B-blocking})---we work out a classic
simultaneous supply--demand system, in which all three mechanisms must be solved
jointly.

Consider a competitive market at equilibrium with endogenous variables
$X_S$ (quantity supplied), $X_D$ (quantity demanded), and $X_P$ (price), and
exogenous supply/demand shifts $X_{U_S}, X_{U_D}$. With supply slope $\beta$ and
demand slope $\alpha$ satisfying $\beta > 0 > \alpha$, the equilibrium is
described by three mechanisms:
\begin{align*}
  f_1: &\quad 0 = X_S - X_D                  &&\text{(market clears)}\\
  f_2: &\quad 0 = \beta X_P + X_{U_S} - X_S   &&\text{(supply)}\\
  f_3: &\quad 0 = \alpha X_P + X_{U_D} - X_D  &&\text{(demand)}
\end{align*}

\paragraph{A single endogenous cluster.}
The endogenous subgraph (variables $\{X_S,X_P,X_D\}$, equations
$\{f_1,f_2,f_3\}$) admits the perfect matching
$M = \{f_1\ttt X_S,\ f_2\ttt X_P,\ f_3\ttt X_D\}$. The closed $M$-alternating
walk
\[
  X_S \ttt f_1 \ttt X_D \ttt f_3 \ttt X_P \ttt f_2 \ttt X_S
\]
(alternating the matched edges $f_1\ttt X_S$, $f_3\ttt X_D$, $f_2\ttt X_P$ with
the unmatched edges $f_1\ttt X_D$, $f_3\ttt X_P$, $f_2\ttt X_S$) visits all six
nodes of the endogenous subgraph. By Definition~\ref{def:equivalence} they therefore form a
single cluster
$\{f_1,f_2,f_3,X_S,X_P,X_D\}$, and by Lemma~\ref{lem:DM} this is
independent of the chosen matching. The cluster has parents
$\pa_{\vv{G}}([f_1]) = \{X_{U_S}, X_{U_D}\}$. In the partial orientation
$\vv{G}$, all six intra-cluster edges are double-undirected, while the two
exogenous shifts point in (Figure~\ref{fig:supply-demand}a). Unlike the bathtub,
the causal ordering here is not acyclic: the entire endogenous system is a single
feedback cluster, solved simultaneously.

\begin{figure}[t]
  \centering
  \begin{minipage}[b]{0.48\linewidth}
  \centering
  \scalebox{0.7}{\begin{tikzpicture}[x=1cm,y=1cm]
  \draw[dashed, rounded corners=7pt, black!55] (-2.5,-2.75) rectangle (2.5,2.75);
  \node[varc] (f1) at (0,2)     {$f_1$};
  \node[var]  (vS) at (-1.7,1)  {$X_S$};
  \node[varc] (f2) at (-1.7,-1) {$f_2$};
  \node[var]  (vP) at (0,-2)    {$X_P$};
  \node[varc] (f3) at (1.7,-1)  {$f_3$};
  \node[var]  (vD) at (1.7,1)   {$X_D$};
  \node (uS) at (-3.8,-1) {$X_{U_S}$};
  \node (uD) at (3.8,-1)  {$X_{U_D}$};
  \draw[otto] (f1) -- (vS);  \draw[otto] (vS) -- (f2);
  \draw[otto] (f2) -- (vP);  \draw[otto] (vP) -- (f3);
  \draw[otto] (f3) -- (vD);  \draw[otto] (vD) -- (f1);
  \draw[arr] (uS) -- (f2);   \draw[arr] (uD) -- (f3);
  \end{tikzpicture}}\\[4pt]
  {\small (a) partial orientation $\vv{G}$}
  \end{minipage}\hfill
  \begin{minipage}[b]{0.48\linewidth}
  \centering
  \scalebox{0.7}{\begin{tikzpicture}[x=1cm,y=1cm]
  \node (uS) at (-1.7,2.2) {$X_{U_S}$};
  \node (uD) at (1.7,2.2)  {$X_{U_D}$};
  \node[var] (vS) at (-2,-1.5) {$X_S$};
  \node[var] (vP) at (0,-1.5)  {$X_P$};
  \node[var] (vD) at (2,-1.5)  {$X_D$};
  \draw[arr] (uS) -- (vS); \draw[arr] (uS) -- (vP); \draw[arr] (uS) -- (vD);
  \draw[arr] (uD) -- (vS); \draw[arr] (uD) -- (vP); \draw[arr] (uD) -- (vD);
  \end{tikzpicture}}\\[4pt]
  {\small (b) acyclification $\vv{G}^\acy$}
  \end{minipage}
  \caption{The supply--demand system and its acyclification.
  \textbf{(a)}~The partial orientation $\protect\vv{G}$: the three endogenous
  variables and three equations form a single feedback cluster (dashed box) with
  the exogenous shifts $X_{U_S}, X_{U_D}$ as its parents; all intra-cluster edges
  are double-undirected. \textbf{(b)}~The acyclification $\protect\vv{G}^\acy$, the
  DAG on the variable nodes with an edge $v \to v'$ whenever
  $v \in \pa_{\protect\vv{G}}([v'])$; the two shifts become common parents of all
  three endogenous variables, which are mutually nonadjacent. $B$-separation in
  $\protect\vv{G}$ coincides with $D$-separation in $\protect\vv{G}^\acy$
  (Lemma~\ref{lem:B-sep_equiv_D-sep}), so the two encode the same conditional
  independences; but only $\protect\vv{G}$, which retains the equation nodes, can
  model interventions.\label{fig:supply-demand}}
\end{figure}

\paragraph{Unique solvability and solution.}
Because $\beta > 0 > \alpha$, the cluster is
uniquely solvable (Assumption~\ref{ass:leu}). Substituting the supply and demand
relations into the market-clearing condition $X_S = X_D$ gives
$\beta X_P + X_{U_S} = \alpha X_P + X_{U_D}$, hence
\[
  X_P = \frac{X_{U_D} - X_{U_S}}{\beta - \alpha}, \qquad
  X_S = X_D = \frac{\beta X_{U_D} - \alpha X_{U_S}}{\beta - \alpha}.
\]

\paragraph{Reading off (in)dependences.}
Assign independent distributions to the shifts $X_{U_S}, X_{U_D}$, so that
$\Prb(X_V)$ is well defined and the Markov property (Theorem~\ref{thm:Markov_B})
applies. Consider the walk
\[
  X_{U_S} \to f_2 \otto X_P \otto f_3 \ot X_{U_D}.
\]
Its middle segment $f_2 \otto X_P \otto f_3$ lies entirely inside the cluster and
is a \emph{collider segment}: both bounding edges point inward. For
$C = \emptyset$ we have $\ant_{\vv{G}}(\emptyset) = \emptyset$, so this collider
segment is disjoint from $\ant_{\vv{G}}(\emptyset)$ and blocks (rule~1 of
Definition~\ref{def:B-blocking}). Since $X_{U_S}$ and $X_{U_D}$ attach to the
cluster only through the inward edges $X_{U_S}\to f_2$ and $X_{U_D}\to f_3$,
\emph{every} path between them crosses such a within-cluster collider segment,
so
\[
  X_{U_S} \Perp^{B}_{\vv{G}} X_{U_D} \given \emptyset
  \quad\Longrightarrow\quad
  X_{U_S} \Indep_{\Prb(X_V)} X_{U_D},
\]
recovering the assumed independence of the two shifts.

Conditioning on the equilibrium price reverses the verdict. As $X_P$ lies in the
cluster, the whole cluster is anterior to $X_P$, so the collider segment now
meets $\ant_{\vv{G}}(\{X_P\})$ and no longer blocks. Moreover
$\fdet_{\vv{G}}(\{X_P\}) = \{X_P\}$: determining the cluster would require
\emph{both} exogenous parents, and neither is in $\{X_P\}$, so no further node is
functionally determined. Hence the endpoints $X_{U_S}, X_{U_D}$ are not in
$\fdet_{\vv{G}}(\{X_P\})$, rule~2 does not block either, and the walk is
$B$-open:
\[
  X_{U_S} \nPerp^{B}_{\vv{G}} X_{U_D} \given X_P.
\]
This matches the algebra: $X_P = (X_{U_D} - X_{U_S})/(\beta - \alpha)$, so
conditioning on the price imposes the constraint
$X_{U_D} - X_{U_S} = (\beta - \alpha) X_P$ and generically renders the two shifts
dependent.
It is the familiar phenomenon of conditioning on a common effect---except that
here the common effect is an entire feedback cluster rather than a single
variable, which is exactly what rule~1 of $B$-separation is designed to capture.

\paragraph{Determinism (rule~2) in a cycle.}
Rule~2 blocks a non-collider segment as soon as one of its exits is
\emph{functionally determined} by the conditioning set. In a single
multi-variable cluster this is all-or-nothing: every endogenous variable has the
same parents $\{X_{U_S}, X_{U_D}\}$, so none is determined until \emph{both}
shifts are conditioned on. Thus $\fdet_{\vv{G}}(\{X_{U_S}\}) = \{X_{U_S}\}$,
whereas
\[
  \fdet_{\vv{G}}(\{X_{U_S}, X_{U_D}\}) = \{X_{U_S}, X_{U_D}, X_S, X_P, X_D\}
\]
is the entire set of variable nodes. In the latter case the endpoint segment of every path
already has an exit in $\fdet_{\vv{G}}(C)$, so rule~2 blocks it; hence, for
example, $X_S \Perp^{B}_{\vv{G}} X_P \given X_{U_S}, X_{U_D}$. Whereas
conditioning on the endogenous price \emph{created} dependence between the shifts
(rule~1), conditioning on both exogenous shifts \emph{removes} all dependence
among the endogenous quantities: fixing both shifts pins down the equilibrium, so
every endogenous quantity is constant and hence conditionally independent of the
rest. Needing \emph{all} of a cluster's parents before any of its variables is
determined is not special to cycles; it is a general property of $\fdet_{\vv{G}}$
that already appears at single-variable clusters whose equation has several
parents. In the bathtub, for instance, the cluster $\{f_2, X_P\}$ has parents
$\{X_O, X_K\}$, so pressure is determined only once both outflow and drain area
are fixed---conditioning on $X_O$ alone leaves $X_P$ undetermined.

\paragraph{The same facts by $D$-separation.}
By Lemma~\ref{lem:B-sep_equiv_D-sep}, $B$-separation in $\vv{G}$ coincides with
$D$-separation in the acyclification $\vv{G}^\acy$
(Figure~\ref{fig:supply-demand}b), so the three verdicts above can be read off
there as well. In $\vv{G}^\acy$ the shifts $X_{U_S}, X_{U_D}$ are common parents
of $X_S, X_P, X_D$, which are otherwise nonadjacent:
(i)~unconditionally the shifts are $d$-separated, their only connections passing
through the unconditioned colliders $X_S, X_P, X_D$;
(ii)~conditioning on $X_P$ opens that collider, making the shifts $d$-connected;
and (iii)~conditioning on both shifts blocks each path between $X_S$ and $X_P$ path at its fork
($X_{U_S}$ or $X_{U_D}$), giving $d$-separation. The within-cluster collider
segment of $\vv{G}$ thus becomes an ordinary collider at a common child in
$\vv{G}^\acy$, and conditioning on a cluster variable becomes conditioning on that
child. In this example $D$-separation reduces to ordinary $d$-separation, since
conditioning on the parents that determine a variable already blocks the forks
through them.

\end{document}